\documentclass[10pt,journal,compsoc]{IEEEtran}
\ifCLASSOPTIONcompsoc

\usepackage{array}
\usepackage[caption=false]{subfig}

\usepackage{textcomp}
\usepackage{stfloats}
\usepackage[table]{xcolor}
\usepackage{url}
\usepackage{ragged2e}
\usepackage{verbatim}
\usepackage{graphicx}
%
\ifCLASSOPTIONcompsoc
  \usepackage[nocompress]{cite}
\else
  \usepackage{cite}
\fi
%
\usepackage{amsmath}
\usepackage{amssymb}
\usepackage{bm}
\usepackage{amsfonts}
\usepackage{amssymb}
\usepackage{amsthm}
\usepackage{graphicx}
\usepackage{booktabs}
\usepackage{algorithm}
\usepackage{algorithmic}
\usepackage{hyperref}
\usepackage{cancel}

\usepackage{pifont}
\usepackage{bbding}
\usepackage{xspace}
\newcommand{\cmark}{\textcolor{green!60!black}{\ding{51}}\xspace}
\newcommand{\xmark}{\textcolor{red!60!black}{\ding{55}}\xspace}

\usepackage{scrextend}
\deffootnote[0.1em]{0.1em}{1em}{\textsuperscript{\thefootnotemark}\,}

\usepackage{amssymb}
\usepackage{mathtools}
\usepackage{amsfonts}
\usepackage{amsmath}
\usepackage{amsthm}
\usepackage{booktabs}
\usepackage[]{microtype}
\usepackage{tikz}
\usepackage{adjustbox}
\usepackage{multirow}
\usepackage{mathtools}
\usepackage{soul}
\usepackage{xcolor}
\usepackage{orcidlink}

\newcommand{\revise}[1]{{\color{black}{#1}}}
\newtheorem{theorem}{Theorem}

\newtheorem{remark}{Remark}
\newtheorem{example}{Example}

\newtheorem{corollary}{Corollary}[theorem]

\newcommand{\Y}{\mathbf{Y}}
\newcommand{\X}{\mathbf{X}}
\newcommand{\x}{\mathbf{x}}
\newcommand{\y}{\mathbf{y}}
\newcommand{\ba}{\mathbf{h}}
\newcommand{\bA}{\mathbf{H}}
\newcommand{\bPhi}{\boldsymbol{\Phi}}
\newcommand{\I}{\mathbf{I}}
\newcommand{\w}{\mathbf{w}}
\newcommand{\W}{\mathbf{W}}
\newcommand{\bmu}{\boldsymbol{\mu}}
\newcommand{\bSigma}{\boldsymbol{\Sigma}}

\newcommand{\z}{\mathbf{z}}
\newcommand{\bv}{\mathbf{v}}
\newcommand{\btheta}{\boldsymbol{\theta}}
\newcommand{\K}{\mathbf{K}}

\newcommand{\bTheta}{\boldsymbol{\Theta}}
\newcommand{\srflvm}{SRFLVMs}

\begin{document}

\title{Scalable Random Feature Latent Variable Models}


\author{
Ying Li, 
Zhidi Lin,
Yuhao Liu, 
Michael Minyi Zhang,
Pablo M. Olmos,
and Petar M. Djuri{\'c} 

\IEEEcompsocitemizethanks{

\IEEEcompsocthanksitem
Y. Li and M. M. Zhang are with the Department of Statistics \& Actuarial Science, University of Hong Kong, Hong Kong, SAR, China. 
E-mail: \href{mailto:lynnli98@connect.hku.hk,u3010699@connect.hku.hk}{lynnli98@connect.hku.hk},  \href{mailto:mzhang18@hku.hk}{mzhang18@hku.hk}. 

\IEEEcompsocthanksitem Z. Lin is with the Department of Statistics \& Data Science, National University of Singapore, Singapore. 
E-mail: \href{mailto:zhidilin@nus.edu.sg}{zhidilin@nus.edu.sg}.

\IEEEcompsocthanksitem Y. Liu is with Capital One, McLean, Virginia, United States. Email: \href{mailto:yuhliu1994@gmail.com}{yuhliu1994@gmail.com}

\IEEEcompsocthanksitem P. M. Olmos is with the Department of Signal Theory \& Communications, University Carlos III de Madrid, Madrid, Spain. 
E-mail: \href{mailto:pamartin@ing.uc3m.es}{pamartin@ing.uc3m.es}.

\IEEEcompsocthanksitem P. M. Djuri{\'c} is with the Department of Electrical \& Computer Engineering, Stony Brook University, Stony Brook, NY 11794 USA. 
E-mail: \href{mailto:petar.djuric@stonybrook.edu}{petar.djuric@stonybrook.edu}.

}
}

\markboth{Journal of \LaTeX\ Class Files,~Vol.~14, No.~8, August~2015}%
{Shell \MakeLowercase{\textit{et al.}}: Bare Demo of IEEEtran.cls for Computer Society Journals}

\IEEEtitleabstractindextext{%
\begin{abstract}
\justifying
Random feature latent variable models (RFLVMs) represent the state-of-the-art in latent variable models, capable of handling non-Gaussian likelihoods and effectively uncovering patterns in high-dimensional data. However, their heavy reliance on Monte Carlo sampling results in scalability issues which makes it difficult to use these models for datasets with a massive number of observations. To scale up RFLVMs, we turn to the optimization-based variational Bayesian inference (VBI) algorithm which is known for its scalability compared to sampling-based methods. However, implementing VBI for RFLVMs poses challenges, such as the lack of explicit probability distribution functions (PDFs) for the Dirichlet process (DP) in the kernel learning component, and the incompatibility of existing VBI algorithms with RFLVMs. To address these issues, we introduce a stick-breaking construction for DP to obtain an explicit PDF and a novel VBI algorithm called ``block coordinate descent variational inference" (BCD-VI). This enables the development of a scalable version of RFLVMs, or in short, \srflvm. Our proposed method shows scalability, computational efficiency, superior performance in generating informative latent representations and the ability of imputing missing data across various real-world datasets, outperforming state-of-the-art competitors.
\end{abstract}

\begin{IEEEkeywords}
Latent variable models, variational inference, random Fourier feature, Gaussian process, Dirichlet process.
\end{IEEEkeywords}}

\maketitle

\IEEEdisplaynontitleabstractindextext

%
\IEEEpeerreviewmaketitle

\IEEEraisesectionheading{
\section{Introduction}\label{sec:intro}}

\IEEEPARstart{O}{ne} classic problem in the field of machine learning is to learn useful latent representations of high-dimensional data in order to facilitate critical downstream tasks, e.g., classification 
\cite{tran2018representation, aguerri2019distributed}.  Latent variable models (LVMs) \cite{bishop1998hierarchical, song2019harmonized} are particularly useful due to their superior performance in unsupervised learning, especially in the task of dimensional reduction \cite{bishop1998hierarchical}. 

LVMs posit a relationship between the low-dimensional latent variables and the high-dimensional observations via a mapping function \cite{park2015bayesian, chalupka2013framework, gneiting2002compactly}. The latent variable, in essence, distills the information of the high dimensional data into a low dimensional representation \cite{quinonero2005unifying,  titsias2009variational, wilson2015kernel, gramacy2008bayesian}. The choice of the mapping is a key factor in LVMs. When a linear mapping is employed, linear LVMs such as principal component analysis (PCA) \cite{roweis1997algorithms}, canonical correlation analysis (CCA) \cite{8928538}, factor analysis (FA) \cite{lawley1962factor} among others, are incapable of capturing the intricate and complex patterns inherent in the observed data. To address this issue, neural networks and Gaussian processes (GPs) \cite{williams1995gaussian,marinai2005artificial}, which possess a universal function approximation capability, are a typical choice of non-linear mapping functions, leading to well-known LVMs such as variational autoencoders (VAEs) \cite{kingma2013auto} and Gaussian process latent variable models (GPLVMs) \cite{lawrence2005probabilistic}.

VAEs are a type of LVM where the mapping of the observed data to the latent space (and vice versa) are parameterized by a neural network \cite{kingma2013auto}. 
Despite the fact that VAEs exhibit extraordinary representational capabilities, they are still plagued by an issue known as ``posterior collapse." This issue amounts to the failure to capture the essential information contained within the observational data in certain scenarios \cite{qian2022learning, shao2021controlvae}. This problem is, in part, attributed to overfitting \cite{bowman2015generating, sonderby2016train}.

The overfitting issue could be naturally addressed by GPLVMs \cite{lawrence2005probabilistic}, which are Bayesian models and where the mapping function is assumed to be sampled from a GP prior \cite{williams1995gaussian}. Unfortunately, GPLVMs still exhibit inferior performance in learning an informative latent space, primarily due to limited approximation capacity of the typically used kernel function 
\cite{lawrence2005probabilistic, li2024preventing}. To address this issue, random Fourier feature latent variable models (RFLVMs) \cite{gundersen2021latent} employ a random Fourier feature approximation of the kernel function in order to facilitate a Markov chain Monte Carlo (MCMC) sampling algorithm for posterior inference under various likelihoods, in conjunction with a Dirichlet process (DP) kernel mixture for modeling flexibility \cite{rahimi2008random,oliva2016bayesian}.

Nevertheless, the application of the MCMC inference to RFLVMs 
presents several significant challenges. First, MCMC inference--\revise{characterized by its high computational costs}--limits the applicability of RFLVMs to large data sets \cite{cheng2022rethinking}, which renders RFLVMs prohibitive to use in the big data era. Second, the necessity for ongoing monitoring of Markov chain convergence introduces hyperparameters, e.g., burn-in steps, to the inference process. Optimizing and tuning these hyperparameters complicates model selection, consequently increasing resource and time consumption in computational efforts \cite{cai2015challenges}.

Due to the drawbacks of employing an MCMC sampling algorithm, we explore a variational Bayesian inference (VBI) approach \cite{blei2017variational}. The fundamental idea behind VBI is first to reformulate posterior inference as a deterministic optimization problem. This approach aims to approximate the true posterior with a surrogate distribution (or optimal variational distribution), which is identified from a predefined family (or variational distributions) indexed by variational parameters. VBI then proceeds by finding a set of variational parameters through maximizing the evidence lower bound (ELBO), which involves taking the expectation of {the log model joint distribution subtracted by the log variational distributions}, with respect to the variational distributions.

As an optimization-based approach, VBI has demonstrated scalable performance in contrast to sampling-based inference methods. Additionally, due to its optimization nature, the convergence of VBI is reached when the parameters stabilize, thereby circumventing the model selection complexity associated with ensuring convergence. Motivated by these properties of variational inference, we aim to devise a VBI approach tailored for RFLVMs.  However, several intricate issues arise when implementing a variational algorithm. 
First, the kernel learning portion of the model depends on a DP, a stochastic sampling procedure that \textbf{\textit{lacks explicit PDFs}}, thus rendering the ELBO calculation intractable \cite{blei2006variational}.  Second, \revise{\textbf{\textit{the numerous variational variables}}} within RFLVMs pose challenges to the optimization problem in VBI.

The most widely adopted approach for dealing with the problem of numerous variational variables is mean-field variational inference (MFVI) \cite{bishop2006pattern}, renowned for its ease of implementation. By leveraging the assumption of a \textbf{fully factorized} $q(\bTheta)$, MFVI illustrates the potential for obtaining optimal variational distributions through simplified integration. Nevertheless, the tractability of integration is built upon certain conditional conjugacy structures, thereby limiting the applicability of MFVI to non-conjugate models, e.g., RFLVMs.  

For handling non-conjugate models, reparameterization gradient variational inference (RGVI) uses gradient-based methods for variational parameter updates, where the low-variance gradient is estimated by Monte Carlo (MC) samples generated via the reparameterization trick \cite{zhang2018advances, kingma2013auto, kingma2014adam, figurnov2018implicit}.  Despite its notable success across a wide variety of models, RGVI remains restrictive by excluding several standard distributions, as in the case of non-parametric distributions like DPs, from utilizing reparameterization gradients. This limitation reduces its applicability. Furthermore, RGVI typically does not leverage conjugacy in gradient estimation, leading to several issues. Specifically, updates can be affected by the over-parameterization of certain variational distributions (despite being analytically calculable), and this may result in an excessive number of variational parameters and slower convergence rates.  

\subsection{Contributions}
While recent advancements in VBI have been dedicated to enhancing its applicability, it remains incapable of being applied to RFLVMs. To address this limitation, this paper proposes a novel VBI algorithm designed to facilitate its utilization in general models, including RFLVMs. The key contributions of our work are:  
\begin{itemize}
    \item \textbf{Stick-breaking construction of the DP}. \revise{We introduce the stick-breaking construction of DPs \cite{blackwell1973ferguson}, assuming a multinomial distribution for indicator variables. This approach enables the establishment of explicit PDFs for the indicator variables in the DP mixture used in RFLVMs, allowing us to explore the space of stationary kernels.}
    \item \textbf{Block coordinate descent variational inference}. We introduce a novel algorithm for VBI termed ``block coordinate descent variational inference" (BCD-VI). The fundamental idea behind BCD-VI is to partition the variational variables into distinct blocks, and then sequentially optimize the subproblems formulated by each block with a suitable solver (RGVI, or proposed MFVI-based) that is carefully selected.  More specifically, it uses RGVI for the non-conjugate terms, while retaining the computational efﬁciency of conjugate computations on the conjugate terms. Consequently, our proposed BCD-VI significantly simplifies problem-solving, improves  algorithmic efficiency, and thus, extends the applicability of VBI to a wider array of models. 
    \item \textbf{Scalable RFLVMs}. 
    We employ the BCD-VI algorithm on RFLVMs with the stick-breaking construction. We, facilitate the development of VBI for RFLVMs, ultimately leading to a scalable version of RFLVMs, namely \srflvm. Our proposed \srflvm~ method demonstrates scalability, computational efficiency, and superior performance in learning informative latent variables and imputing missing data, surpassing state-of-the-art (SOTA) LVM variants across a diverse set of datasets. 
\end{itemize}

\subsection{Paper Organization}
The structure of the remaining sections in this paper is summarized as follows. Section \ref{sec:pro_state} discusses prior work in LVMs and highlights challenges encountered in the application of VBI to RFLVMs. In Section \ref{sec:bcd-vi}, we introduce the stick-breaking construction and the BCD-VI algorithm to address those challenges, which leads to the development of \srflvm~ in Section \ref{sec:srflvm}. 
Based on the proposed approach, numerical results and discussions are presented in Section \ref{sec:results}. Finally, Section \ref{sec:conclusions} summarizes the conclusions derived from our work.   


\section{Preliminaries and Problem Statement}
\label{sec:pro_state}

Section \ref{subsec:RFLVM-preliminary} presents an introduction to RFLVMs and outlines limitations associated with the existing sampling-based inference methods \cite{ gundersen2021latent}.  To address these limitations, Section \ref{subsec:VIandChallenge-preliminary}  discusses an alternative approach known as VBI. We detail the foundational concepts underlying VBI and examine the challenges it faces when integrated with RFLVMs. 

\subsection{Random Feature Latent Variable Models}
\label{subsec:RFLVM-preliminary} 

\subsubsection{Gaussian Process Latent Variable Models}
\label{subsubsec:GPLVMs}

A GPLVM \cite{lawrence2005probabilistic} establishes a relationship between an observed dataset $\mathbf{Y} \in \mathbb{R}^{N \times M}$ and latent variables $\mathbf{X} \in \mathbb{R}^{N \times Q}$ using a set of GP-distributed mappings, $\{ f_m(\cdot):\!\mathbb{R}^{Q}\!\mapsto\!\mathbb{R}, m \in 1, ..., M \}$. Formally, a GPLVM with Gaussian likelihood can be expressed as
\begin{subequations}
\label{eq:gplvm}
\begin{align}
    & \y_{:, m} \vert f_m({\X}) \sim \mathcal{N}_N( f_m(\X), \sigma^{2} \I_N), & m \in 1, \dots, M, \\ 
    & f_m \sim \mathcal{GP}(0, \kappa),
    & m \in 1, \dots, M, \\ 
    & \x_n \sim  \mathcal{N}_{Q}(\mathbf{0}, \mathbf{I}_{Q}), & n \in 1, \dots, N,  \label{eq:gplvm_prior_x}
\end{align}
\end{subequations}
where the subscript $m$ indexes the element of the output vector $\y$, $\mathcal{GP}(0, \kappa)$ stands for a GP with zero mean and kernel function $\kappa$, and 
the subscript of $\mathcal{N}$ denotes the dimension of the distribution. Here, $\sigma^{2}$ represents the noise variance, and $\mathbf{y}_{:, m} \in \mathbb{R}^N$ denotes the vector of observations for the $m$-th dimension. The function $f_m(\mathbf{X})$ is defined as $f_m(\mathbf{X}) = \left[f_m \left(\mathbf{x}_1\right), \ldots, f_m\left(\mathbf{x}_N\right)\right]^{\top} \in \mathbb{R}^{N}$. Additionally, we define $ \mathbf{K}_{\X} \in \mathbb{R}^{N \times N} $ as a covariance matrix evaluated on the finite input $\X$ with a positive definite kernel function $\kappa(\x, \x^{\prime})$, i.e., $\left[ \mathbf{K}_{\X} \right]_{i, j} =  \kappa(\x_i, \x_j)$. In the context of GPLVMs, the radial basis function (RBF) is a typical choice of kernel function \cite{williams2006gaussian}. Due to the conjugacy between the GP prior and the Gaussian likelihood, each mapping can be marginalized out, resulting in the marginal likelihood:
\begin{align}
    &\y_{:, m}\sim \mathcal{N}_N ({\bf 0}, \mathbf{K}_{\mathbf{X}} + \sigma^2 \mathbf{I}_N), & m \in 1, \dots, M. 
    \label{eq:gplvm_mle}
\end{align}
Built upon this likelihood and the prior from Eq.~\eqref{eq:gplvm_prior_x}, we can derive the \textit{maximum a posteriori} (MAP) estimate for the latent variable $\mathbf{X}$ using gradient-based optimization methods, with computational complexity that scales $\mathcal{O}(N^3)$ for $N$ observations \cite{lawrence2005probabilistic}. A critical limitation of the GPLVM is its reliance on Gaussian likelihood as it may be an unrealistic assumption for certain data types (like count data, for example). To overcome this, RFLVMs integrate RFFs with weight-space approximations of GPs, thereby generalizing GPLVM to support diverse data likelihoods \cite{rahimi2008random,gundersen2021latent}; see further discussions of related work on GPLVMs in Section~\ref{sec:related_work}. 

\subsubsection{Random Fourier Features}
\label{subsubsec:RFFs}

According to Bochner’s theorem \cite{bochner1959lectures}, a continuous stationary kernel function $\kappa(\x, \x^{\prime}) = \kappa(\x - \x^{\prime})$ is positive definite if and only if $\kappa(\cdot)$ is the Fourier transform of a finite positive measure $p(\mathbf{w})$. If the kernel is properly scaled,  $p(\mathbf{w})$ is guaranteed to be a density \cite{bochner1959lectures}, that is
\begin{align}
\label{eq:bachner_theorem}
   \kappa(\x - \x^{\prime})=\int p(\w) \exp \! \left({i \w^{\top} (\x - \x^{\prime})}\right) \mathrm{~d} \w. 
\end{align}
Built upon Eq.~\eqref{eq:bachner_theorem}, the RFFs methods try to obtain an unbiased estimator of the kernel function \cite{rahimi2008random}, which is summarized in the following theorem. 

\begin{theorem}
Let $\kappa(\mathbf{x}, \mathbf{x}^\prime)$ be a positive definite stationary kernel function, and let $\vvarphi(\mathbf{x})$ be an associated randomized feature map defined as follows:
\begin{equation}
    \sqrt{\frac{2}{L}}\!\!\begin{bmatrix}
    \sin(\w_1^{\top} \x), \cos(\w_1^{\top} \x),\!\cdots,\!\sin(\w_{L/2}^{\top} \x), \cos(\w_{L/2}^{\top} \x) 
    \end{bmatrix}^{\!\top}\!,
    \label{eq:rff_def}
\end{equation}
where $\mathbf{W} \triangleq \{\mathbf{w}_l\}_{l=1}^{L/2} \in \mathbb{R}^{\frac{L}{2} \times Q}$ are independent and identically distributed (i.i.d.) random vectors drawn from the spectral density  $p(\mathbf{w})$. Then, an unbiased estimator of the kernel $\kappa(\mathbf{x}, \mathbf{x}^\prime)$ using RFFs is given by:
\begin{equation}
\kappa(\mathbf{x}, \mathbf{x}^\prime) \approx \vvarphi(\mathbf{x})^\top \vvarphi(\mathbf{x}^\prime).  \label{eq:kernel_app}
\end{equation} 
\end{theorem}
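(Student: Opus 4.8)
The plan is to establish unbiasedness, i.e.\ to show that $\mathbb{E}[\vvarphi(\x)^\top \vvarphi(\xp)] = \kappa(\x,\xp)$, where the expectation is over the i.i.d.\ draws $\w_1,\dots,\w_{L/2}\sim p(\w)$; the approximation in Eq.~\eqref{eq:kernel_app} is then justified as a sample-mean estimator of this expectation. The natural starting point is Bochner's theorem, Eq.~\eqref{eq:bachner_theorem}, which I would reread as identifying $\kappa$ with the expectation of a complex exponential, $\kappa(\x-\xp) = \mathbb{E}_{\w}[\exp(i\w^\top(\x-\xp))]$.

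First I would reduce this complex integral to a real one. Because $\kappa$ is a real-valued kernel, its spectral density $p(\w)$ is symmetric about the origin, so the imaginary contribution $\mathbb{E}_{\w}[\sin(\w^\top(\x-\xp))]$ vanishes by oddness and only the real part survives, giving
\begin{equation}
  \kappa(\x,\xp) = \mathbb{E}_{\w}\!\left[\cos\!\big(\w^\top(\x-\xp)\big)\right].
\end{equation}
Next I would invoke the angle-subtraction identity $\cos(a-b)=\cos a\cos b+\sin a\sin b$ with $a=\w^\top\x$ and $b=\w^\top\xp$, which converts the cosine of a difference into a separable inner product. Introducing the per-frequency map $\z_{\w}(\x)\triangleq[\cos(\w^\top\x),\,\sin(\w^\top\x)]^\top$, this yields the key identity $\z_{\w}(\x)^\top\z_{\w}(\xp)=\cos(\w^\top(\x-\xp))$, and hence $\mathbb{E}_{\w}[\z_{\w}(\x)^\top\z_{\w}(\xp)]=\kappa(\x,\xp)$; that is, a single random frequency already furnishes an unbiased (if high-variance) estimator.

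Finally I would assemble the full feature map. The map in Eq.~\eqref{eq:rff_def} simply stacks $L/2$ independent blocks $\z_{\w_l}$ rescaled by $\sqrt{2/L}$, so that $\vvarphi(\x)^\top\vvarphi(\xp)=\tfrac{2}{L}\sum_{l=1}^{L/2}\cos(\w_l^\top(\x-\xp))$ is exactly the empirical average of $L/2$ i.i.d.\ copies of the single-frequency estimator. Linearity of expectation then delivers $\mathbb{E}[\vvarphi(\x)^\top\vvarphi(\xp)]=\kappa(\x,\xp)$, which is the claimed unbiasedness, while the law of large numbers explains why the approximation in Eq.~\eqref{eq:kernel_app} tightens as $L$ increases.

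I expect the only delicate step to be the passage from the complex exponential to the real cosine: it rests on the symmetry $p(\w)=p(-\w)$ implied by $\kappa$ being real and even, and without it the discarded imaginary term need not vanish. I would therefore state this symmetry explicitly rather than treat it as self-evident. The remaining manipulations—the trigonometric identity and the averaging over independent frequencies—are routine once the single-frequency identity is in place.
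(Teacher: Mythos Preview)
Your proposal is correct and follows essentially the same route as the paper's proof: Bochner's theorem, reduction from the complex exponential to the real cosine, the angle-subtraction identity $\cos(a-b)=\cos a\cos b+\sin a\sin b$, and interpretation of $\vvarphi(\x)^\top\vvarphi(\xp)$ as a Monte Carlo average over $L/2$ frequencies. If anything, you are more careful than the paper, which simply drops the imaginary part without invoking the symmetry $p(\w)=p(-\w)$ that you (rightly) flag as the only delicate step.
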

\begin{proof}
See Appendix \ref{app:kernel_estimator} or \cite{rahimi2008random}. 
\end{proof}

\subsubsection{Weight-Space View of GPs}
\label{subsubsec:weight-space-approximations-GPs}

The representer theorem \cite{williams2006gaussian} states that the optimal solution for the mapping in kernel-based methods can be expressed as a linear combination of kernel evaluations between data points, specifically as $f_m^{\star}(\mathbf{x})\!=\!\sum_{n=1}^{N} \alpha_{nm} \kappa(\mathbf{x}_n, \mathbf{x})$. By integrating the unbiased estimator provided by RFFs (Eq.~\ref{eq:kernel_app}) with this optimal mapping $f_m^{\star}(\mathbf{x})$, a weight-space approximation of GP is obtained: 
\vspace{-0.05in}
\begin{equation}
\label{eq:RFF_approximation}
\begin{aligned}
     f_m^{\star}(\mathbf{x})\approx\sum_{n=1}^{N} \alpha_{nm} \vvarphi(\mathbf{x}_n)^{\top} \vvarphi(\mathbf{x})\triangleq\ba_m^{\top} \vvarphi(\mathbf{x}),
\end{aligned}
\end{equation}
where $\ba_m \!\sim\! \mathcal{N}(\mathbf{0}, \mathbf{I}_{L})$ are stochastic weights, and where we define $
\bA \triangleq \{ \ba_m \}_{m=1}^{M} \in \mathbb{R}^{M \times L}$. Substituting this approximation into the GPLVM (see Eq.~\eqref{eq:gplvm}) gives rise to the generalized GPLVMs as follows \cite{gundersen2021latent}:
\vspace{-0.05in}
\begin{subequations}
\label{eq:gene_gplvm}
\begin{align}
    & \y_{:, m} \vert \ba_m \sim p( g(\bPhi \ba_m), \boldsymbol{\zeta}), & m \in 1, \dots, M, \label{eq:mimc_marg}  \\ 
    & \ba_m \sim \mathcal{N}_{L}(\mathbf{0}, \mathbf{I}_{L}), & m \in 1, \dots, M, \label{eq:prior_h} \\
    & \x_n \sim  \mathcal{N}_{Q}(\mathbf{0}, \mathbf{I}_{Q}), & n \in 1, \dots, N,   \label{eq:prior_x}
\end{align}
\end{subequations}
where $p(\cdot,\cdot)$ stands for some probability distribution, $\bPhi\!\triangleq\!\left[ \begin{array}{ccc}\! \vvarphi(\x_1),\!\cdots,\!\vvarphi(\x_N)\!\end{array} \right]^{\top}\!\!\in\!\!\mathbb{R}^{N \times L}$, $g(\cdot)$ denotes an invertible link function that maps real vectors onto the support of {the parameters of $p(\cdot,\cdot)$}, and $\boldsymbol{\zeta}$ represents other parameters specific to $p(\cdot,\cdot)$. However, despite the versatility of generalized GPLVMs in handling diverse data, their reliance on a primary kernel function, such as the RBF kernel, often results in inaccurate learning of the underlying function map, thereby degrading the quality of latent variable inference \cite{li2024preventing}. 
\begin{figure}[t]
    \centering
    \includegraphics[width=0.70\linewidth]{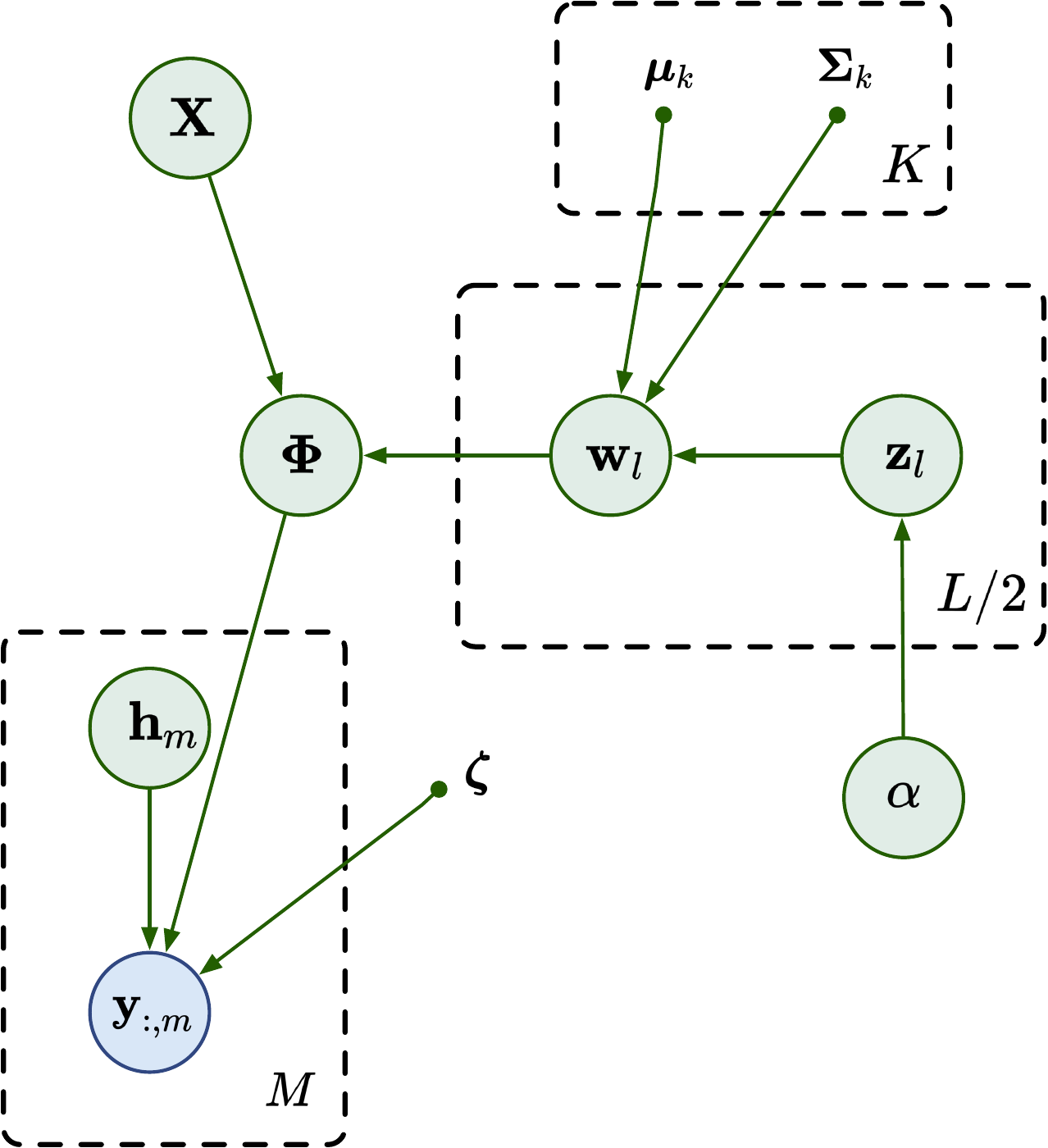}
    \caption{\textbf{Graphical Model of RFLVMs}. We use arrows to denote the dependency relations between variables. The blue and green circles denote the observed and variational variables, respectively. Smaller dots indicate the deterministic parameters of the model. The nodes surrounded by a box represent that there are $K$, $M$ or $L/2$ nodes of this kind.}
    \label{fig:graph_rflvm}
\end{figure}

\subsubsection{Random Feature Latent Variable Models}
\label{subsubsec:RFLVMs}

 To enhance kernel flexibility, RFLVMs integrate kernel learning within generalized GPLVMs \cite{gundersen2021latent}. Particularly, by leveraging Bochner's theorem (Eq.~\eqref{eq:bachner_theorem}), which establishes a duality between the spectral density and the kernel function, a category of kernel learning methods approximates the spectral density of the underlying stationary kernel. In RFLVMs, a DP mixture kernel is employed, with its spectral density $p(\mathbf{w})$ defined as follows: 
\vspace{-0.05in}
\begin{subequations}
    \begin{align}
        & \w_{l} \vert \z_l \sim \mathcal{N}_{Q} \left(\bmu_{\z_{l}}, \bSigma_{\z_{l}} \right), & l \in 1, \dots, L/2, \\ 
        & \z_l \sim \operatorname{CRP}(\alpha), & l \in 1, \dots, L/2, \\ 
        & \alpha \sim \operatorname{Ga}(\alpha_0, \beta_0) \label{eq:alpha}, 
    \end{align}
\end{subequations}
where each spectral frequency vector $\w_l$ is distributed with a mixture component which is decided by the associated integer variable (or indicator variable) $\z_l = k \in \{1, \ldots K\}$, and $K$ is the number of clusters. Each mixture component is characterized by the mean vector $\boldsymbol{\mu}_k \in \mathbb{R}^{Q}$ and the covariance matrix $\boldsymbol{\Sigma}_k \in \mathbb{R}^{Q \times Q}$, collectively denoted as  $\boldsymbol{\theta}_{\text{dpm}} \triangleq \{ \boldsymbol{\mu}_k, \boldsymbol{\Sigma}_k \}_{k=1}^{K}$. Each integer variable $\z_l$ in $\z \triangleq \{ \z_l \}_{l=1}^{L/2}$ follows the Chinese restaurant process (CRP) with a concentration parameter $\alpha$, which governs the degree of association for a set of indicator variables \cite{blei2006variational}. The parameter $\alpha$ is further drawn from a Gamma distribution with shape hyperparameter $\alpha_0 \in \mathbb{R}$ and rate hyperparameter $\beta_0 \in \mathbb{R}$. In summary, the graphical model of RFLVMs is shown in Fig.~\ref{fig:graph_rflvm}. Next, we provide two examples of different data {distributions} in RFLVMs. 

\begin{example}[Logistic {distribution}]
\label{example:Logistic_likelihood}
The functional expression of the data distributions such as Bernoulli and negative binomial is as follows:  
\begin{align}
    & \y_{:, m} \vert \ba_m \sim 
    p\left( \bPhi, \ba_m, 
    \boldsymbol{\zeta}\right),  & m \in 1, \dots, M,  \label{eq:logistic_likelihood}
\end{align}
where
\vspace{-0.05in}
\begin{align}
     p\left( \bPhi, \ba_m, 
     \boldsymbol{\zeta}\right) = \prod_{n=1}^N c_{nm} \frac{ \exp \left( \bPhi_{n, :}~\ba_m \right)^{a_{nm}}} {\left(1+\exp \left( \bPhi_{n, :}~ \ba_m \right)\right)^{ b_{nm} }}. 
     \label{eq:log_like}
\end{align}
In this case, $g_n(\cdot)\!=\!\exp(\bPhi_{n, :}\ba_m)$. By setting $a_{nm}\!=\!\y_{n m},$ $b_{nm}\!=\!\y_{n m}\!+\!r_m$, and $c_{nm}\!=\! \left(\begin{array}{c}\!\y_{n m}\!+\!r_m-\!1 \\ \y_{n m}\end{array}\right)$, we get the negative binomial RFLVM with $\boldsymbol{\zeta} \triangleq \{ r_m \}_{m=1}^{M}$ and $r_m$ is the feature-specific dispersion parameter. 
\end{example}
\begin{example}[Gaussian {Distribution}]
\label{example:Gaussian_likelihood}
In the case of Gaussian {distribution}, due to conjugacy, we can marginalize the GP and integrate the unbiased estimator of the kernel function using RFF. This approach leads to the RFLVM with Gaussian likelihood, as follows: \begin{align} \y_{:, m} \sim \mathcal{N}_N({\bf 0}, \bPhi \bPhi^{\top} + \sigma^{2} \I_N), ~ m \in 1, \dots, M. \label{eq:gaussian_like} \end{align} In this case, $\boldsymbol{\zeta}$ contains only $\sigma^2$. Note that this integration avoids the introduction of stochastic weights $\ba_m$.
\end{example}

Despite the fact that RFLVMs demonstrate SOTA performance in terms of informative latent variables by employing a DP mixture kernel, they still face several challenges associated with the use of MCMC inference \cite{gundersen2021latent}. First, the computational demands linked to MCMC inference limit the applicability of RFLVMs to large datasets \cite{cai2015challenges}. Second, the ongoing monitoring of Markov chain convergence introduces hyperparameters, such as burn-in steps, which increase the complexities in model selection \cite{cai2015challenges}. To address these challenges, the following subsection introduces VBI \cite{zhang2018advances}, which is well-known for its scalable performance and simplified convergence diagnostics, thereby avoiding the model selection complexities associated with ensuring convergence. For the sake of simplicity and to facilitate discussion, we will demonstrate the proposed scalable VBI method for RFLVMs with Gaussian likelihood unless otherwise specified.

\subsection{Variational Bayesian Inference and Challenges} \label{subsec:VIandChallenge-preliminary}
Let $\boldsymbol{\Theta}$ represent all variational variables, i.e., $\boldsymbol{\Theta} = \{\W, \X, \z, \alpha\}$. The variational method converts inference into an optimization problem by aiming to identify an optimal variational distribution, denoted by $q^{\star}(\boldsymbol{\Theta})$, from a predetermined distribution family $q(\boldsymbol{\Theta}; \boldsymbol \vartheta)$ indexed by variational parameters $\boldsymbol \vartheta$. This identified variational distribution should closely approximate the posterior distribution $p(\boldsymbol{\Theta} \vert \mathbf{Y})$ in terms of Kullback-Leibler (KL) divergence. Mathematically, the optimization task can be formulated as follows:
\vspace{-0.05in}
\begin{equation}
    \min_{\boldsymbol{\vartheta}} \ \operatorname{KL}\left(q(\bTheta; \boldsymbol \vartheta) ~\|~ p(\bTheta \mid \Y)\right),
    \label{eq:KL_VI}
\end{equation}
where $\operatorname{KL}(\cdot | \cdot)$ denotes the KL divergence between the two arguments. Because of the intractability of the posterior distribution, the equivalent optimization problem is to maximize the ELBO \cite{cheng2022rethinking}, $\mathcal{L}(q(\boldsymbol{\Theta};\boldsymbol \vartheta)$, i.e., 
\vspace{-0.05in}
\begin{align}
\begin{aligned}
& \max_{\boldsymbol{\vartheta}} \mathcal{L}(q(\bTheta; \boldsymbol \vartheta)) \triangleq \mathbb{E}_{q(\bTheta)}\left\{\log \frac{p(\bTheta, \Y)}{q(\bTheta; \boldsymbol \vartheta})\right\}. \label{eq:max_elbo}
\end{aligned}
\end{align}
\noindent Unfortunately, in RFLVMs, the optimization problem in Eq.~\eqref{eq:max_elbo} presents several key challenges: 1) \textbf{Lack of explicit PDFs:} 
    The indicator variables defined in DP are assumed to follow a CRP, a stochastic sampling procedure that lacks explicit PDFs \cite{blei2006variational}. As a result, the ELBO, defined in Eq.~\eqref{eq:max_elbo}, becomes incomputable. 
2) \textbf{Numerous variational variables:}   In VBI, one typically updates numerous variational variables jointly using a single optimization algorithm, unexpectedly overlooking the potential to exploit the unique optimization structures of the individual variables. For illustrating this point, we give an example below, which will be clearly demonstrated in the context of RFLVMs later in Section.~\ref{sec:srflvm}. 
    \begin{example}
        \label{eg:drawback_of_jointly_optimization}
        Let \(\bTheta_b\) represent a block of variational variables, which contains $J_b \neq 0$ variational variables, and $b$ denotes an integer.\footnote{We assume that there is no variable overlap between different \(\bTheta_b\) blocks.} Consider the following related terms of $\bTheta_1$ and $\bTheta_3$ within the ELBO:
        \begin{equation}
            \begin{aligned}
                & \mathbb{E}_{q(\bTheta_1; \boldsymbol \vartheta}) \left[ \log p(\Y \mid \bTheta_1) \right] - \mathrm{KL}(q(\bTheta_1 ;\boldsymbol \vartheta) \| p(\bTheta_1)) \\ 
                &~+ \mathbb{E}_{q(\bTheta_2, \bTheta_3 ;\boldsymbol \vartheta}) \left[ \log p(\bTheta_2 \mid \bTheta_3) \right] - \mathrm{KL}(q(\bTheta_3 ;\boldsymbol \vartheta) \| p(\bTheta_3)). \nonumber
            \end{aligned}
        \end{equation}
        Assume $q(\bTheta_3 ;\boldsymbol \vartheta)$ is a conjugate prior\footnote{A prior distribution is considered conjugate to the likelihood if the posterior distribution is in the same family of distributions as the prior.} while $q(\bTheta_1; \boldsymbol \vartheta)$ is non-conjugate. In such cases, jointly solving for \(\bTheta_1\) and \(\bTheta_3\) within the VBI framework becomes challenging. Specifically, MFVI is not suitable due to the non-conjugacy of $\bTheta_1$. For RGVI, the conditions are that each prior in the ELBO can either be sampled using the reparameterization trick or its related term can be analytically evaluated. However, even when RGVI is feasible, it still ignores the conjugacy structure within $q(\bTheta_3; \boldsymbol \vartheta)$, which may result in an excessive number of variational parameters (despite being analytically calculable) and slower convergence rates. 
    \end{example}
\noindent The subsequent section aims to address both of these issues.

\section{Methodology}
\label{sec:bcd-vi}

\begin{figure}[t]
    \centering
    \includegraphics[width=0.75\linewidth]{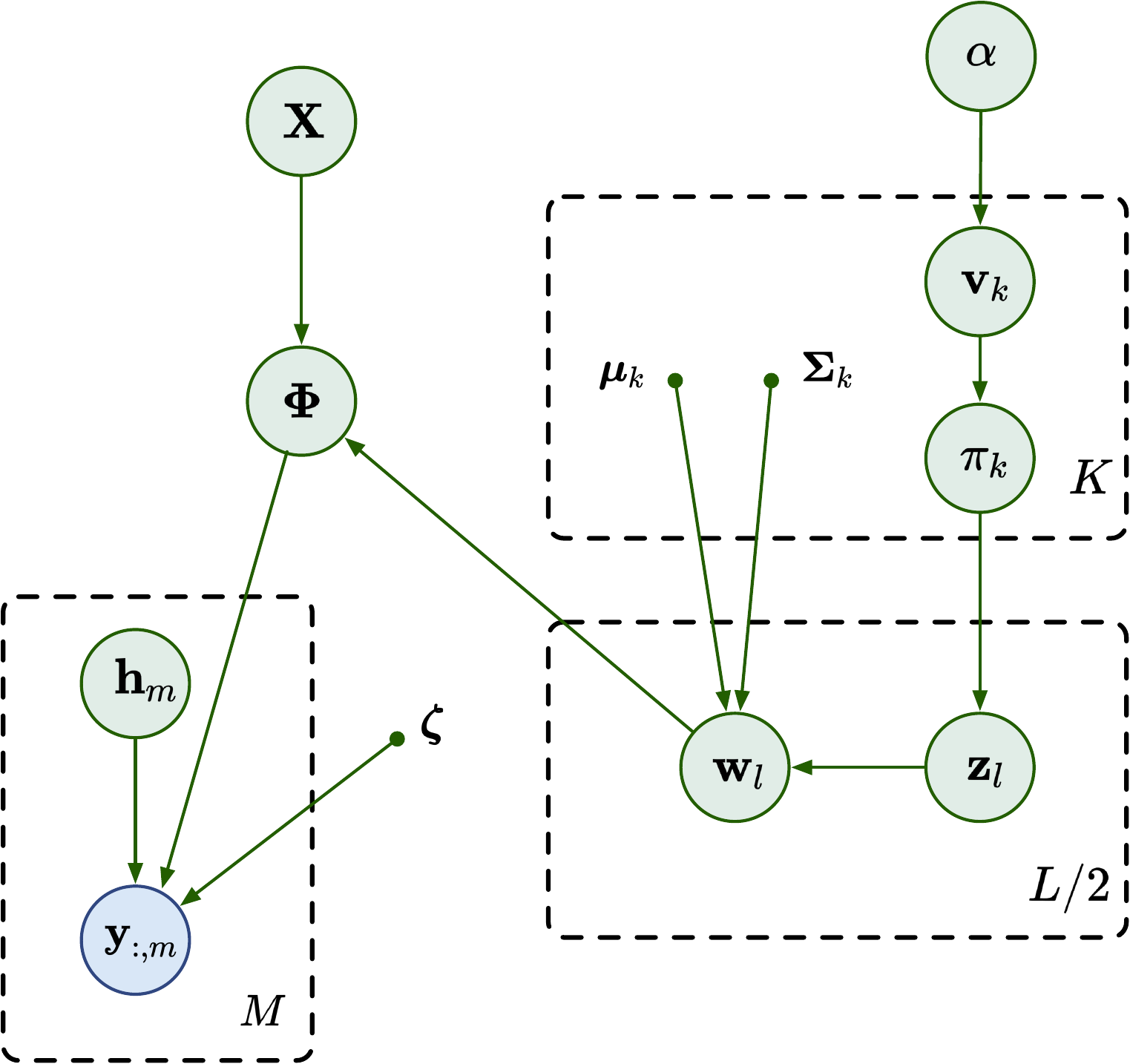}
    \caption{\textbf{Graphical Model of RFLVM with a stick-breaking construction}. We use arrows to denote the dependency relations between variables. The blue and green circles denote the observed and variational variables, respectively. Smaller dots indicate the deterministic parameters of the model. The nodes surrounded by a box represent that there are $K$, $M$ or $L/2$ nodes of this kind. 
    }
    \label{fig:graph_rflvm_stick}
\end{figure}

This section details two methods to address the challenges encountered when implementing VBI for RFLVMs. In Section \ref{subsec:Stick-breaking-DPM}, we introduce the utilization of the stick-breaking construction of DP to obtain explicit PDFs for indicator variables. Then in Section \ref{subsec:BCDVI}, we introduce the foundational principle of novel BCD-VI, which is motivated by RFLVMs.

\subsection{Stick-breaking Construction of DP}
\label{subsec:Stick-breaking-DPM}

As mentioned in Section~\ref{subsec:VIandChallenge-preliminary}, lacking explicit PDFs for $\z$ renders ELBO incomputable \cite{antoniak1974mixtures}.
This limitation does not affect MCMC sampling, as it only requires sampling from the posterior of $\z$, represented as a stochastic process that can be easily sampled (see Eq.~(10) in \cite{gundersen2021latent}). In contrast, in the context of VBI, an incomputable ELBO leads to an intractable problem (Eq.~\eqref{eq:max_elbo}). To tackle this issue, we utilize a well-known reformulation of the stick-breaking construction of a DP, also referred to as the Blackwell-MacQueen urn scheme \cite{blackwell1973ferguson}. Specifically, within the stick-breaking construction, each $\z_l$ is generated from a multinomial distribution with a probability mass function $\pi(\mathbf{v})$ that depends on {$\mathbf{v} {\in \mathbb{R}^{K}}$}, i.e., 
\vspace{-0.1in}
\begin{align}
    \begin{aligned}
        \z_l\!\sim\!\operatorname{Mult}(\pi(\mathbf{v}))
          = \operatorname{const} \times \left(\prod_{k=1}^{K}\left(1-\bv_{k}\right)^{\mathbf{1}\left[\z_l>k\right]}
        \bv_{k}^{\mathbf{1}\left[\z_l=k\right]}\right),  \nonumber
    \end{aligned}
\end{align}
where, the $k$-th element of the probability $\pi(\mathbf{v})$ is given by
\vspace{-0.1in}
\begin{subequations}
\begin{align}
      & \left[ \pi(\mathbf{v}) \right]_k  \triangleq \pi_{k}= \bv_{k} \prod_{j=1}^{k-1}\left(1-\bv_{j}\right), \\
      & \bv_{k} \sim \operatorname{Beta}(1, \alpha). 
\end{align}
\end{subequations}
Intuitively, each $\pi_k$ can be treated as sampling from a stick-breaking process. This process involves conceptualizing a stick of length $1$, where at each step $k$, a fraction $\bv_k$, sampled from a Beta distribution, is broken off. The length of the broken-off fraction is treated as the value of $\pi_k$, i.e., $\pi_k$$=\bv_k \prod_{j=1}^{k-1}\left(1-\bv_j\right)$. The remaining portion is retained for subsequent steps. Note that the distribution over $\pi_k$ is sometimes denoted as $\pi_k \!\sim\! \operatorname{GEM}(\alpha)$, with ``GEM" stands for \textbf{G}riffiths, \textbf{E}ngen, and \textbf{M}cCloskey \cite{blackwell1973ferguson}. Utilizing such stick-breaking construction within RFLVMs (see Eq.~\eqref{eq:gaussian_like} \& \eqref{eq:prior_x}), the graphical model for RFLVMs is depicted in Fig.~\ref{fig:graph_rflvm_stick}, and the joint distribution for the RFLVM, denoted as $p(\mathbf{Y}, \boldsymbol{\Theta})$, is expressed as:
\vspace{-0.1in}
\begin{align}
    \begin{aligned}
        p(\mathbf{Y}, \boldsymbol{\Theta}) = & \prod_{m=1}^{M} \prod_{n=1}^{N} p(\mathbf{y}_{n,m} \mid  \mathbf{x}_n, \mathbf{W}) p(\mathbf{x}_n)  \\ 
        & \prod_{l=1}^{L/2} p(\mathbf{w}_l \mid \mathbf{z}_l) p(\mathbf{z}_l \mid \mathbf{v}) p(\mathbf{v} \mid \alpha) p(\alpha), 
    \end{aligned} 
    \label{eq:joint_dis}
\end{align}
where $\boldsymbol{\Theta} \triangleq \{ \mathbf{W}, \mathbf{X}, \mathbf{z}, \mathbf{v}, \alpha \}$. The variational variables within $\boldsymbol{\Theta}$ each have explicit PDFs, thereby resulting in a computable ELBO. 

\begin{algorithm}[t]
\caption{BCD-VI}
\begin{algorithmic}[1]
\WHILE{Not Converged}
    \STATE Divide variational parameters into $B$ blocks, $\{ \bTheta_i \}_{i=1}^{J_1}, $ $ \dots,   \{ \bTheta_i \}_{i=1}^{J_B}$; 
    \FOR{$b \in 1, ..., B$}
        \IF{condition in Corollary \ref{coro: exp} \& $J_b=1$}
        \STATE Identifying $\boldsymbol{\vartheta}^{\star}_b$ from 
        \STATE $\log q^{\star}(\bTheta_b) = \langle\ln p(\Y, \bTheta \mid \text{rest})\rangle_{\bTheta \backslash \bTheta_b}+\text { const }$; 
        \ELSE
        \STATE Maximize the MC estimator of the $b$-th optimization objective (defined in Eq.~\eqref{eq:rgvi-sub}) and update $\boldsymbol{\vartheta}_b$ using the Adam \cite{kingma2014adam}. 
        \ENDIF
    \ENDFOR
\ENDWHILE
 \end{algorithmic}
\label{alg:bcd-vi}
\end{algorithm}

\subsection{Block Coordinate Descent Variational Inference}
\label{subsec:BCDVI}

As discussed in Section~\ref{subsec:VIandChallenge-preliminary}, each variational variable in the VBI framework is typically solved jointly using the same optimization algorithm (e.g., MFVI or RGVI). This approach overlooks the potential benefits of exploiting the specific optimization structures of different variational variables as illustrated in Example \ref{eg:drawback_of_jointly_optimization}. 
A potential solution is alternating optimization, which enables the use of the most appropriate solver for each variational variable, thus not only improving computational efficiency for conjugate terms but also ensuring tractable inference for non-conjugate terms. 

Inspired by this, we incorporate the concept of block coordinate descent (BCD) into VBI using a divide and conquer strategy. Specifically, we first partition the variational variables $\bTheta$ into $B$ blocks, denoted as $\{\bTheta_b\}_{b=1}^{B}$, with each block $\bTheta_b$ containing $J_b \neq 0$ variational variables. Such block partitioning then divides the optimization problem defined in Eq. \eqref{eq:max_elbo} into $B$ subproblems, that is 
\vspace{-0.05in}
\begin{align}
    & \max_{\boldsymbol{\vartheta}_b} \mathcal{L}\left(q(\bTheta_b; \vartheta)\right) , \quad b = 1, ..., B,  
    \label{eq:b-th-prob} 
\end{align}
where $\boldsymbol{\vartheta}_b$ denotes the variational parameters associated with the block $b$. The optimization objective, termed the $b$-th local ELBO, includes the terms within the ELBO given by Eq. \eqref{eq:max_elbo} that are related to block $b$. 
To solve these subproblems, we propose using the MFVI-based method for those with conjugate priors, and RGVI for those with non-conjugate priors, as illustrated in the remaining text of this section.

Let us first focus on MFVI-based methods, which aim to identify analytical optimal solutions for specific variable blocks. Particularly, MFVI-based methods rely on the following theorem:
\begin{theorem}
    Assuming that the variational variable within variable block $b$ is independent of other variational variables,\footnote{Note that we do not assume full factorization here. We only require that a variable block be independent from others, while the variables of each block can be interdependent.} i.e., 
    \begin{equation}
    q(\bTheta; \vartheta) = q(\bTheta_b;\vartheta)q(\bTheta_{\neq b}; \vartheta),
    \end{equation}
    where $q(\bTheta_{\neq b} ;\vartheta)$ represents the joint variational distribution for the remaining variational variables, we have the optimal solution $q^{\star}(\bTheta_b; \vartheta)$ that can be formulated as:
    \begin{align}
    \log q^{\star}(\bTheta_b;\vartheta) = \langle\ln p(\Y, \bTheta)\rangle_{\bTheta \backslash \bTheta_b}+\text { const },
    \label{eq:mf}
    \end{align}
    where $\langle \cdot \rangle_{\bTheta \backslash \bTheta_b}$ symbolizes the expectation of $\bTheta$ excluding $\bTheta_b$, and $\text{const}$ signifies a constant independent of the current $\bTheta_b$. Proof can be found in Appendix \ref{app:mean_field}. 
    \label{theo:mean-field}
\end{theorem}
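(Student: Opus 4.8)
The plan is to start from the ELBO $\mathcal{L}(q(\bTheta;\vartheta))$ in Eq.~\eqref{eq:max_elbo}, substitute the partial factorization $q(\bTheta;\vartheta)=q(\bTheta_b;\vartheta)\,q(\bTheta_{\neq b};\vartheta)$, and then treat $q(\bTheta_{\neq b};\vartheta)$ as fixed so that the ELBO becomes a functional of $q(\bTheta_b;\vartheta)$ alone. The goal is to massage this functional into the form of a negative KL divergence between $q(\bTheta_b;\vartheta)$ and a suitably normalized target whose logarithm equals $\langle \ln p(\Y,\bTheta)\rangle_{\bTheta \backslash \bTheta_b}$ up to a constant; once this reformulation is in place, the optimizer follows immediately from the non-negativity of the KL divergence.

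Concretely, I would first expand the expectation defining $\mathcal{L}$ into the sum of $\int q(\bTheta_b)q(\bTheta_{\neq b})\ln p(\Y,\bTheta)\,d\bTheta$ and the two entropy-type terms coming from $-\ln q(\bTheta_b)$ and $-\ln q(\bTheta_{\neq b})$ (suppressing $\vartheta$ for brevity). Using the normalization $\int q(\bTheta_{\neq b})\,d\bTheta_{\neq b}=1$, the first term collapses to $\int q(\bTheta_b)\,\langle \ln p(\Y,\bTheta)\rangle_{\bTheta \backslash \bTheta_b}\,d\bTheta_b$, the $-\ln q(\bTheta_b)$ term reduces to $-\int q(\bTheta_b)\ln q(\bTheta_b)\,d\bTheta_b$, and the $-\ln q(\bTheta_{\neq b})$ term does not involve $\bTheta_b$ and is therefore a constant for the block being optimized. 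Introducing an auxiliary normalized distribution $\tilde p(\bTheta_b)$ through $\ln \tilde p(\bTheta_b)\triangleq \langle \ln p(\Y,\bTheta)\rangle_{\bTheta \backslash \bTheta_b}-\ln Z$, where $Z$ is the normalizer, the $\bTheta_b$-dependent part of the ELBO becomes
\[
\mathcal{L} = \int q(\bTheta_b)\,\ln\frac{\tilde p(\bTheta_b)}{q(\bTheta_b)}\,d\bTheta_b + \text{const} = -\operatorname{KL}\!\left(q(\bTheta_b;\vartheta)\,\|\,\tilde p(\bTheta_b)\right) + \text{const}.
\]
Since the KL divergence is non-negative and vanishes precisely when its arguments agree, $\mathcal{L}$ is maximized at $q^{\star}(\bTheta_b;\vartheta)=\tilde p(\bTheta_b)$, which gives $\log q^{\star}(\bTheta_b;\vartheta) = \langle\ln p(\Y, \bTheta)\rangle_{\bTheta \backslash \bTheta_b}+\text{const}$, exactly Eq.~\eqref{eq:mf}.

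I expect the main difficulty to be bookkeeping rather than any deep obstacle. The key subtleties are: verifying that once $q(\bTheta_{\neq b};\vartheta)$ is held fixed every term independent of $\bTheta_b$ (including $\ln Z$ and the entropy of $q(\bTheta_{\neq b})$) can legitimately be absorbed into the additive constant; and confirming that the expectation $\langle\cdot\rangle_{\bTheta \backslash \bTheta_b}$ in the statement is indeed taken under $q(\bTheta_{\neq b};\vartheta)$, so that $Z$ is genuinely independent of $\bTheta_b$ and the recognition of a KL divergence is valid. A useful point worth emphasizing is that this argument does not require the full factorization of MFVI but only the single split $q(\bTheta)=q(\bTheta_b)q(\bTheta_{\neq b})$, consistent with the footnote in the statement; the variables inside $\bTheta_b$ may remain mutually dependent. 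Finally, no second-order analysis is needed to confirm a maximum: the $-\operatorname{KL}(\cdot\|\cdot)+\text{const}$ representation is a global upper bound attained at $\tilde p$, so the stationary point is automatically the global optimum.
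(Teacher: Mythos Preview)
Your proposal is correct and follows essentially the same route as the paper's proof: both substitute the block factorization into the ELBO, isolate the $\bTheta_b$-dependent part by absorbing the entropy of $q(\bTheta_{\neq b})$ into a constant, and then recognize the remainder as a negative KL divergence between $q(\bTheta_b)$ and the (normalized) target $\tilde p$ whose log equals $\langle\ln p(\Y,\bTheta)\rangle_{\bTheta\backslash\bTheta_b}$ up to a constant. If anything, your version is slightly tidier in explicitly introducing the normalizer $Z$ and in noting that the KL representation immediately certifies a global maximum without any second-order check.
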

\noindent As stated in Theorem \ref{theo:mean-field}, the independence assumption facilitates the acquisition of the optimal solution $q^{\star}(\bTheta_b)$. However, achieving this optimal solution still requires certain conditions. In this paper, we focus on the conjugacy structures, as detailed as follows.
\begin{corollary}
    We assume that $\bTheta_{b}$ follows a distribution within the exponential family. When this prior distribution is conjugate to the likelihood, the resulting posterior distribution also belongs to the exponential family distribution. Consequently, the optimal variational parameters $\boldsymbol \vartheta_b^{\star}$ for $q^{\star}(\bTheta_b)$ can be determined analytically. Proof can be found in Appendix \ref{app:close-form}. 
    \label{coro: exp}
\end{corollary}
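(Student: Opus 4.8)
The plan is to start from the optimal mean-field solution supplied by Theorem~\ref{theo:mean-field} and show that, under the exponential-family-plus-conjugacy hypothesis, its right-hand side collapses to the log-density of a member of the \emph{same} exponential family, from which the optimal variational parameters can be read off in closed form. First I would isolate the $\bTheta_b$-dependent factors in the joint: by the factorization of $p(\Y,\bTheta)$, only the prior $p(\bTheta_b)$ and those conditional factors that have $\bTheta_b$ as a parent actually depend on $\bTheta_b$, while every remaining factor is constant in $\bTheta_b$ and is absorbed into the $\text{const}$ term of Eq.~\eqref{eq:mf}. I would then write the prior in canonical exponential-family form,
\[
p(\bTheta_b) = h(\bTheta_b)\exp\!\big\{\boldsymbol{\eta}_0^{\top} T(\bTheta_b) - A(\boldsymbol{\eta}_0)\big\},
\]
with natural parameter $\boldsymbol{\eta}_0$, sufficient statistic $T(\bTheta_b)$, base measure $h$, and log-partition function $A$.

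Next I would invoke conjugacy. By assumption the relevant conditional factor, viewed as a function of $\bTheta_b$, shares the sufficient statistic $T$, so that its log equals $\tilde{\boldsymbol{\eta}}^{\top} T(\bTheta_b)$ up to an additive term independent of $\bTheta_b$, where the effective natural parameter $\tilde{\boldsymbol{\eta}}$ depends on $\Y$ and on the remaining variables $\bTheta \backslash \bTheta_b$ but not on $\bTheta_b$ itself. Substituting both expressions into Eq.~\eqref{eq:mf} and taking the expectation $\langle\,\cdot\,\rangle_{\bTheta \backslash \bTheta_b}$, which passes through $T(\bTheta_b)$ because $T$ does not involve the other variables, yields
\[
\log q^{\star}(\bTheta_b) = \log h(\bTheta_b) + \big(\boldsymbol{\eta}_0 + \langle \tilde{\boldsymbol{\eta}}\rangle_{\bTheta \backslash \bTheta_b}\big)^{\top} T(\bTheta_b) + \text{const}.
\]
This is precisely the log-density of the same exponential family with updated natural parameter $\boldsymbol{\eta}^{\star} = \boldsymbol{\eta}_0 + \langle \tilde{\boldsymbol{\eta}}\rangle_{\bTheta \backslash \bTheta_b}$, with normalization handled automatically by $A(\boldsymbol{\eta}^{\star})$. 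Hence $q^{\star}(\bTheta_b)$ belongs to the exponential family, and its optimal variational parameters $\boldsymbol{\vartheta}_b^{\star}$ follow analytically through the standard bijection between the natural and mean (variational) parameterizations.

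The main obstacle is to establish that $\langle \tilde{\boldsymbol{\eta}}\rangle_{\bTheta \backslash \bTheta_b}$ is itself available in closed form, rather than merely being well defined. The resolution is that $\tilde{\boldsymbol{\eta}}$ is a simple (typically linear or bilinear) function of the sufficient statistics of the neighbouring variables, so evaluating its expectation reduces to computing the first moments of those variables under their own variational distributions. When each neighbouring block is also an exponential-family member, these moments are standard and the update is fully analytic; the argument therefore hinges on verifying this moment-closure property for the particular conjugate pairs arising in the model, which is exactly the detailed computation I would defer to Appendix~\ref{app:close-form}.
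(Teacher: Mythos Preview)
Your argument is correct, but it follows a different route from the paper's own proof. You start from the optimal form already supplied by Theorem~\ref{theo:mean-field}, namely Eq.~\eqref{eq:mf}, write the prior in canonical exponential-family form with a general sufficient statistic $T(\bTheta_b)$, invoke conjugacy to make the log-likelihood linear in $T(\bTheta_b)$, and then simply read off the updated natural parameter $\boldsymbol{\eta}^{\star}=\boldsymbol{\eta}_0+\langle\tilde{\boldsymbol{\eta}}\rangle_{\bTheta\backslash\bTheta_b}$. The paper, by contrast, does \emph{not} build on Eq.~\eqref{eq:mf}: it returns to the ELBO itself, parameterizes $q(\bTheta_b)=h(\bTheta_b)\exp(\boldsymbol{\nu}_b^{\top}\bTheta_b-a(\boldsymbol{\nu}_b))$ as an exponential-family variational distribution, differentiates the ELBO with respect to $\boldsymbol{\nu}_b$, sets the gradient to zero, and then specializes to the case where the full conditional $p(\bTheta_b\mid\bTheta_{\neq b},\Y)$ is exponential family with natural parameter $g_b(\bTheta_{\neq b},\Y)$, obtaining $\boldsymbol{\nu}_b^{\star}=\mathbb{E}_{q(\bTheta)}[g_b(\bTheta_{\neq b},\Y)]$. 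Your approach is more direct and more general (you allow an arbitrary sufficient statistic $T$ rather than the identity), and it makes cleaner use of the corollary's parent theorem; the paper's calculus-based derivation instead makes the optimization structure explicit, which ties in more naturally with the BCD-VI viewpoint of treating each block as a separate optimization subproblem. Both routes land on the same update: the optimal natural parameter is the expected natural parameter of the complete conditional.
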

\noindent Corollary \ref{coro: exp} outlines that 
in practice, when the conjugate structures are fulfilled, the optimal variational parameters $\boldsymbol \vartheta_{b}$ within block $b$ can be directly identified from Eq. \eqref{eq:mf} without further calculations (see the example in Section~\ref{subsec:dp_module}). However, the conjugacy structures specified in Corollary \ref{coro: exp} can be restrictive. For those non-conjugate cases, we use an RGVI-based method.

Specifically, in such case, the optimization objective can be approximated using an MC estimator $\hat{\mathcal{L}}$, obtained by sampling from the variational distributions using the reparameterization trick \cite{kingma2013auto}. This leads to the following optimization problem:
\begin{equation}
\setlength{\abovedisplayskip}{4.5pt}
\setlength{\belowdisplayskip}{4.5pt}
    \max_{\boldsymbol \vartheta_{b}}  \hat{\mathcal{L}}(\boldsymbol \vartheta_{b}).
     \label{eq:rgvi-sub}
\end{equation}
Since MC samples obtained with the reparameterization trick are differentiable with respect to $\boldsymbol{\vartheta}_b$, the maximization can be efficiently addressed using gradient descent-based methods, e.g., Adam \cite{kingma2014adam}.

Combining the RGVI-based approach with the previously discussed MFVI technique, we develop the BCD-VI algorithm (detailed in the next section and Algorithm \ref{alg:bcd-vi}). Our algorithm leverages the strengths of both methods, making it applicable to the considered RFLVM with various variational variables. 
Moreover, the proposed BCD-VI not only guarantees convergence, as summarized in the following remark, but also offers opportunities for further enhancements thanks to its ability to seamlessly integrate advanced VBI algorithms for subproblem solving.
\begin{remark}
    In the BCD-VI framework, the free variational parameters within each block are updated using closed-form solutions or iterative methods. Consequently, each update within BCD-VI can be regarded as a BCD \cite{ida2023fast, xu2015block} step, implying that the limit point generated by the BCD-VI algorithm is, at the very least, a stationary point of the KL divergence defined in Eq.~\eqref{eq:KL_VI}.
\end{remark}

\section{Scalable RFLVM (SRFLVM)}
\label{sec:srflvm}

In this section, we detail our proposed computationally efficient BCD-VI algorithm for RFLVMs, which we term SRFLVM (scalable RFLVM). 
Following the philosophy mentioned in Section~\ref{subsec:BCDVI}, we first preselect the following block-wise variational distribution, \( q(\bTheta) \), 
\begin{equation}
    q(\bTheta) = q(\mathbf{X}) p(\mathbf{W} \vert \mathbf{z}) q(\mathbf{z}) q(\mathbf{v}) q(\alpha), 
    \label{eq:gaussian_variational}
\end{equation}
 where 
$q(\mathbf{X})=\prod_{n=1}^N\mathcal{N}\left(\mathbf{x}_n\vert\boldsymbol{\mu}_n,\!\boldsymbol{S}_n\right)\!$ with $\boldsymbol{\mu}_n\!\!\in\!\mathbb{R}^{Q},$  $\!\mathbf{S}_n\!\!\in\!\mathbb{R}^{Q\!\times\!Q}$, and $\!q(\mathbf{v})\!\!=\!\!\prod_{k=1}^{K}\!\operatorname{Beta}(a_{\mathbf{v}_k},\!b_{\mathbf{v}_k})$ with $a_{\mathbf{v}_k},\!b_{\mathbf{v}_k}\!\!\in\!\mathbb{R}$. The distribution $q(\alpha)$ is a Gamma distribution with shape and rate parameters $a_{\alpha}$ and $b_{\alpha}$, respectively. Furthermore, we define the probability of the \( l \)-th spectral point \(\mathbf{w}_l\) being sampled from the \( k \)-th cluster as \(\phi_{lk}\), i.e., \( q(\mathbf{z}_{l} = k) = \phi_{lk} \). 
In addition, we set  $q(\W \vert \z)$ to its prior $p(\W \vert \z)$ to capture the dependence between $\W$ and $\z$, and achieve a more tractable ELBO (see Eq.~\eqref{eq:elbo_gaussian_first_term}).    

By combining the variational distributions $q(\bTheta)$ (Eq.~\eqref{eq:gaussian_variational}) with the model joint distribution (Eq.~\eqref{eq:joint_dis}), we immediately obtain the following concrete optimization problem for SRFLVMs as described in  Eq.~\eqref{eq:max_elbo}:
\begin{equation}
\begin{aligned}
    & \max_{\boldsymbol{\vartheta}, \boldsymbol{\theta}} \ \mathbb{E}_{q(\mathbf{X}) p(\mathbf{W} \mid \mathbf{z}) q(\mathbf{z})} \left[ \log \frac{ p(\Y \mid \X, \W) p(\X) p(\W \mid \mathbf{z})
        }{q(\X) q(\W \mid \mathbf{z})} \right] \\
    & \quad \quad \quad + \mathbb{E}_{q(\mathbf{z}) q(\mathbf{v}) q(\alpha)} \left[ \log \frac{p(\mathbf{z} \mid \mathbf{v}) p(\mathbf{v} \mid \alpha) p(\alpha)}{ q(\mathbf{z}) q(\mathbf{v}) q(\alpha)} \right],  
\end{aligned}
\label{eq:elbo_gaussian}  
\end{equation}
where {the} model hyperparameters, $\btheta$, encompass $\btheta_{\text{dpm}}$ from {the} kernel function learning and likelihood parameters $\boldsymbol{\zeta}$, and {the} variational parameters $\boldsymbol{\vartheta}$ encompass $\{ \{\boldsymbol{\mu}_n, \boldsymbol{S}_n\}_{n=1}^N, \{a_{\mathbf{v}_k}, b_{\mathbf{v}_k}\}_{k=1}^K, a_{\alpha}, b_{\alpha}, \{\phi_{lk}\}_{l=1, k = 1}^{L/2, K}\}$. Given the fact that we set $q(\W \mid \z) = p(\W \mid \z)$ for tractability and dependency, the first term (namely likelihood term) in this optimization problem can be expressed as 
\begin{equation}
\mathbb{E}_{q(\mathbf{X}) p(\mathbf{W} \vert \mathbf{z}) q(\mathbf{z})}\!\left[ \log \frac{ p(\mathbf{Y} \vert \mathbf{X}, \mathbf{W})  p(\mathbf{X}) }{q(\mathbf{X}) }\!\right]. 
\label{eq:elbo_gaussian_first_term}
\end{equation} 
We now turn to the implementation details of BCD-VI for solving the aforementioned optimization problem. To this ends, careful analysis of the optimization objective structure is essential for effective variable block optimization. Specifically, for evaluating the likelihood term, it becomes manageable once \( \W, \X \) and \( \z \) are divided into distinct blocks. As for \( \alpha \) and \( \bv \), which can have conjugate priors, they are each treated as separate blocks. 
In summary, we partition the variational variables into four blocks: 
\begin{equation}
    \color{black} \underbracket{ \color{black} \{ \W, \X \},}_{\text{likelihood block}} \color{black} \underbracket{ \color{black} \{ \z \}, \{ \bv \}, \{ \alpha \} }_{\text{DP kernel blocks}}, \color{black} 
\end{equation} 
where the set $\{ \W, \X \}$ is named the ``likelihood block'' simply because the data likelihood is dependent on $\W$ and $\X$. For the logistic likelihood, the main difference is that the likelihood block includes additional stochastic weights \( \bA \). Consequently, next, we will discuss the inference processes of Gaussian and logistic likelihood blocks in Section \ref{subsec:Gaussian} and \ref{subsec:Logistic}, respectively. Finally, Section \ref{subsec:dp_module} will introduce the inference process of the shared DP kernel blocks. 


\subsection{Gaussian Likelihood Blocks}
\label{subsec:Gaussian}

In the case of Gaussian likelihood, the related subproblem is, 
\begin{equation}
\max_{\boldsymbol{\vartheta}, \boldsymbol{\theta}} \ \mathbb{E}_{q(\mathbf{X}) p(\mathbf{W} \vert \mathbf{z}) q(\mathbf{z})}\!\left[ \log \frac{ p(\mathbf{Y} \vert \mathbf{X}, \mathbf{W}) p(\mathbf{X}) }{q(\mathbf{X})}\!\right], 
\label{eq:gaussian-elbo-z}
\end{equation}
where \(\boldsymbol{\vartheta} \!=\! \{ \boldsymbol{\mu}_n, \mathbf{S}_n \}_{n=1}^{N}\). Through some algebraic calculations, the subproblem (see Eq.~\eqref{eq:gaussian-elbo-z}) can be reformulated as follows:
\begin{equation}
    \underbracket{\mathbb{E}_{q(\W) q(\X)} \left[ \log p(\Y \vert \W, \X) \right]}_{\text{term (a)}} - \underbracket{\operatorname{KL}(q(\X) \| p(\X) )}_{\text{term (b)}} \label{eq:gaussian-elbo-without-z}, 
\end{equation}
where 
\begin{equation}
   \!\! q(\mathbf{W}) \!=\! \int \! p(\mathbf{W} \vert \z)q(\z) d \z \!=\! \prod_{l=1}^{L/2} \mathcal{N}( \sum_{k=1}^{K} \phi_{lk} \bmu_{k},  \sum_{k=1}^{K} \phi_{lk} \bSigma_{k}).  
    \label{eq:variational_w}
\end{equation}
The interpretability of Eq.~\eqref{eq:gaussian-elbo-without-z} is as follows: Term (a) represents the data reconstruction error, encouraging accurate reconstruction of the observed data using any $\mathbf{X}$ and $\mathbf{W}$ samples drawn from their variational distributions. Term (b) serves as a regularizer, penalizing substantial deviations of $q(\mathbf{X})$ from the prior distribution $p(\mathbf{X})$.

Due to the highly nonlinear nature of the GP prior, the prior for $\X$ and $\W$ and their likelihoods are non-conjugate. Thus,  we adopt the RGVI-based approach to solve the subproblem defined in Eq.~\eqref{eq:gaussian-elbo-without-z}.
We numerically evaluate the objective function Eq.~\eqref{eq:gaussian-elbo-without-z}, which is 
\vspace{-0.05in}
\begin{equation}
    \sum_{m=1}^M \frac{1}{I} \sum_{i=1}^I \log \mathcal{N}\left(\mathbf{y}_{:, m} \vert \mathbf{0}, (\bPhi \bPhi^{\top})^{(i)} \!+\! \sigma^2 \mathbf{I}_N\right)    \!-\! \operatorname{KL}(q(\mathbf{X}) \| p(\mathbf{X})), 
    \label{eq:gaussian_elbo_estimate}
\end{equation}
where $I$ denotes the number of MC samples drawn from $q(\mathbf{X})$ and $p(\mathbf{W})$. The second term, a KL divergence between two Gaussian distributions, can be computed analytically. Further computational details are provided in Appendix \ref{app:Gaussian_elbo}. Thanks to the reparameterization trick, we can apply gradient-based optimization methods, e.g., Adam \cite{kingma2014adam}, to update the model hyperparameters $\boldsymbol{\theta}$ and the variational parameters $\boldsymbol{\vartheta}$.

\begin{algorithm}[t]
\caption{Scalable RFLVMs (SRFLVMs)}
\begin{algorithmic}[1]
\REQUIRE $\boldsymbol{\theta}$
\WHILE{Not Converged}
    \WHILE{Not Converged}
        \STATE Sampling $q(\X)$ from $\prod_{n=1}^N\mathcal{N}\left(\mathbf{x}_n\!\mid\!\boldsymbol{\mu}_n,\boldsymbol{S}_n\right)$ using reparametrization trick; 
        \STATE Sampling $q(\W)$ from Eq.~\eqref{eq:variational_w} using reparametrization trick;
        \IF{Gaussian Likelihood}
            \STATE Evaluate the local ELBO with \eqref{eq:gaussian_elbo_estimate}, using the sampled $\X, \W$; 
        \ELSE
            \STATE Sample $\bA$, $\boldsymbol{\Omega}$ from posteriors in Theorem~\ref{theo:opt_a}. 
            \STATE  Evaluate the local ELBO with \eqref{eq:logistic_estima} using the sampled $\X, \W, \bA, \boldsymbol{\Omega}$;
        \ENDIF
        \STATE Update variational parameters $\boldsymbol{\vartheta}$ and model hyperparameters $\boldsymbol{\theta}$ using the Adam optimizer \cite{kingma2014adam}; 
    \ENDWHILE
    \WHILE{Not Converged}
        \STATE Evaluate the local ELBO (\eqref{eq:gaussian_elbo_estimate} for Gaussian and \eqref{eq:logistic_estima} for logistic) with sampled variational variables. 
        \STATE Update variational parameters $\boldsymbol{\vartheta}$ using the Adam optimizer \cite{kingma2014adam};
    \ENDWHILE
    \FOR{$k \in 1, ..., K$} 
            \STATE $q(v_k) \sim \operatorname{Beta}(1 + \sum_{l=1}^{L/2} \phi_{l, k}, \alpha +  \sum_{l=1}^{L/2} \sum_{j=k+1}^{K}  \phi_{l, j}) $; 
    \ENDFOR
    \STATE $q(\alpha) = \operatorname{Ga}(\alpha_0, \beta_0 - \sum_{k=1}^{K}  \Psi\left(b_{v_k}\right)-\Psi\left(a_{v_k}+b_{v_k}\right))$. 
\ENDWHILE
\ENSURE $\hat{\X}$ 
\end{algorithmic}
\label{alg:1}
\end{algorithm}

\subsection{Logistic Likelihood Block}
\label{subsec:Logistic}

The distinction in logistic likelihood from the Gaussian case lies in the introduction of stochastic weights \( \bA \) (see Eq.~\eqref{eq:RFF_approximation}), resulting in a likelihood term formed as \(p(\Y \vert \X, \W, \bA )\) and a manageable KL divergence term involving $\bA$. Assuming \( q(\ba_m) \) for all \( m \in \{1, ..., M\} \) as independent multivariate Gaussian variables and incorporating $\bA$ into the likelihood block, which now includes $\{ \W, \X, \bA \}$, the corresponding subproblem can be solved using the RGVI-based method.

Unfortunately, the empirical results reveal that using approximate inference on the stochastic weights $\bA$ significantly degrades the quality of learned kernel matrices, regardless of likelihood types (see Appendix.~\ref{app:motivation_for_PG}). To address this issue, we integrate Pólya-gamma (PG) augmentation \cite{polson2013bayesian} into SRFLVMs. By augmenting a set of PG distributed variables into the model, we can  transform the logistic likelihood into a quadratic form with respect to \( \bA \). This transformation ultimately yields a closed-form solution for the posterior of $\bA$, thereby improving both tractability and performance. 

\subsubsection{Pólya-gamma Distribution}

If $\omega \in \mathbb{R}$ follows a PG distribution (PGD) with parameters $b > 0$ and $c \in \mathbb{R}$, denoted as $\omega \sim \mathrm{PG}(b, c)$, then it is equivalent in distribution to an infinite weighted sum of Gamma:
\begin{equation}
    \omega \stackrel{d}{=} \frac{1}{2 \pi^2} \sum_{k=1}^{\infty} \frac{g_k}{(k-1 / 2)^2+c^2 /\left(4 \pi^2\right)},
    \label{eq:pg_cdf}
\end{equation}
where $\stackrel{d}{=}$ denotes equality in the cumulative distribution function (CDF), and $g_k \sim \operatorname{Ga}(b, 1), k \in 1, ..., \infty$ are independent Gamma random variables. Building on the equality established in Eq. \eqref{eq:pg_cdf}, several useful properties are proposed in \cite{polson2013bayesian}. 
First,
\begin{equation}
    \frac{\left(e^\psi\right)^a}{\left(1+e^\psi\right)^b}=2^{-b} e^{\kappa \psi} \int_0^{\infty} e^{-\omega \psi^2 / 2} p(\omega) \mathrm{d} \omega,
    \label{eq:pg}
\end{equation}
where $\kappa=a-b / 2$ and $p(\omega)=\operatorname{PG}(\omega \mid b, 0)$. And second,
\begin{equation}
    p(\omega \mid \psi) \sim \operatorname{PG}(b, \psi).
    \label{eq:pos_omega}
\end{equation} 
Given \(\omega\), the left-hand side  of Eq.~\eqref{eq:pg}, which resembles the logistic likelihood expression (Eq.~\eqref{eq:logistic_likelihood}), can be represented in a quadratic (or Gaussian) form with respect to \(\psi\). The corresponding posterior distribution of \(\omega\) is provided by Eq.~\eqref{eq:pos_omega}.

\subsubsection{PG Augmentation for Logistic Likelihood}
It is obvious that the established equivalence between the logistic likelihood and a Gaussian form in PGD potentially enhances tractability in the logistic case. To leverage this property, we first introduce a set of random variables \(\bm{\Omega} \triangleq \{\boldsymbol{\omega}_{n,m}\}\), where \(n \in \{1, \ldots, N\}\) and \(m \in \{1, \ldots, M\}\). Here, each variable follows the distribution \(\operatorname{PG}(b_{nm}, c_{nm})\). Then, we define \(\psi_{nm} = \bPhi_{n,:} \ba_{m}\), which will naturally distribute with Gaussian given \(\bm \Omega\) as previously mentioned, allowing us to analytically derive the true posterior of \(\bA\), as stated in the following theorem. 
\begin{theorem} \label{theo:opt_a}
 By leveraging the property of the PGD (given by Eq. \eqref{eq:pg}), in conjunction with the logistic likelihood term from Eq.~\eqref{eq:logistic_likelihood}, we can derive the optimal posterior distribution for \( \bA \) as:
  \begin{equation}
        p(\ba_m \mid \bPhi, \boldsymbol{\Omega}) = \mathcal{N} \left( \mathbf{m}_{m}, \mathbf{V}_{m}  \right), \quad  m \in 1, ..., M, 
  \end{equation}
    where 
    \begin{align*}
            & \boldsymbol{\Omega}_m\! =\!\operatorname{diag}\left(\left[\boldsymbol{\omega}_{1 m}, \ldots, \boldsymbol{\omega}_{N m}\right]\right)   & & \kappa_{nm} = a_{nm} - b_{nm} / 2    \\
            &\mathbf{V}_{m}=\left(\bPhi^{\top} \boldsymbol{\Omega}_m \bPhi+\mathbf{B}_0^{-1}\right)^{-1} & &\boldsymbol{\kappa}_m  = [ \kappa_{1m}, ..., \kappa_{nm} ]^{\top}  \\ 
            & \mathbf{m}_{m}=\mathbf{V}_{m}\left(\bPhi^{\top} \boldsymbol{\kappa}_m+\mathbf{S}_0^{-1} \mathbf{m}_0\right) 
    \end{align*}
    and the optimal posterior distribution for $\boldsymbol \omega_{nm}$ is given by 
    \begin{equation}
    p( \boldsymbol \omega_{nm} \mid \ba_m) = \operatorname{PG}\left(b_{nm}, \psi_{nm} \right).  
    \end{equation}
\end{theorem}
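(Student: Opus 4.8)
The plan is to introduce the Pólya-gamma latent variables as an auxiliary augmentation, show that \emph{conditioned on} them the logistic likelihood of $\ba_m$ reduces to a Gaussian kernel, and then read off the posterior by a standard conjugate (complete-the-square) computation. First I would write the full conditional of $\ba_m$ as proportional to the product of the Gaussian prior $p(\ba_m)=\N(\mathbf{m}_0,\mathbf{B}_0)$ (the base model uses $\mathbf{m}_0=\mathbf{0}$, $\mathbf{B}_0=\mathbf{I}_L$, and $\mathbf{S}_0$ denotes the same prior covariance) and the logistic factors $\prod_{n=1}^N (e^{\psi_{nm}})^{a_{nm}}/(1+e^{\psi_{nm}})^{b_{nm}}$ from Eq.~\eqref{eq:log_like}, where $\psi_{nm}=\bPhi_{n,:}\ba_m$. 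The obstacle at this stage is that each factor is a nonlinear sigmoid-type term in $\psi_{nm}$, so the conditional is not Gaussian as written.

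Next I would apply the PG identity of Eq.~\eqref{eq:pg} to each of the $N$ factors, rewriting $(e^{\psi_{nm}})^{a_{nm}}/(1+e^{\psi_{nm}})^{b_{nm}} = 2^{-b_{nm}} e^{\kappa_{nm}\psi_{nm}}\int_0^\infty e^{-\omega_{nm}\psi_{nm}^2/2}\,p(\omega_{nm})\,\mathrm{d}\omega_{nm}$ with $\kappa_{nm}=a_{nm}-b_{nm}/2$. This exhibits the augmented joint $p(\ba_m,\boldsymbol{\omega}_{:,m})$; upon conditioning on $\boldsymbol{\Omega}_m=\mathrm{diag}([\omega_{1m},\ldots,\omega_{Nm}])$ the integral drops out and the only $\ba_m$-dependent contribution from the likelihood is $\exp\{\sum_{n}\kappa_{nm}\psi_{nm}-\tfrac12\sum_n\omega_{nm}\psi_{nm}^2\}$. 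The crucial point is that this exponent is now \emph{quadratic} in $\psi_{nm}$, and since $\psi_{nm}=\bPhi_{n,:}\ba_m$ is linear in $\ba_m$, the conditional of $\ba_m$ is Gaussian.

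I would then vectorize, using $\sum_n\kappa_{nm}\psi_{nm}=\boldsymbol{\kappa}_m^\top\bPhi\ba_m$ and $\sum_n\omega_{nm}\psi_{nm}^2=\ba_m^\top\bPhi^\top\boldsymbol{\Omega}_m\bPhi\ba_m$, where $\boldsymbol{\kappa}_m=[\kappa_{1m},\ldots,\kappa_{Nm}]^\top$. Adding the prior term $-\tfrac12(\ba_m-\mathbf{m}_0)^\top\mathbf{B}_0^{-1}(\ba_m-\mathbf{m}_0)$, the log-conditional becomes $-\tfrac12\ba_m^\top(\bPhi^\top\boldsymbol{\Omega}_m\bPhi+\mathbf{B}_0^{-1})\ba_m + \ba_m^\top(\bPhi^\top\boldsymbol{\kappa}_m+\mathbf{B}_0^{-1}\mathbf{m}_0) + \text{const}$. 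Matching this quadratic to a multivariate Gaussian gives covariance $\mathbf{V}_m=(\bPhi^\top\boldsymbol{\Omega}_m\bPhi+\mathbf{B}_0^{-1})^{-1}$ and mean $\mathbf{m}_m=\mathbf{V}_m(\bPhi^\top\boldsymbol{\kappa}_m+\mathbf{B}_0^{-1}\mathbf{m}_0)$, which is exactly the claimed $p(\ba_m\mid\bPhi,\boldsymbol{\Omega})$. For the augmented variables, I would invoke the conditional property Eq.~\eqref{eq:pos_omega} directly with $b=b_{nm}$ and $\psi=\psi_{nm}$ to obtain $p(\omega_{nm}\mid\ba_m)=\operatorname{PG}(b_{nm},\psi_{nm})$.

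I expect the main obstacle to be conceptual rather than computational: verifying that the PG representation genuinely linearizes the problem, i.e., that conditioning on $\boldsymbol{\Omega}$ produces an exponent that is exactly quadratic in $\psi_{nm}$ so that the linear feature map transports it to a Gaussian in $\ba_m$, and carefully isolating which terms (the $2^{-b_{nm}}$ factors, $p(\omega_{nm})$, and data-dependent constants) are absorbed into the normalizing constant when conditioning. Once the quadratic form is correctly assembled in matrix notation, completing the square and identifying $(\mathbf{m}_m,\mathbf{V}_m)$ is routine conjugate algebra.
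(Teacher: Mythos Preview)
Your proposal is correct and follows essentially the same route as the paper's proof: apply the PG identity to each logistic factor, condition on $\boldsymbol{\Omega}$ to obtain an exponent quadratic in $\psi_{nm}=\bPhi_{n,:}\ba_m$, then combine with the Gaussian prior and complete the square to read off $(\mathbf{m}_m,\mathbf{V}_m)$, invoking Eq.~\eqref{eq:pos_omega} for $p(\omega_{nm}\mid\ba_m)$. The only cosmetic difference is that the paper rewrites the per-observation factor as $\exp\{-\tfrac{\omega_{nm}}{2}(\psi_{nm}-z_{nm})^2\}$ with $z_{nm}=\kappa_{nm}/\omega_{nm}$ before stating the result, whereas you keep the linear-plus-quadratic form and vectorize directly; both lead to the same Gaussian posterior.
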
 
\begin{proof}
    The proof can be found in Appendix \ref{app:theo_3}. 
\end{proof}
\noindent Based on the result given by Theorem \ref{theo:opt_a}, we set the predefined family for $\bA, \boldsymbol{\Omega}$ to their true posteriors, that is, 
\begin{equation}
    q(\bA, \boldsymbol \Omega) = p(\bA \mid \bPhi, \boldsymbol{\Omega}) p(\boldsymbol{\Omega} \mid \bA),  
\end{equation}
in order to enhance the inference accuracy of the VBI. Moreover, during the iteration of optimization, these two posterior distributions will also evolve with the iterations, because they are dependent on other variational variables. For the remaining variational variables, the settings of predefined family are same as to those used in the Gaussian case (refer to Eq.~\eqref{eq:gaussian_variational}). 

In the logistic likelihood block, we need fine-tuning the free variational parameters \(\boldsymbol{\vartheta} = \{ \boldsymbol{\mu}_n, \mathbf{S}_n \}_{n=1}^{N}\) and model hyperparameters \(\boldsymbol{\theta}\), which is achieved by optimizing the following subproblem,  
\begin{equation}
\begin{aligned}
    \max_{\boldsymbol{\vartheta}, \boldsymbol{\theta}} \ & \underbracket{ \mathbb{E}_{q(\X, \bA, \mathbf{W} )} \log p(\Y \vert \X, \bA, \mathbf{W})}_{\text{term (a)}} \!-\! \underbracket{ \operatorname{KL}(q(\X) \| p(\X))}_{\text{term (b)}}  \\ 
    & - \underbracket{ \mathbb{E}_{q(\mathbf{W}, \mathbf{X}, \boldsymbol{\Omega})}\left[ \operatorname{KL}(p(\bA \vert \bm{\Phi}, \boldsymbol{\Omega}) \| p(\bA))  \right]}_{\text{term (c)}}  \\
    &-\underbracket{\mathbb{E}_{q(\bA)} \left[ \operatorname{KL} \left( p(\boldsymbol{\Omega} \mid \bA) \|  q(\boldsymbol{\Omega}) \right) \right]}_{\text{term (d)}}.  
\end{aligned} 
\label{eq:log_ELBO}
\end{equation}
The term (a) represents the data reconstruction error, which encourages accurate reconstruction of the observed data $\Y$
using any sampled variational variables $\X, \W$, and $\bA$ from their respective variational distributions. The subsequent terms serve as regularization for the variational distributions, penalizing significant divergence from their corresponding prior distributions.

Similar to the Gaussian case, instead of using the intractable MFVI-based approach, we evaluate the local ELBO, or the objective function (see Eq.~\eqref{eq:log_ELBO}), by following the principle of the RGVI-based method. Specifically, we analytically evaluate term (b) given the Gaussian nature of the underlying distributions. For term (d), we omit the intractable KL divergence, which appears to have minimal impact on the latent variable inference, as evidenced by the comparable performance with RFLVMs demonstrated in Section~\ref{subsec:text_image}. The terms (a) and (c) are numerically evaluated using MC estimation, as follows:
\begin{align}
    \text{(a)}&\approx 
    \sum_{m=1}^{M} \sum_{n=1}^N \Bigg\{ \log(c\left(y_{n m}\right))  
    + \sum_{i=1}^{I} \Big[    a(y_{nm}) \bPhi_{n, :}^{(i)}~\ba_m^{(i)} ] \nonumber \\ & - b(y_{nm}) \log \left( 1 + \exp \left( \bPhi_{n, :}^{(i)}~ \ba_m^{(i)} \right) \right) \Big] \Bigg\} \label{eq:logistic_estima}  \\ 
    \text{(c)}&\approx \frac{1}{2} \sum_{m=1}^M \sum_{i=1}^{I} \left[ -\log |\mathbf{V}_{m}^{(i)}| + \operatorname{Tr}(\mathbf{V}_{m}^{(i)}) + (\mathbf{m}_{m}^{(i)})^{\top} \mathbf{m}_{m}^{(i)} \right]. \nonumber
\end{align}
where $I$ represents the number of MC samples drawn from the variational distribution of $\W, \X, \bA $, and $\boldsymbol{\Omega}$. Thanks to the reparameterization trick, we can apply gradient-based optimization methods to update the model hyperparameters $\boldsymbol{\theta}$ and the variational parameters $\boldsymbol{\vartheta}$. 


\subsection{DP Mixture Kernel Blocks}
\label{subsec:dp_module}

\subsubsection{\textbf{Estimation of \texorpdfstring{$\z$}{z}}}

For the block $\{ \z \}$ under the Gaussian likelihood, we need to fine-tune the free variational parameters $\bm \vartheta = \{ \phi_{l, k} \}_{k=1, l=1}^{K, L/2}$ by solving the following subproblem, 
\begin{equation}
\max_{\boldsymbol{\vartheta}} \ \mathbb{E}_{q(\X, \W, \mathbf{z}, \bv) }\!\left[  \log p(\Y \vert \X,\!\W)\right]\!-\!\operatorname{KL}\!\left[ q(\z) \| p(\z \vert \bv) \right]. 
\label{eq:local_elbo_z}
\end{equation}
To address this optimization problem, we first identified that block $\{ \mathbf{z} \}$ does not satisfy the fully independent assumption specified in Theorem \ref{theo:mean-field}, leading to an intractable optimal solution. Therefore, we evaluate the local ELBO by following the principle of the RGVI-based approach. Specifically, the first term is estimated by a MC estimator as demonstrated in Eq.~\eqref{eq:gaussian_elbo_estimate}, while the second term is analytically computed as follows:
\begin{equation}
\label{Eq:opt_z}
\begin{aligned}
    &\sum_{l=1}^{L/2} \Big\{ \sum_{k=1}^{K} \phi_{l, k}^2 \big\{ \mathbb{E}_{q(\bv_k)} \left[ \log \bv_k \right] + \sum_{j=1}^{k} \mathbb{E}_{q(\bv_j)} \left[ \log( 1 - \bv_j ) \right] \big\} \\ 
    &\quad \quad - \phi_{l, k} \log \phi_{l, k} \Big\}. 
\end{aligned}
\end{equation}
Here, the expectations are given by:
\begin{align}
    \mathbb{E}_{q(\bv_j)}\left[  \log(1-\bv_j) \right] &=  \Psi\left(b_{\bv_j}\right)-\Psi\left(a_{\bv_j}+b_{\bv_j}\right),
    \\ 
    \mathbb{E}_{q(\bv_k)} \left[ \log \bv_k \right] &= \Psi\left(a_{\bv_k}\right)-\Psi\left(a_{\bv_k}+b_{\bv_k}\right), 
\end{align}
where $\Psi$ denotes the digamma function. Due to the differentiability of samples using the reparameterization trick w.r.t. variational parameters, we can update the variational parameters $\boldsymbol{\vartheta}$ using gradient descent-based methods \cite{kingma2014adam}. For the logistic likelihood, the primary distinction lies in the first term of Eq.~\eqref{eq:local_elbo_z} (or the likelihood term), which is evaluated in Eq.~\eqref{eq:logistic_estima}. The following subsections will examine the inference processes for the blocks $\{ \bv \}$ and $\{ \alpha \}$, which not only satisfy the fully independent assumption but also exhibit a conjugate structure between the prior and the conditional likelihood \cite{bishop2006pattern}, leading to a feasible optimal solution.

\subsubsection{\textbf{Estimation of \texorpdfstring{$\bv$}{v}}}
According to Theorem \ref{theo:mean-field}, the optimal solution for $\log q(\mathbf{v})$ is expressed as:
\begin{equation}
    \begin{aligned}
         & \mathbb{E}_{q(\mathbf{z})} \Big[ \log \Big( \prod_{k=1}^{K} \big(1-\bv_{k}\big)^{\mathbf{1}\left[\z_l > k\right]} \bv_{k}^{\mathbf{1}\left[\z_l = k\right]} \Big) \Big] + \text{const} \label{eq:v} \\
        & \quad + \mathbb{E}_{q(\alpha)} \Big[ \log \prod_{k=1}^{K} \frac{1}{B(1, \alpha)} (1 - \bv_k)^{\alpha-1} \Big],
    \end{aligned}
\end{equation}
where $B(1, \alpha)$ represents the Beta function. Upon examining Eq.~\eqref{eq:v}, each $q(\bv_k)$ follows a Beta distribution, $\operatorname{Beta}(a_{\bv_k}, b_{\bv_k})$, with parameters defined as:
\begin{equation}
     a_{\bv_k} = 1 + \sum_{l=1}^{L/2} \phi_{l, k}, \quad
     b_{\bv_k} = \alpha +  \sum_{l=1}^{L/2} \sum_{j=k+1}^{K}  \phi_{l, j}
        \label{update-v}, 
\end{equation}
where $\alpha$ is experimentally set to be the expectation of $\alpha$, w.r.t. its variational distribution.

\subsubsection{\textbf{Estimation of \texorpdfstring{$\alpha$}{alpha}}}

According to Theorem \ref{theo:mean-field}, the optimal solution for $\log q(\alpha)$ is given by 
\begin{align}
        \Big( \sum_{k=1}^{K} \mathbb{E}_{q(\bv_k)} \left[  \log (1-\bv_k) \right] - \beta_0 \Big) \alpha
       + ( \alpha_0 - 1 ) \log(\alpha) + \text{const.}  \nonumber
\end{align}
It can be observed that $q(\alpha)$ follows a Gamma distribution, $\operatorname{Ga}(a_{\alpha}, b_{\alpha})$, with parameters given by:
\begin{align}
       &  a_{\alpha} = \alpha_0, 
       & b_{\alpha} = \beta_0 - \sum_{k=1}^{K} \mathbb{E}_{\bv_k} \left[ \log(1-\bv_k) \right], 
       \label{upate-alpha}
\end{align}
where $\mathbb{E}_{\bv_k}\left[  \log(1-\bv_k) \right] =  \Psi\left(b_{\bv_k}\right)-\Psi\left(a_{\bv_k}+b_{\bv_k}\right)$.

\begin{table*}[t!]
    \caption{An overview of relevant LVMs, with $N$ representing the total observations and $M$ indicating the observation dimensions. Additionally, $U$ refers to the count of inducing points, and $L$ denotes the dimension of random features.
        } \label{table:comparison}
    \centering
    \vspace{-2ex}
    \resizebox{0.94\textwidth}{!}{%
        \begin{tabular}{lccccccc}
        \toprule
            Model 
            & \begin{tabular}[c]{@{}c@{}} Scalable \\ model  \end{tabular} 
            & \begin{tabular}[c]{@{}c@{}} Advanced \\ kernel \end{tabular} 
            & \begin{tabular}[c]{@{}c@{}} Probabilistic \\ mapping  \end{tabular} 
            & \begin{tabular}[c]{@{}c@{}} Bayesian  inference \\ of latent variables   \end{tabular}  
            & \begin{tabular}[c]{@{}c@{}} Computational \\ complexity \end{tabular} 
            & \begin{tabular}[c]{@{}c@{}} Data Likelihood  \end{tabular}  
            & \begin{tabular}[c]{@{}c@{}} Reference  \end{tabular} 
           \\
            \midrule
            \textsc{vae}    
            & \cmark                       
            & -                  
            &  \xmark       
            & \cmark         
            & -                 
            & Gaussian                
            & \cite{kingma2013auto}  \\
           \textsc{nbvae}  
            & \cmark                  
            & -                  
            & \xmark & \cmark   
            & -              
            & Negative Binomial             
            & \cite{zhao2020variational}  \\
             \textsc{cvq}-\textsc{vae}       
            & \cmark                   
            & -                 
            & \xmark  & \cmark         
            & -                     
            & Gaussian                  
            & \cite{zheng2023online}  \\
            \textsc{gplvm}   
            & \xmark  
            & \xmark    
            & \cmark     
            & \xmark     
            & $\mathcal{O}(N^3)$                         
            & Gaussian    
            & \cite{lawrence2005probabilistic}   \\
             \textsc{bgplvm}                      
            & \cmark       
            & \xmark                    
            & \cmark                    
            & \cmark                          
            & $\mathcal{O}(NU^2)$                        
            & Gaussian                                                  
            & \cite{titsias2010bayesian}     \\
              \textsc{gplvm}-\textsc{svi}               
            & \cmark      
            & \xmark             
            & \cmark                    
            & \cmark   
            & $\mathcal{O}(MU^3)$   
            & Any    
            & \cite{lalchand2022generalised}   \\
             \textsc{rflvm}      
            & \xmark                 
            & \cmark                  
            & \cmark      
            & \xmark            
            & $\mathcal{O}(NL^2)$    
            & Any   
            & \cite{gundersen2021latent}   \\
            \midrule
             \textsc{srflvm}        
            & \cmark                                 
            & \cmark & \cmark                
            & \cmark      
            & $\mathcal{O}(N L^2)$    
            & Any              
            & \textbf{This work} \\
            \bottomrule
        \end{tabular}}
    \vspace{-.1in}
\end{table*}

\begin{figure*}[t!]
    \centering
    \subfloat[Manifold and Kernel Matrix Learning]{\includegraphics[width=0.49\linewidth]{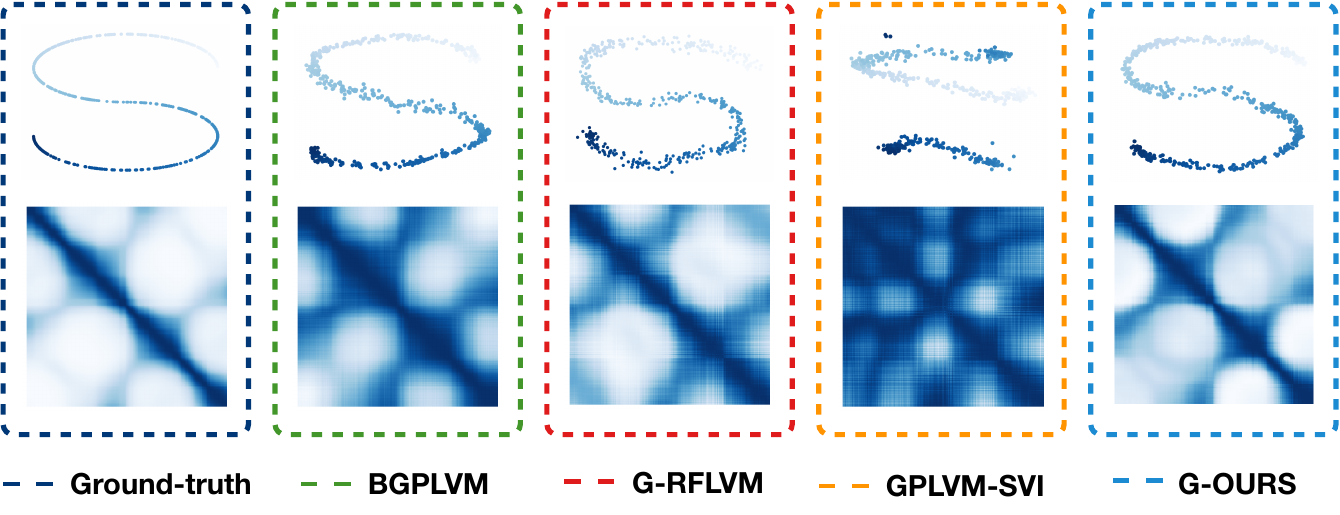}
    \label{subfig:toy_example_left}
    }
    \hspace{0.2in}
    \subfloat[Wall-Time]{\includegraphics[width=0.40\linewidth]{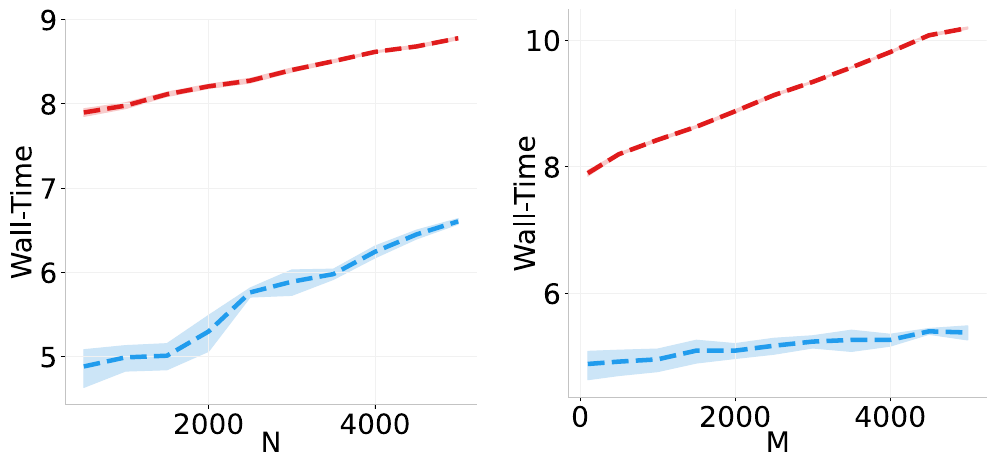}
    \label{subfig:toy_example_right}
    }~
    \vspace{-0.12in}
    \caption{\textbf{(a)} Comparison of true latent variable $\mathbf{X}$ or true kernel matrix $\K$ with inferred latent variables $\hat{\mathbf{X}}$ or inferred kernel matrices $\hat{\K}$ obtained from various methods. \textbf{(b)} Logarithm of wall-time for model fitting plotted against $N$ or $M$.}
    \label{fig:toy_example}
\end{figure*}

\section{Simulation Results}
\label{sec:results}

In this section, we show the efficacy of the proposed SRFLVMs on synthetic datasets (Section~\ref{subsec:toy_example}), real-world datasets (Section~\ref{subsec:text_image}) and missing data imputation (Section~\ref{subsec:missing_data}). Additionally, to illustrate the scalability of SRFLVMs, we conduct comprehensive comparisons on larger datasets in Section~\ref{subsec:large_scale}. In this section, model prefixes such as 'G', 'N', 'B' denote Gaussian, negative binomial, and Bernoulli likelihoods, respectively. For further experimental details, please refer to Appendix~\ref{app:experiment}.

\begin{table*}[t] 
\centering
\caption{Classification accuracy evaluated by fitting a KNN classifier ($K=1$) with five-fold cross validation. Mean accuracy and standard deviation were computed by running with each experiments five times. For each dataset, we bold the value of the best KNN accuracy and underline the second-best. }
\vspace{-0.1in}
\scalebox{0.94}{
\begin{tabular}{c cccccccc}
\toprule
        \textsc{dataset}   & PCA               & LDA               & Isomap            & NMF   & HPF                      & G-RFLVM   &  N-RFLVM  &  B-RFLVM          \\ \midrule \midrule
\rowcolor[HTML]{f2f2f2}  \textsc{bridges}    & 0.841 $\pm$ 0.007 & 0.668 $\pm$ 0.053 & 0.797 $\pm$ 0.025 & \textbf{{0.864 $\pm$ 0.015}} & 0.544 $\pm$ 0.109  & 0.846 $\pm$ 0.039 & 0.850 $\pm$ 0.011 & 0.839 $\pm$ 0.014 
\\
\textsc{cifar-10}   & 0.267 $\pm$ 0.002 & 0.227 $\pm$ 0.006 & 0.272 $\pm$ 0.006 & 0.246 $\pm$ 0.006 & 0.208 $\pm$ 0.006  & \textbf{0.284 $\pm$ 0.103} & 0.260 $\pm$ 0.004 & 0.268 $\pm$ 0.004 
\\
\rowcolor[HTML]{f2f2f2}  \textsc{mnist}      & 0.365 $\pm$ 0.012 & 0.233 $\pm$ 0.026 & 0.444 $\pm$ 0.021 & 0.269 $\pm$ 0.018 & 0.314 $\pm$ 0.040 & {\underline{0.602 $\pm$ 0.055}} & 0.441 $\pm$ 0.011 & 0.361 $\pm$ 0.019 
\\
\textsc{montreal}   & 0.678 $\pm$ 0.013 & 0.602 $\pm$ 0.028 & 0.709 $\pm$ 0.005 & 0.678 $\pm$ 0.014 & 0.618 $\pm$ 0.001  &  0.769 $\pm$ 0.010 & \textbf{0.826 $\pm$ 0.015} & 0.744 $\pm$ 0.004
\\
\rowcolor[HTML]{f2f2f2}  \textsc{newsgroups} & 0.392 $\pm$ 0.005 & 0.391 $\pm$ 0.018 & 0.397 $\pm$ 0.010 & 0.393 $\pm$ 0.004 & 0.334 $\pm$ 0.019 & \textbf{0.413 $\pm$ 0.009} & 0.405 $\pm$ 0.006 & 0.386 $\pm$ 0.007 
\\
\textsc{yale}       & 0.543 $\pm$ 0.008 & 0.338 $\pm$ 0.023 & 0.588 $\pm$ 0.017 & 0.479 $\pm$ 0.022 & 0.511 $\pm$ 0.019  & \textbf{0.653 $\pm$ 0.067} & 0.529 $\pm$ 0.026 & 0.527 $\pm$ 0.010 
\\ \midrule 
       \textsc{dataset}    & VAE               &  CVQ-VAE               & NBVAE             & BGPLVM    & GPLVM-SVI       & G-OURS &     N-OURS  &  B-OURS           \\ \midrule \midrule
\rowcolor[HTML]{f2f2f2}  
\textsc{bridges}    & 0.751 $\pm$ 0.016 & 0.688 $\pm$ 0.013 & 0.758 $\pm$ 0.038 & 0.818 $\pm$ 0.037 & 0.796 $\pm$ 0.019  & {\underline{0.851 $\pm$ 0.017}} &   0.846 $\pm$ 0.019    & 0.833 $\pm$ 0.022             
\\
\textsc{cifar-10}   & 0.266 $\pm$ 0.002 &  0.224 $\pm$ 0.012 & 0.259 $\pm$ 0.005 & 0.271 $\pm$ 0.014 &  0.251 $\pm$ 0.012  & {\underline{0.279 $\pm$ 0.010}} &  0.258 $\pm$ 0.008  & 0.262 $\pm$ 0.006               
\\
\rowcolor[HTML]{f2f2f2}  \textsc{mnist}      & \textbf{0.643 $\pm$ 0.021} &  0.128 $\pm$ 0.005 & 0.281 $\pm$ 0.012 & 0.342 $\pm$ 0.074 &  0.344 $\pm$ 0.054 & {{0.584 $\pm$ 0.042}} &    0.408 $\pm$ 0.008  & 0.369 $\pm$ 0.008            
\\
\textsc{montreal}   & 0.668 $\pm$ 0.012 & 0.646 $\pm$ 0.003 & 0.716 $\pm$ 0.009 & 0.725 $\pm$ 0.012 & 0.676 $\pm$ 0.010 & {0.744 $\pm$ 0.026} & {\underline{0.781 $\pm$ 0.010}}   & 0.722 $\pm$ 0.010               
\\
\rowcolor[HTML]{f2f2f2}  \textsc{newsgroups} & 0.385 $\pm$ 0.002 & 0.356 $\pm$ 0.019 & 0.398 $\pm$ 0.010 & 0.385 $\pm$ 0.010 &  0.378 $\pm$ 0.018  & {\underline{0.411 $\pm$ 0.008}} &  0.391 $\pm$ 0.005   & 0.398 $\pm$ 0.009 \\
\textsc{yale}       & 0.611 $\pm$ 0.020 & 0.338  $\pm$ 0.002  & 0.456 $\pm$ 0.046 &  0.553 $\pm$ 0.036  & 0.521 $\pm$ 0.015 & {\underline{0.635 $\pm$ 0.024}} &  0.549 $\pm$ 0.014 & 0.487 $\pm$ 0.011  \\\bottomrule          
\end{tabular}
\label{table:1}
} \vspace{-.1in}
\end{table*}
\begin{table*}[ht] 
\centering
\caption{Wall-time in seconds for model fitting. Mean and standard error were computed by running each experiment five times.}
\vspace{-0.1in}
\scalebox{0.81}{
\begin{tabular}{ccccccccc}
\toprule
           & PCA                 & LDA                & Isomap             & NMF  & HPF                                & G-RFLVM    & N-RFLVM & B-RFLVM         \\ \midrule \midrule
\rowcolor[HTML]{f2f2f2} \textsc{bridges}    & 0.008 $\pm$ 0.002   & 1.686 $\pm$ 0.011  & 0.019 $\pm$ 0.007  & 0.014 $\pm$ 0.004  & 0.031 $\pm$ 0.004  & 679.073 $\pm$ 2.094  & 1578.207 $\pm$ 59.981 & 1397.628 $\pm$ 54.404
\\
\textsc{cifar-10}   & 0.070 $\pm$ 0.004   & 21.675 $\pm$ 0.050  & 1.367 $\pm$ 0.034  & 0.363 $\pm$ 0.005 & 140.168 $\pm$ 0.898   & 9011.504 $\pm$ 3.445 & 26159.991 $\pm$ 63.597 & 6140.220 $\pm$ 207.333 
\\
\rowcolor[HTML]{f2f2f2} \textsc{mnist}      & 0.056 $\pm$ 0.010   & 6.708 $\pm$ 0.317  & 0.372 $\pm$ 0.017  & 0.161 $\pm$ 0.019 & 54.295 $\pm$ 0.462  & 9151.876 $\pm$ 7.758  & 13165.206 $\pm$ 130.369 & 6149.416 $\pm$ 69.432
\\
\textsc{montreal}   & 0.010 $\pm$ 0.003   & 2.285 $\pm$ 0.212  & 0.025 $\pm$ 0.004  & 0.019 $\pm$ 0.003 &  0.105 $\pm$ 0.002  & 485.678 $\pm$ 0.615  & 1844.815 $\pm$ 47.995 &  1493.150 $\pm$ 49.798
\\
\rowcolor[HTML]{f2f2f2} \textsc{newsgroups} & 0.056 $\pm$ 0.003   & 3.355 $\pm$ 0.186  & 1.758 $\pm$ 0.037  & 0.063 $\pm$ 0.003 & 0.257 $\pm$ 0.006 & 1763.735 $\pm$ 2.923  & 2697.096 $\pm$ 68.233 & 2236.637 $\pm$ 29.174
\\
\textsc{yale}       & 0.017 $\pm$ 0.004   & 3.123 $\pm$ 0.019  & 0.017 $\pm$ 0.006  & 0.102 $\pm$ 0.005 & 5.112 $\pm$ 0.027  & 4432.849 $\pm$ 2.301  & 11085.157 $\pm$ 32.951 & 3848.943 $\pm$ 127.506
\\
 \midrule
           & VAE                 & CVQ-VAE                & NBVAE              & BGPLVM   & GPLVM-SVI             & G-OURS   &  N-OURS  &  B-OURS                 \\ \midrule \midrule 
\rowcolor[HTML]{f2f2f2} \textsc{bridges}    & 4.802 $\pm$ 0.131  & 8.858 $\pm$ 2.238  & 2.787 $\pm$ 0.065  & 2.473 $\pm$ 0.850  & 24.083 $\pm$ 0.274 & 103.550 $\pm$ 0.296  &  78.630 $\pm$ 5.176 & 143.768 $\pm$ 11.651                   
\\
\textsc{cifar-10}   & 130.226 $\pm$ 1.072 & 12.342 $\pm$ 0.155 & 49.932 $\pm$ 1.247 &  18.984 $\pm$ 0.147 & 72134.541 $\pm$ 13.921  & 130.710 $\pm$ 1.433 &  575.494 $\pm$ 24.201 & 574.890 $\pm$ 15.192            
\\
\rowcolor[HTML]{f2f2f2} \textsc{mnist}      & 93.737 $\pm$ 0.754  & 7.854 $\pm$ 1.691  & 43.766 $\pm$ 1.628 & 8.715 $\pm$ 5.662 & 35776.171 $\pm$ 43.995 &  111.560 $\pm$ 0.694 &  719.091 $\pm$ 62.666 & 719.651 $\pm$ 3.250                   
\\
\textsc{montreal}   & 7.106 $\pm$ 0.033   & 8.993 $\pm$ 0.414  & 3.250 $\pm$ 0.043  & 3.180 $\pm$ 0.027 & 65.347 $\pm$ 4.550  & 44.631 $\pm$  0.375  &  173.163 $\pm$ 4.380  & 133.205 $\pm$ 1.812                  
\\
\rowcolor[HTML]{f2f2f2} \textsc{newsgroups} & 57.949 $\pm$ 0.478  & 10.817 $\pm$ 1.244 & 18.986 $\pm$ 0.100 & 4.483 $\pm$ 3.331 & 7135.885 $\pm$ 48.387 & 81.730 $\pm$ 0.385 &  283.602 $\pm$ 6.276  & 131.833 $\pm$ 7.838               
\\
\textsc{yale}       & 33.548 $\pm$ 0.334  & 13.464 $\pm$ 3.356 & 17.461 $\pm$ 0.739 & 5.402 $\pm$ 2.297 & 2343.803 $\pm$ 6.988 &  185.611 $\pm$ 0.714 &  848.532 $\pm$ 50.485 & 857.537 $\pm$ 50.530      \\ \bottomrule                        
\end{tabular}
\label{table:2}
}\vspace{-.1in}
\end{table*}

\vspace{-0.12in}
\subsection{2-D $S$-shape Manifold Estimation }
\label{subsec:toy_example}

In this subsection, we demonstrate the superior performance and computational efficiency of the proposed SRFLVMs across various synthetic datasets, utilizing varied settings for $N$ and $M$ (with default values of $N = 500$ observations and $M = 100$  dimensions). These datasets are synthesized from a GPLVM featuring a two-dimensional ($2$-D) latent $S$-shaped manifold, employing a combination of an RBF kernel and a periodic kernel \cite{williams2006gaussian}. Detailed kernel configurations are provided in Appendix~\ref{app:ds_preprocessing}.  

The results for $S$-shaped manifold learning and kernel learning across various GPLVM variants are presented at Fig.~\ref{fig:toy_example}. We compare RFLVM-based methods with two inducing point-based methods—Bayesian GPLVM (BGPLVM) \cite{titsias2010bayesian} and GPLVM with stochastic variational inference (GPLVM-SVI) \cite{ramchandran2021latent}; additional details can be found in Table~\ref{table:comparison} and Appendix \ref{sec:related_work}. As depicted in the left-hand side of Fig. \ref{subfig:toy_example_left}, the latent representations and kernel matrices learned with RFLVM-based methods more closely align with the ground-truth latent manifold and the kernel matrix compared to those obtained with BGPLVM and GPLVM-SVI, indicating the superior performance of RFLVM-based methodologies. This performance improvement can be attributed to the DP mixture kernel of RFLVM-based methods, which offer enhanced kernel flexibility compared to the RBF kernel used in BGPLVM and GPLVM-SVI. Additionally, the comparable performance of SRFLVMs and RFLVMs suggests that employing the approximate inference method, i.e., VBI, does not significantly affect manifold learning when compared to the exact inference, i.e., MCMC. 

\vspace{-0.02in}
However, there is a notable discrepancy in scalability between the two inference methods. To elucidate this point, we illustrate the variation in wall-time for model learning with respect to $N$ and $M$ on the right side of Fig.~\ref{subfig:toy_example_right}. The results show that the wall-times for RFLVMs are consistently and significantly higher than those of our proposed methods across all values of $N$ and $M$, with a faster increase as $M$ grows. In summary, our proposed \srflvm~not only  demonstrate superiority in manifold learning but also exhibits enhanced computational efficiency compared to RFLVMs.

\vspace{-0.12in}
\subsection{Real-World Data Evaluation}
\label{subsec:text_image}

This subsection further showcases the capability and computational efficiency of our proposed method in capturing latent space across multiple real-world datasets. Due to the high computational demands of RFLVMs, running them on the full \textsc{mnist} and \textsc{cifar} datasets within a reasonable time frame is infeasible. Therefore, for a fair comparison with RFLVMs, we have reduced the sizes of these datasets, as detailed in the dataset description in Appendix~\ref{app:ds_preprocessing}. For each dataset, we retain the labels and utilize them to evaluate the estimated latent space using a \textsc{knn} classifier with five-fold cross-validation. In addition to the GPLVM variants utilized in Section~\ref{subsec:toy_example}, we also consider various recent VAEs \cite{kingma2013auto, zhao2020variational, zheng2023online} and classic dimensionality reduction methods as competing methods and the implementation detail for them could be found in Appendix~\ref{app:benchmark}. The \textsc{knn} classification accuracy results for all competing methods, along with their corresponding wall-times for model fitting, are presented in Table~\ref{table:1} and Table~\ref{table:2}, respectively. 

\vspace{-0.02in}
As shown in Table \ref{table:1}, our G-OURS and G-RFLVM achieve the highest KNN accuracy across most datasets when compared with other methods. Notably, on the \textsc{montreal} dataset, models employing negative binomial likelihood perform significantly better than those using Gaussian likelihood, with N-OURS and N-RFLVM demonstrating optimal performance. These results reveal that different datasets vary in their suitability for different likelihoods, underscoring the importance of model adaptability to multiple likelihoods. They also indicate that the latent variables estimated by RFLVM-based methods are more informative compared to alternative approaches. 

\vspace{-0.02in}
Clearly, among the five classic methods—PCA \cite{roweis1997algorithms}, NMF \cite{lee1999learning}, LDA \cite{blei2003latent}, HPF \cite{gopalan2015scalable}, and Isomap \cite{balasubramanian2002isomap}—showing lower \textsc{KNN} accuracy, primarily attributable to their constrained model flexibility.  For VAE-based models, despite their remarkable representational capabilities, they are often affected by a phenomenon known as "posterior collapse", wherein they struggle to capture essential information from observational data \cite{qian2022learning, shao2021controlvae}. Overfitting is a significant factor contributing to posterior collapse, as it leads deterministic neural networks to misleading latent spaces \cite{bowman2015generating, sonderby2016train}. Consequently, this phenomenon is more pronounced in the smaller datasets that we used to adapt to the high computational demands of RFLVMs. Therefore, extra comparisons on larger datasets will be presented in Section~\ref{subsec:large_scale} for a comprehensive evaluation. 

In contrast, the regularization imposed by the GP prior helps prevent overfitting \cite{wilson2020bayesian}, thereby enhancing the generalization capability of latent space learning and resulting in higher \textsc{knn} accuracy even with limited dataset sizes. Within the GPLVM variants, SRFLVMs and RFLVMs exhibit comparable performance, both surpassing BGPLVM and GPLVM-SVI, attributed to their enhanced kernel flexibility and reduced optimization burden. However, as shown in Table~\ref{table:2}, the average wall-times for RFLVMs are significantly higher—often by an order of magnitude—compared to our proposed methods across all datasets. This outcome underscores the substantial improvement in computational efficiency achieved by our proposed approach, making it viable for application in the big data era. 

\begin{table*}[t!]
\centering
\caption{KNN classification accuracy with different $k$ values on larger datasets. We ran this classification using 5-fold cross validation. 
} \label{tab:large_datasets}
\vspace{-0.1in}
\scalebox{.87}{
\begin{tabular}{@{}l || lllllllll | lllllllll@{}}
\toprule
\multicolumn{1}{c||}{\textsc{methods}}                   & \multicolumn{9}{c|}{VAE}                                 & \multicolumn{9}{c}{G-OURS}                 \\ 
\multicolumn{1}{c||}{$k$-\textsc{value}}   & \multicolumn{1}{c}{2} & \multicolumn{1}{c}{3} & \multicolumn{1}{c}{4} & \multicolumn{1}{c}{5} & \multicolumn{1}{c}{6} & \multicolumn{1}{c}{7} & \multicolumn{1}{c}{8} & \multicolumn{1}{c}{9} & \multicolumn{1}{c|}{10} & \multicolumn{1}{c}{2} & \multicolumn{1}{c}{3} & \multicolumn{1}{c}{4} & \multicolumn{1}{c}{5} & \multicolumn{1}{c}{6} & \multicolumn{1}{c}{7} & \multicolumn{1}{c}{8} & \multicolumn{1}{c}{9} & \multicolumn{1}{c}{10} 
\\ \midrule\midrule 
 \rowcolor[HTML]{f2f2f2} 
 \multicolumn{1}{c||}{\textsc{f-Cifar}}    &	0.151 &	0.157 &	0.166	& 0.174 &	0.178 &	0.180 &	0.187 &	0.188 &	0.190  & \textbf{0.157} & \textbf{0.166} & \textbf{0.179}  & \textbf{0.188} & \textbf{0.191} &  \textbf{0.193} &  \textbf{0.195} & \textbf{0.198} & \textbf{0.199}   
  \\ 
 \multicolumn{1}{c||}{\textsc{fd-Cifar}}   	& 0.266 &	0.279 &	0.285 & 0.293	& 0.297	& 0.302	& 0.304	& 0.309 & 0.312 & \textbf{0.275} &  \textbf{0.283} &  \textbf{0.291} &  \textbf{0.304} &  \textbf{0.303} & \textbf{0.307} & \textbf{0.314} &  \textbf{0.316} &
 \textbf{0.316}      
  \\ 
 \rowcolor[HTML]{f2f2f2}  \multicolumn{1}{c||}{\textsc{f-Mnist}}    & \textbf{0.728}  & \textbf{0.756}  & \textbf{0.766}  & \textbf{0.774}  & \textbf{0.775}  & \textbf{0.778}  & \textbf{0.782}  & \textbf{0.782}  &  \textbf{0.783}  & 0.672  &   0.705  &   0.718 &  0.726 & 0.731 & 0.734 & 0.736 & 0.739 &  0.740            
\\ \midrule
\multicolumn{1}{c||}{\textsc{methods}}    & \multicolumn{9}{c|}{BGPLVM}                              & \multicolumn{9}{c}{Isomap}                               
\\  
\multicolumn{1}{c||}{$k$-\textsc{value}}   & \multicolumn{1}{c}{2} & \multicolumn{1}{c}{3} & \multicolumn{1}{c}{4} & \multicolumn{1}{c}{5} & \multicolumn{1}{c}{6} & \multicolumn{1}{c}{7} & \multicolumn{1}{c}{8} & \multicolumn{1}{c}{9} & \multicolumn{1}{c|}{10} & \multicolumn{1}{c}{2} & \multicolumn{1}{c}{3} & \multicolumn{1}{c}{4} & \multicolumn{1}{c}{5} & \multicolumn{1}{c}{6} & \multicolumn{1}{c}{7} & \multicolumn{1}{c}{8} & \multicolumn{1}{c}{9} & \multicolumn{1}{c}{10}          
\\ \midrule\midrule
\rowcolor[HTML]{f2f2f2} \multicolumn{1}{c||}{\textsc{f-cifar}}     
 & 0.132 & 0.140 &	0.145 &	0.154 &	0.156 &	0.159 &	0.161 &	0.162	 &  0.144 &	0.142 &	0.147 &	0.157 	& 0.159 &	0.163 & 	0.165 &	0.168	& 0.170 &	0.173  
 \\ 
 \multicolumn{1}{c||}{\textsc{fd-Cifar}}     
& 0.260	& 0.258 & 	0.264 &	0.279 &	0.277 &	0.281	& 0.285	& 0.287 &	0.287  &	0.270 &	0.279 &	0.287 &	0.291 &	0.292 &	0.296	& 0.301 &	0.305	& 0.305 
  \\ 
 \rowcolor[HTML]{f2f2f2}  \multicolumn{1}{c||}{\textsc{f-Mnist}}      & 0.420	& 0.433 &	0.449 &	0.455 &	0.464	& 0.466 &	0.470	& 0.473 &	0.474  &	0.468 &	0.493	& 0.504 &	0.514 &	0.524 &	0.529	& 0.534	& 0.535 &	0.540
\\ \midrule 
\multicolumn{1}{c||}{\textsc{methods}}    & \multicolumn{9}{c|}{NBVAE}                              & \multicolumn{9}{c}{CVQ-VAE}                                
\\  
\multicolumn{1}{c||}{$k$-\textsc{value}}    & \multicolumn{1}{c}{2} & \multicolumn{1}{c}{3} & \multicolumn{1}{c}{4} & \multicolumn{1}{c}{5} & \multicolumn{1}{c}{6} & \multicolumn{1}{c}{7} & \multicolumn{1}{c}{8} & \multicolumn{1}{c}{9} & \multicolumn{1}{c|}{10} & \multicolumn{1}{c}{2} & \multicolumn{1}{c}{3} & \multicolumn{1}{c}{4} & \multicolumn{1}{c}{5} & \multicolumn{1}{c}{6} & \multicolumn{1}{c}{7} & \multicolumn{1}{c}{8} & \multicolumn{1}{c}{9} & \multicolumn{1}{c}{10}                     
\\ \midrule\midrule
\rowcolor[HTML]{f2f2f2} \multicolumn{1}{c||}{\textsc{f-cifar}}                         & 0.137 & 0.140 & 0.147 & 0.152 & 0.157 & 0.156  & 0.155 & 0.161 & 0.162    &   0.098   &   0.099   &   0.101   &     0.102    &   0.102   &   0.101   &   0.100   &   0.098   &   0.096   
\\  
\multicolumn{1}{c||}{\textsc{fd-Cifar}}    & 0.248 & 0.255 & 0.264 & 0.273 & 0.277 & 0.282  & 0.287 & 0.291 & 0.292    &   0.201   &   0.200   &   0.199   &     0.201    &   0.200   &   0.201   &   0.199   &   0.197   &   0.200                
  \\ 
 \rowcolor[HTML]{f2f2f2}  \multicolumn{1}{c||}{\textsc{f-Mnist}}          & 0.502 & 0.533 & 0.548 & 0.557 & 0.566 & 0.571  & 0.577 & 0.579 & 0.582  &   0.107   &   0.107   &   0.104   &     0.105    &   0.106   &   0.102   &   0.103   &   0.103   &   0.103 
 \\
\bottomrule
\end{tabular}
}
\vspace{-.1in}
\end{table*}

\begin{table*}[t]
\centering
\caption{Missing data imputation on the \textsc{cifar} and \textsc{brendan} datasets. 
\label{tab:missing_data}}
\vspace{-2ex}
\small
\setlength{\tabcolsep}{1.05mm}{
    \scalebox{.97}{
        \begin{tabular}{cr|| cccc| cccc| cccc| cccc}
        \toprule
        \multirow{2}{*}{\textsc{dataset}} 
        & \multirow{2}{*}{\textsc{metric}}   
        & \multicolumn{4}{c|}{\textsc{vae}}  
        & \multicolumn{4}{c|}{\textsc{bgplvm}}   
        & \multicolumn{4}{c|}{\textsc{rflvm}}  
        & \multicolumn{4}{c}{\textsc{srflvm}}   
        \\  
        \cmidrule(lr){3-6}
        \cmidrule(lr){7-10}
        \cmidrule(lr){11-14}
        \cmidrule(lr){15-18}
        & &0\% &10\% &30\% &60\%   &0\% &10\% &30\% &60\% &0\% &10\% &30\% &60\%     &0\% &10\% &30\% &60\% 
        \\ \midrule  \midrule
        \multirow{1}{*}{\textsc{cifar}}  
        & \textsc{test mse} ($\downarrow$) &0.023& 0.044 &   0.071 &  0.072 & 0.025 & 0.028 & 0.048 & 0.119  & 0.030 &0.050 & 0.055 & 0.127 &\textbf{0.022} & \textbf{0.026} &\textbf{0.039}& \textbf{0.048}
        \\ \midrule \midrule
        \multirow{1}{*}{\textsc{brendan}}  
        & \textsc{test mse} ($\downarrow$) &0.005 & \textbf{0.009} &  0.043 & \textbf{0.150} & 0.006 &0.041 &0.087 &0.197 & 0.010 & 0.015 & 0.049 & 0.153 & \textbf{0.004} & \textbf{0.009} & \textbf{0.042} & {0.152} 
        \\ \bottomrule
        \end{tabular}
    }
}
\end{table*}

\vspace{-0.10in}
\subsection{Missing Data Imputation}
\vspace{-0.02in}
\label{subsec:missing_data}
This subsection evaluates the efficacy of SRFLVMs in the context of imputing missing data from two image datasets: \textsc{cifar} and \textsc{brendan} \cite{roweis2000nonlinear}. To evaluate performance, we randomly withhold varying proportions (0\%, 10\%, 30\%, and 60\%) of elements from the observed data matrix $\mathbf{Y}$. Subsequently, we estimate the latent variables $\mathbf{X}$ using the incomplete dataset denoted as $\mathbf{Y}_{obs}$, and impute the missing values $\mathbf{Y}_{miss}$ using their posterior mean $\hat{\mathbf{Y}}_{miss} = \mathbb{E}[ \mathbf{Y}_{miss} \mid  \mathbf{X}, \mathbf{Y}_{obs}, -]$. 
Imputation performance for different methods is presented in Table~\ref{tab:missing_data}, measured by the mean square error (MSE) between $\hat{\mathbf{Y}}_{miss}$ and the ground-truth $\mathbf{Y}_{miss}$. The results show that our methods outperform other competing methods, demonstrating its superior ability to reconstruct observations and recover latent representations across all levels of missing data. Particularly, our performance improvement over RFLVMs can be attributed to calibrated uncertainty in latent variables, which avoids inaccurate reconstruction through bypassing poorly point estimates caused by high uncertainty in missing data. Additional insights into the reconstruction performance are detailed in App.~\ref{app:missing_data}, highlighting exceptional capability of SRFLVMs in restoring missing pixels. 

\vspace{-0.1in}
\subsection{Large Scale Experiment}
\vspace{-0.02in}
\label{subsec:large_scale}

As mentioned earlier, the overfitting issue in VAEs is particularly severe on small datasets. Therefore, to conduct a fair comparison of VAE-based models and comprehensively showcase the performance of our proposed algorithm, we conducted extensive comparisons on full \textsc{mnist} and \textsc{cifar} datasets in this section. The \textsc{knn} classification accuracy results for several competing methods are summarized in Table \ref{tab:large_datasets},  where \textsc{f-cifar} and \textsc{f-mnist} represent the full \textsc{cifar} and \textsc{mnist} datasets, respectively, and \textsc{fd-cifar} denotes the full \textsc{cifar} dataset with each image downsampled to 400 pixels. Empirical results demonstrate significant performance improvements for both VAE-based models and our method when applied to large datasets in terms of generating informative latent variables, outperforming other benchmarks across datasets of varying sizes.

\vspace{-0.1in}
\section{Conclusions}
\label{sec:conclusions}
\vspace{-0.02in}

This paper advances the implementation of VBI in RFLVM with various data likelihoods by overcoming two key challenges, leading to the development of a scalable variant of RFLVMs, termed SRFLVMs. Specifically, we introduced a stick-breaking construction for the DP in the kernel learning component of RFLVMs, effectively managing the intractable ELBO that results from the absence of an explicit PDF for the DP. Additionally, we proposed a generic BCD-VI algorithm framework, which resolves the existing incompatibility of VBI with RFLVMs, enhances algorithmic efficiency, and ensures convergence. Empirical results robustly validate that our SRFLVMs not only exhibit superior computational efficiency and scalability but also outperform SOTA methods in generating informative latent representations and imputing missing data across various real-world datasets. 



%


\appendices

\appendices

\vspace{-0.1in}
\section{Random Fourier Features-based Estimator}
\label{app:kernel_estimator}
\begin{proof}
   According to the Bochner's theorem \cite{rahimi2008random}, we have
   \vspace{-0.1in}
   \begin{align}
       \kappa \left(\mathbf{x}-\mathbf{x}^{\prime}\right)
            & =\int p(\mathbf{w}) \exp \left(i \mathbf{w}^{\top}\left(\mathbf{x}-\mathbf{x}^{\prime}\right)\right) \mathrm{d} \mathbf{w}, \nonumber \\ 
            &=  \int p(\mathbf{w})    \cos(\mathbf{w}^{\top}\left(\mathbf{x}-\mathbf{x}^{\prime}\right))  \mathrm{d} \mathbf{w}. \label{eq:rff_boch}
   \end{align}
   Then, by sampling from the spectral density $p(\w)$, we can obtain the RFFs-based estimator for Eq.~\eqref{eq:rff_boch} using MC approximation:
   \vspace{-0.1in}
   \begin{align}
        & ~~~~~ \frac{2}{L} \sum_{l=1}^{L/2} \cos \left( \mathbf{w}_l^{\top}\left(\mathbf{x}-\mathbf{x}^{\prime} \right) \right), \nonumber \\ 
        & = \frac{2}{L} \sum_{l=1}^{L/2} \cos(\w^{\top}_l \x)  \cos(\w^{\top}_l \x^{\prime}) +  \sin(\w^{\top}_l \x)  \sin(\w^{\top}_l \x^{\prime}), \nonumber  \\ 
        & =  \vvarphi(\x)^{\top} \vvarphi(\x^{\prime}), 
   \end{align}
    where 
    \begin{align}
        \vvarphi(\x) \triangleq \sqrt{\frac{2}{L}} \begin{bmatrix}
        \sin(\w_1^{\top} \x) \\ \cos(\w_1^{\top} \x) \\ \vdots \\ \sin(\w_{L/2}^{\top} \x) \\ \cos(\w_{L/2}^{\top} \x) 
        \end{bmatrix}
        , ~ \w_l \stackrel{\text{iid}}{\sim} p(\w). 
    \end{align}
   The quality of the RFF-based estimator improves with increasing the number of spectral points $L$, approaching the true kernel as $L$ tends to infinity \cite{rahimi2008random}.
\end{proof}

\vspace{-0.1in}
\section{BCD-VI}
\label{app:bcd-vi}
\vspace{-0.05in}

\subsection{Proof of Theorem 2}
\label{app:mean_field}
\begin{proof}
Combining the model joint distribution $p(\bTheta, \Y)$ with its variational distributions $q(\bTheta)$, we can obtain the ELBO $\mathcal{L}(\bTheta)$: 
\begin{align}
       \mathcal{L}(\bTheta) &= \mathbb{E}_{q(\bTheta)}\left[\log \frac{p(\bTheta, \Y)}{q(\bTheta)}\right], \label{eq:elbo-bcd-vi} \\ 
       &=  \underbrace{ \mathbb{E}_{q(\bTheta)} \left[ \log p(\bTheta, \Y ) \right]}_{A} - \underbrace{\mathbb{E}_{q(\bTheta)} \left[ \log q(\bTheta) \right]}_{B}. \nonumber
\end{align}
Given the condition $q(\bTheta)=q(\bTheta_b)q(\bTheta_{\neq b})$, the term $A$ can be reformulated as
\begin{align}
    A  &= \int q(\bTheta) \log p(\bTheta, \Y) \mathrm{d} \bTheta,  \\
       & = \int q(\bTheta_b) \left\{\int \log p(\bTheta, \Y ) \prod_{k \neq b} q(\bTheta_k) \mathrm{~d} \bTheta_k \right\} \mathrm{d} \bTheta_b, \nonumber \\  
       &= \int q(\bTheta_b) \log \widetilde{p}\left(\bTheta, \Y \right) \mathrm{d} \bTheta_b, \nonumber
\end{align}
where 
\begin{align}
    \log \widetilde{p}\left(\bTheta, \Y \right) =\mathbb{E}_{q(\bTheta_{\neq b})}[\log p(\bTheta, \Y)]+\text { const. }
\end{align}
Now, let's expand term $B$, where 
\begin{align}
    \begin{aligned}
B & =\underset{q(\bTheta)}{\mathbb{E}}\left[\log q\left(\bTheta_b \right)+\sum_{k \neq b} \log q \left(\bTheta_k \right)\right], \\
& =\underset{q(\bTheta)}{\mathbb{E}}\left[\log q \left(\bTheta_b \right)\right]+\underset{q(\bTheta)}{\mathbb{E}}\left[\sum_{k \neq b} \log q\left(\bTheta_k \right)\right], \\
& =\underset{q\left(\bTheta_b \right)}{\mathbb{E}}\left[\log q\left(\bTheta_b \right)\right]+\underset{q\left(\bTheta_{\neq b}\right)}{\mathbb{E}}\left[\sum_{k \neq b} \log q\left(\bTheta_k\right)\right]. 
\end{aligned}
\end{align}
As we will be maximizing w.r.t. just $q(\bTheta_b)$, we can treat all terms that do not include this factor as the constant, then, the term $B$ can be reformulated as
\begin{align}
    B = \int q(\bTheta_b) \log q(\bTheta_b) \mathrm{d} \bTheta_b + \text{const}. 
\end{align}
Now, with the expanding of both terms, we can put it back together
\begin{align}
    \begin{aligned}
        \mathcal{L}(\bTheta) &= \int q(\bTheta_b) \log \widetilde{p}\left(\bTheta, \Y \right) \mathrm{d} \bTheta_b  \\ 
    &~~~ - \int q(\bTheta_b) \log q(\bTheta_b) \mathrm{d} \bTheta_b + \text{const}. 
    \end{aligned}
    \label{eq:mean-field-app}
\end{align}
Now, let us assume that we keep the $q(\bTheta_{\neq b})$ fixed and aim to maximize $\mathcal{L}(\bTheta)$ in equation \eqref{eq:mean-field-app} with respect to the distribution $q\left(\bTheta_b\right)$. This can be easily achieved by recognizing that \eqref{eq:mean-field-app} represents a negative KL divergence between $q\left(\bTheta_b \right)$ and $\widetilde{p}\left( \bTheta, \Y \right)$. Therefore, maximizing \eqref{eq:mean-field-app} with respect to $q\left(\bTheta_b\right)$ is equivalent to minimizing the KL divergence, which occurs when $q\left(\bTheta_b\right)$ is equal to $\widetilde{p}\left( \bTheta, \Y \right)$. As a result, we obtain a general expression for the optimal solution $q^{\star}\left(\bTheta_b\right)$, which is given by
\begin{align}
    \log q^{\star}\left(\bTheta_b\right)=\mathbb{E}_{q( \bTheta_{\neq b})}[\log p(\bTheta, \Y)]+\text { const. }
\end{align}
\end{proof}

\vspace{-.1in}
\subsection{Proof of Corollary 2.1}
\label{app:close-form}
\begin{proof}
    The ELBO defined in Eq.~\eqref{eq:elbo-bcd-vi} can be rewritten as 
    \begin{align}
        & \log p(\Y, \bTheta_{\neq b}) + \mathbb{E}_{q(\bTheta)} \left[ \log p(\bTheta_b \mid \bTheta_{\neq b}, \Y) \right]  \\ &~-  \mathbb{E}_{q(\bTheta_b)} \left[ \log q(\bTheta_b) \right] - \mathbb{E}_{q(\bTheta_{\neq b})}\left[ \log q(\bTheta_{\neq b}) \right]. \nonumber 
    \end{align}
    Given the condition that $q(\bTheta_b)$ is in the exponential family, we have
    \begin{align}
        q(\bTheta_b) = h(\bTheta_b) \exp( \boldsymbol{\nu}_b^{\top} \bTheta_b - a(\boldsymbol{\nu}_b) ), 
    \end{align}
    where $\boldsymbol{\nu}_b$ is the variational parameter for $q(\bTheta_b)$. Treating other terms ($q(\bTheta_{\neq b})$) as constant, the ELBO can be expressed as 
    \begin{align}
        L(\bTheta) \triangleq L &= \mathbb{E}_{q(\bTheta)} \left[ 
        \log p(\bTheta_b \mid \bTheta_{\neq b}, \Y) \right] - \mathbb{E}_{q(\bTheta_b)} \left[ \log q(\bTheta_b) \right] \nonumber \\ 
        &= \mathbb{E}_{q(\bTheta)} \left[ 
        \log p(\bTheta_b \mid \bTheta_{\neq b}, \Y) \right] - \mathbb{E}_{q(\bTheta_b)} \left[ 
\log h(\bTheta_b) \right] \nonumber \\ 
&~~~~- \boldsymbol{\nu}_b^{\top} a^{\prime}(\boldsymbol{\nu}_b) + a(\boldsymbol{\nu}_b). 
    \end{align}
    The derivative with respect to $\boldsymbol{\nu}_b$ is 
    \begin{align}
        \frac{\partial L}{ \partial \boldsymbol{\nu}_b} &= \frac{\partial } {\partial \boldsymbol{\nu}_b} \left( \mathbb{E}_{q(\bTheta)} \left[ 
        \log p(\bTheta_b \mid \bTheta_{\neq b}, \Y) \right] - \mathbb{E}_{q(\bTheta_b)} \left[ 
\log h(\bTheta_b) \right]  \right) \nonumber  \\ 
&~~~~- \boldsymbol{\nu}_b^{\top} a^{\prime \prime}(\boldsymbol{\nu}_b). 
    \end{align}
    The optimal $\boldsymbol{\nu}_b$ satisfies
    \begin{align}
        \boldsymbol{\nu}_b^{\star} &= [ a^{\prime \prime}(\boldsymbol{\nu}_b) ]^{-1} \frac{\partial } {\partial \boldsymbol{\nu}_b} \left( \mathbb{E}_{q(\bTheta)} \left[ 
        \log p(\bTheta_b \mid \bTheta_{\neq b}, \Y) \right] \right. \nonumber \\ 
        &~~~ \left. - \mathbb{E}_{q(\bTheta_b)} \left[ 
        \log h(\bTheta_b) \right]  \right) . 
        \label{eq:opt_nu}
    \end{align}
    In particular, if the conditional distribution $p(\bTheta_b \mid \bTheta_{\neq b}, \Y)$ is also an exponential family distribution then
\begin{align}
    p(\bTheta_b \mid \bTheta_{\neq b}, \Y) &= h(\bTheta_b) \exp \left\{ g_b(\bTheta_{\neq b}, \Y)^{\top} \bTheta_b \right. \nonumber \\ 
   & ~~~~ \left. - a\left( g_b(\bTheta_{\neq b}, \Y) \right) \right\}, \nonumber
\end{align}
where $g_i(\bTheta_{\neq b}, \Y)$ denotes the natural parameter for $\bTheta_b$ when conditioning on the observations $\Y$. Then, the simplified expression for the first derivative of the expected log probability of $\bTheta_{b}$ can be attained as 
\begin{align}
    \frac{\partial}{\partial \boldsymbol{\nu}_b} \mathbb{E}_{q(
        \bTheta_b)} \left[ h(\bTheta_b) \right]  + \mathbb{E}_{q(\bTheta)} \left[ g_b(\bTheta_{\neq b}, \Y) \right]^{\top} a^{\prime \prime}(\boldsymbol{\nu}_b) .  \nonumber
\end{align}
Replacing this simplified expression into Eq.~\eqref{eq:opt_nu}, the maximum is attained as
\begin{align}
    \boldsymbol{\nu}_b^{\star} = \mathbb{E}_{q(\bTheta)} \left[ g_b(\bTheta_{\neq b}, \Y)\right],  
\end{align}
which can be analytically computed \cite{blei2003latent}.  
\end{proof}

\vspace{-.12in}
\section{Scalable RFLVMs (SRFLVMs)}
\label{app:scalable_RFLVMs}

\subsection{ELBO Derivation and Evaluation} 

\subsubsection{Gaussian Case}
\label{app:Gaussian_elbo}

\begin{small}
    \begin{equation}
    \begin{aligned}
            \mathcal{L} & = {\sum_{m=1}^M \mathbb{E}_{q(\X)q(\mathbf{W})} \left[ \log p(\y_{:, m} \vert \X, \mathbf{W}) \right]} {-\!\sum_{n=1}^N\operatorname{KL}(q(\x_n) \| p(\x_n))} \\
            & \approx \sum_{m=1}^M\!\frac{1}{I}\!\sum_{i=1}^{I}\!\log \mathcal{N}(\y_{:, m} \vert \boldsymbol{0}, \hat{\K}^{(i)}\!+\!\sigma^2 \mathbf{I}_N)\!-\!\sum_{n=1}^N\!\operatorname{KL}(q(\x_n) \| p(\x_n)) \\
            & \approx\!\sum_{m=1}^M \!\frac{1}{I}\!\sum_{i=1}^I\!\left\{\!-\!\frac{1}{2}\log\! \left|\hat{\K}^{(i)}\!+\! \sigma^2 \mathbf{I}_N\!\right|\!-\!\frac{1}{2} \y_{:, m}^{\top}\!(\hat{\K}^{(i)}\!+\!\sigma^2 \mathbf{I}_N )\!^{-1}\!\y_{:, m}\!\right\}\! \\ 
            & ~~~ - \!  \frac{1}{2} \sum_{n=1}^{N} \Big[ \operatorname{tr}(\mathbf{S}_{n}) + \boldsymbol{\mu}_n^{\top} \boldsymbol{\mu}_n -\log |\mathbf{S}_{n}| - Q  \Big], 
    \end{aligned}
    \nonumber
\end{equation}
\end{small}
where $\hat{\K}^{(i)} $ denotes $  (\bPhi \bPhi^{\top})^{(i)}$. 

\subsubsection{Logistic Case}
\label{app:log_elbo}

The likelihood for logistic likelihood can be represent as 
\begin{align}
    \mathcal{L} = \text{(a)} + \text{(b)} + \text{(c)} + \text{const}, 
\end{align}
where 
\begin{small}
    \begin{align}
    \text{(a)} &= 
    \prod_{m=1}^{M} \prod_{n=1}^N \log(c_{nm}) \nonumber \\ 
    &~~~+\!\mathbb{E}_{q(\bA, \X, \W})\!\left[ a_{nm}\!\bPhi_{n, :}~\mathbf{h}_m\!-\!b_{nm}\!\log \left( 1\!+\!\exp \left( \bPhi_{n, :}~\mathbf{h}_m \right) \right) \right], \nonumber \\
    \text{(b)} &= \sum_{i=1}^{N} \operatorname{KL} \left( q(\mathbf{x}_i) \| p(\mathbf{x}_i) \right), \nonumber  \\ 
    &= \frac{1}{2} \sum_{n=1}^{N} \Big\{ - \log |\mathbf{S}_{n}| - Q + \operatorname{Tr}(\mathbf{S}_{n}) + \boldsymbol{\mu}_n^{\top} \boldsymbol{\mu}_n  \Big\}, \nonumber \\ 
    \text{(c)} &= \sum_{m=1}^M \mathbb{E}_{q(\mathbf{W}, \mathbf{X}, \boldsymbol{\Omega})}\left[ \operatorname{KL}(p(\mathbf{h}_m \vert \y_m, \bm{\Phi}, \boldsymbol{\Omega}) \| p(\mathbf{h}_m)) \right], \nonumber \\ 
    &= \frac{1}{2} \sum_{m=1}^M \mathbb{E}_{q(\mathbf{W}, \mathbf{X}, \boldsymbol{\Omega})} \left[ -\log |\boldsymbol{\Sigma}_{\mathbf{a}_m}|\!-\!mL\!+\!\operatorname{Tr}(\boldsymbol{\Sigma}_{\mathbf{a}_m})\!+\!\boldsymbol{\mu}_{\mathbf{a}_m}^{\top} \boldsymbol{\mu}_{\mathbf{a}_m} \right].  \nonumber
\end{align}
\end{small}

\subsection{Motivation for Pólya-gamma augmentation}
\label{app:motivation_for_PG}

\begin{figure}[h!]
    \centering
\includegraphics[width=0.7\linewidth]{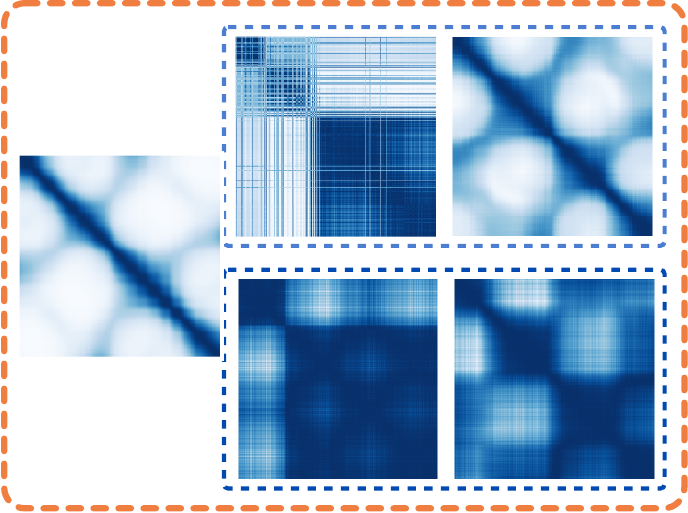}
    \caption{ (\textbf{Left}) Ground-truth kernel matrix; (\textbf{Right Top}) Gaussian case, $\hat{\K}$ obtained without marginalized $\bA$ versus with marginalized $\bA$; \textbf{(Right Bottom)} Bernoulli case, $\hat{\K}$ obtained without versus with the closed-form solution for the posterior of $\bA$.}
    \label{fig:why_PG}
\end{figure}

\noindent As demonstrated in Fig.~\ref{fig:why_PG}, marginalizing out $\bA$\footnote{Note that when $\bA$ is not marginalized out, we infer its posterior with the RGVI-based method within the likelihood block. } or employing its ground-truth posterior significantly enhances the performance of kernel learning compared to using approximate inference on $\bA$, irrespective of the likelihood type. These results underscore the necessity of utilizing PG augmentation to achieve an analytical closed-form solution for the posterior of $\bA$. 

\subsection{Proof of Theorem \ref{theo:opt_a}}
\label{app:theo_3}

Defining $\psi_{nm}$ in Eq.~\ref{eq:pg} as $\bPhi_{n, :}~\mathbf{h}_m$, we have
\begin{small}
    \begin{align}
     & \frac{\left(\exp \left( \bPhi_{n, :}~\mathbf{h}_m \right)\right)^{a_{n m}}}{\left(1+\exp \left( \bPhi_{n, :}~ \mathbf{h}_m \right)\right)^{b_{n m}}} \nonumber \\ 
    &=\!\!-2^{-b_{nm}}\!\exp(\kappa_{nm}\!\bPhi_{n, :}~ \mathbf{h}_m ) \mathbb{E}_{p(\omega \mid b_{nm}, 0)}\!\left[ \exp \left( - \omega_{nm} \frac{(\bPhi_{n, :}~ \mathbf{h}_m)^2 }{2}\!\right)\!\right] \nonumber
    \end{align} 
\end{small} 
where
\begin{align}
    \kappa_{nm} = a_{nm} - b_{nm}/2. 
\end{align}
Note that, if we condition on $\boldsymbol{\Omega}$, the likelihood is proportional to: 
\begin{align}
    & \propto -2^{-b_{nm}} \exp( \kappa_{nm} \bPhi_{n, :}~ \mathbf{h}_m )  \exp \left( - \omega_{nm} \frac{ (\bPhi_{n, :}~ \mathbf{h}_m)^2 }{2}  \right), \nonumber \\ 
    &\propto -2^{-b_{nm}} \exp \left\{ - \frac{\omega_{nm}}{2} \left( \bPhi_{n, :}~ \mathbf{h}_m - z_{nm} \right)^2  \right\},  
\end{align}
where $z_{nm} = \frac{\kappa_{nm}}{\omega_{nm}}. $
Then, we can immediately get the posterior of $\boldsymbol \omega_{nm}$ (Refer to Eq.~\eqref{eq:pos_omega}):
\begin{align}
    p( \boldsymbol \omega_{nm} \mid \ba_m) = \operatorname{PG}\left(b_{nm},  \psi_{nm} \right),  
\end{align}
and the posterior distribution of $\mathbf{h}_m$: 
\begin{align}
    \mathcal{N} \left( \mathbf{m}_{m}, \mathbf{V}_{m}  \right), 
\end{align}
where 
\begin{align*}
            & \boldsymbol{\Omega}_m\! =\!\operatorname{diag}\left(\left[\boldsymbol{\omega}_{1 m}, \ldots, \boldsymbol{\omega}_{N m}\right]\right)   & & \kappa_{nm} = a_{nm} - b_{nm} / 2    \\
            &\mathbf{V}_{m}=\left(\bPhi^{\top} \boldsymbol{\Omega}_m \bPhi+\mathbf{B}_0^{-1}\right)^{-1} & &\boldsymbol{\kappa}_m  = [ \kappa_{1m}, ..., \kappa_{nm} ]^{\top}  \\ 
            & \mathbf{m}_{m}=\mathbf{V}_{m}\left(\bPhi^{\top} \boldsymbol{\kappa}_m+\mathbf{S}_0^{-1} \mathbf{m}_0\right).
\end{align*}

\section{Related Work}
\label{sec:related_work}


We briefly introduce other related work on GPLVMs, with a detailed summary provided in Table~\ref{table:comparison}. The first GPLVM was introduced by Lawrence \cite{lawrence2005probabilistic}. Later, Titsias \textit{et al.} presented a fully Bayesian GPLVM (BGPLVM) that  variationally integrate out the latent variables \cite{titsias2010bayesian}.  This model, however, demonstrated computational efficiency primarily with Gaussian data likelihood and specific kernel functions, such as the RBF and linear kernels \cite{williams2006gaussian}, limiting the model capacity of GPLVM. 

Recently, several efforts have been directed towards improving the scalability and flexibility of the GPLVM \cite{lalchand2022generalised,de2021learning} and ensuring its compatibility with a variety of likelihood functions \cite{ramchandran2021latent}. One of the most representative works is \cite{ramchandran2021latent}, namely GPLVM-SVI, which seamlessly incorporates inducing points-based sparse GP and stochastic variational inference \cite{hensman2013gaussian}, enabling scalable inference and the potential to handle a variety of likelihood functions.  Unfortunately, despite the progress made, these models rely on inducing points-based sparse GP as proposed by \cite{titsias2009variational}, necessitating the optimization of additional inducing points. This increases the computational load and raises the risk of convergence to suboptimal solutions.
As a result, even with advancements in model capability, these GPLVM variants often struggle to fully realize their theoretical potential \cite{li2024preventing}. 

\section{Experiment Details} 
\label{app:experiment}

\subsection{Data descriptions and preprocessing}
\label{app:ds_preprocessing}

To begin, we delineate the specific parameter configurations employed for generating synthetic $S$-shaped datasets, as utilized in Section  \ref{subsec:toy_example}. These datasets generated from a GPLVM characterized by the following kernel configuration:
\begin{align}
        & k_{\mathrm{hybrid}}(\x, \x^\prime) = k_{\mathrm{rbf}}(\x, \x^\prime) + k_{\mathrm{periodic}}(\x, \x^\prime), \\
        &k_{\mathrm{rbf}}(\x, \x^\prime) = \ell_{o} \exp(-\frac{(\x- \x^\prime)^2}{2 \ell_l^2}),  \text{ with }  \ell_o = 0.5, \ell_l = 1; \nonumber \\  
        & k_{\mathrm{periodic}}(\x, \x^\prime) = \ell_{o} \exp \left(-\frac{2 \sin ^2\left(\frac{\x-\x^{\prime}}{p}\right)}{ \ell_l^2}\right), \nonumber \\ 
        &\text{ with } \ell_o = 0.5, \ell_l = 1, p=4.5. \nonumber
\end{align}
Then, we provide a thorough introduction to real-world datasets, subsequently downsizing large-scale datasets to a reduced size to manage the considerable computational complexity inherent in RFLVMs \cite{gundersen2021latent}.
\begin{itemize}
    \item{
        \textbf{Bridges:} 
        We utilized account how many bicycles crossed each of the four East River bridges in New York City each day\footnote{\url{https://data.cityofnewyork.us/Transportation/Bicycle-Counts-for-East-River-Bridges/gua4-p9wg}}. We used weekday vs. weekend as binary labels since these data are unlabeled and as such information is connected with bicycle numbers.
    }
    
    
    \item{
        \textbf{CIFAR-10:} 
        For a final dataset of size $2000$, we subsampled $400$ photos from each class and restricted the classes to $[1-5]$. The images were reduced in size from $32 \times 32$ to $20 \times 20$ pixels and turned to grayscale. \textit{Test performance of different models on the full dataset can be found in Section \ref{subsec:large_scale}. }
    }
    
    \item{
        \textbf{MNIST:} By randomly selecting 1000 images, we reduced the size of the dataset. \textit{Test performance of different models on the full dataset can be found in Section \ref{subsec:large_scale}.}
    }
    
    \item{
        \textbf{Montreal:} We use the number of cyclists per day on eight bicycle lanes in Montreal.\footnote{\url{http://donnees.ville.montreal.qc.ca/dataset/f170fecc-18db-44bc-b4fe-5b0b6d2c7297/resource/64c26fd3-0bdf-45f8-92c6-715a9c852a7b}}. Since these data are unlabeled, we utilized the four seasons as labels, due to seasonality is correlated with bicycle counts.
    }
    
    \item{
        \textbf{Newsgroups:} The 20 Newsgroups Dataset\footnote{\url{http://qwone.com/~jason/20Newsgroups/}}. The classes were restricted to \textit{comp.sys.mac.hardware}, \textit{sci.med}, and \textit{alt.atheism}, and the vocabulary is limited to words with document frequencies in the range $10-90\%$.
    }
    

    \item{
        \textbf{Yale:} The Yale Faces Dataset\footnote{\url{http://vision.ucsd.edu/content/yale-face-database}}. We used subject IDs as labels.
    }

    \item{
        \textsc{Brendan:}
        The dataset consists of 2000 images, each with dimensions of \(20 \times 28\) pixels, portraying Brendan's face\footnote{\url{https://cs.nyu.edu/~roweis/data/frey_rawface.mat}}. 
    }
    
    
\end{itemize}


\subsection{Benchmark Methods Descriptions}
\label{app:benchmark}

\begin{itemize}
    \item{
        \textbf{PCA, LDA, NMF, Isomap:} We employed implementations from the \texttt{sklearn.decomposition} module within the \texttt{scikit-learn} library for PCA \cite{roweis1997algorithms}, LDA \cite{blei2003latent}, NMF \cite{lee1999learning}, and Isomap \cite{balasubramanian2002isomap}.
    }
    
    \item{
        \textbf{BGPLVM:} We utilized the \texttt{BayesianGPLVMMiniBatch} implementation from \texttt{GPy} library\footnote{\url{http://github.com/SheffieldML/GPy}}, which is an inducing points-based method \cite{titsias2009variational}.
    }

    \item{
        \textbf{VAE:} The VAE \cite{kingma2013auto} implementation followed the example code provided by the \texttt{pytorch} library\footnote{\url{https://github.com/pytorch/examples/blob/main/vae/main.py}}.
    }

     \item{
        \textbf{NBVAE, DCA, CVQ-VAE, RFLVM:} Implementations for these algorithms align with their respective official code repositories available online
        \footnote{\url{https://github.com/ethanhezhao/NBVAE}}
        \footnote{\url{https://github.com/theislab/dca}}
        \footnote{\url{https://github.com/lyndonzheng/CVQ-VAE}}
        \footnote{\url{https://github.com/gwgundersen/rflvm}}. 
    }    
\end{itemize}

\subsection{Missing Data Imputation}
\label{app:missing_data}

The reconstruction data $\Y_{rec}$ are  generated by their posterior mean $\hat{\mathbf{Y}}_{rec} \!=\! \mathbb{E}[ \mathbf{Y}_{rec} \mid  \mathbf{X}, \mathbf{Y}_{obs}, -]$. Visualizations of the reconstructed observed data are presented in Fig.~\ref{appx_fig:mnist_missing_illustration} and Fig.~\ref{appx_fig:bface_missing_illustration}, highlighting the superior ability of SRFLVMs to restore missing pixels.


\bibliographystyle{IEEEtran}
\bibliography{ref}{}

\begin{thebibliography}{10}
\providecommand{\url}[1]{#1}
\csname url@samestyle\endcsname
\providecommand{\newblock}{\relax}
\providecommand{\bibinfo}[2]{#2}
\providecommand{\BIBentrySTDinterwordspacing}{\spaceskip=0pt\relax}
\providecommand{\BIBentryALTinterwordstretchfactor}{4}
\providecommand{\BIBentryALTinterwordspacing}{\spaceskip=\fontdimen2\font plus
\BIBentryALTinterwordstretchfactor\fontdimen3\font minus
  \fontdimen4\font\relax}
\providecommand{\BIBforeignlanguage}[2]{{%
\expandafter\ifx\csname l@#1\endcsname\relax
\typeout{** WARNING: IEEEtran.bst: No hyphenation pattern has been}%
\typeout{** loaded for the language `#1'. Using the pattern for}%
\typeout{** the default language instead.}%
\else
\language=\csname l@#1\endcsname
\fi
#2}}
\providecommand{\BIBdecl}{\relax}
\BIBdecl

\bibitem{tran2018representation}
L.~Tran, X.~Yin, and X.~Liu, ``Representation learning by rotating your
  faces,'' \emph{IEEE Trans. Pattern Anal. Mach. Intell.}, vol.~41, no.~12, pp.
  3007--3021, 2019.

\bibitem{aguerri2019distributed}
I.~E. Aguerri and A.~Zaidi, ``Distributed variational representation
  learning,'' \emph{IEEE Trans. Pattern Anal. Mach. Intell.}, vol.~43, no.~1,
  pp. 120--138, 2021.

\bibitem{bishop1998hierarchical}
C.~M. Bishop and M.~E. Tipping, ``A hierarchical latent variable model for data
  visualization,'' \emph{IEEE Trans. Pattern Anal. Mach. Intell.}, vol.~20,
  no.~3, pp. 281--293, 1998.

\bibitem{song2019harmonized}
G.~Song, S.~Wang, Q.~Huang, and Q.~Tian, ``Harmonized multimodal learning with
  {Gaussian} process latent variable models,'' \emph{IEEE Trans. Pattern Anal.
  Mach. Intell.}, vol.~43, no.~3, pp. 858--872, 2021.

\bibitem{park2015bayesian}
M.~Park, W.~Jitkrittum, A.~Qamar, Z.~Szab{\'o}, L.~Buesing, and M.~Sahani,
  ``Bayesian manifold learning: the locally linear latent variable model
  {(LL-LVM)},'' in \emph{Proc. Adv. Neural Inf. Process. Syst.}, 2015, pp.
  154--162.

\bibitem{chalupka2013framework}
K.~Chalupka, C.~K. Williams, and I.~Murray, ``{A framework for evaluating
  approximation methods for Gaussian process regression},'' \emph{J. Mach.
  Learn. Res.}, vol.~14, no.~1, pp. 333--350, 2013.

\bibitem{gneiting2002compactly}
T.~Gneiting, ``{Compactly supported correlation functions},'' \emph{J.
  Multivariate Anal.}, vol.~83, no.~2, pp. 493--508, 2002.

\bibitem{quinonero2005unifying}
J.~Quinonero-Candela and C.~E. Rasmussen, ``{A unifying view of sparse
  approximate Gaussian process regression},'' \emph{J. Mach. Learn. Res.},
  vol.~6, pp. 1939--1959, 2005.

\bibitem{titsias2009variational}
M.~Titsias, ``{Variational learning of inducing variables in sparse Gaussian
  processes},'' in \emph{Proc. Int. Conf. Artif. Intell. Statist.}, 2009, pp.
  567--574.

\bibitem{wilson2015kernel}
A.~Wilson and H.~Nickisch, ``{Kernel interpolation for scalable structured
  Gaussian processes (KISS-GP)},'' in \emph{Proc. Int. Conf. Mach. Learn.},
  2015, pp. 1775--1784.

\bibitem{gramacy2008bayesian}
R.~B. Gramacy and H.~K.~H. Lee, ``{Bayesian treed Gaussian process models with
  an application to computer modeling},'' \emph{J. Amer. Stat. Assoc.}, vol.
  103, no. 483, pp. 1119--1130, 2008.

\bibitem{roweis1997algorithms}
S.~Roweis, ``{EM algorithms for PCA and SPCA},'' in \emph{Proc. Adv. Neural
  Inf. Process. Syst.}, 1997, pp. 626--632.

\bibitem{8928538}
X.~Yang, W.~Liu, W.~Liu, and D.~Tao, ``A survey on canonical correlation
  analysis,'' \emph{IEEE Trans. Knowl. Data Eng.}, vol.~33, no.~6, pp.
  2349--2368, 2021.

\bibitem{lawley1962factor}
D.~N. Lawley and A.~E. Maxwell, ``Factor analysis as a statistical method,''
  \emph{J. R. Stat. Soc. Ser. D Stat. Soc.}, vol.~12, no.~3, pp. 209--229,
  1962.

\bibitem{williams1995gaussian}
C.~Williams and C.~Rasmussen, ``Gaussian processes for regression,'' in
  \emph{Proc. Adv. Neural Inf. Process. Syst.}, 1995, pp. 514--520.

\bibitem{marinai2005artificial}
S.~Marinai, M.~Gori, and G.~Soda, ``Artificial neural networks for document
  analysis and recognition,'' \emph{IEEE Trans. Pattern Anal. Mach. Intell.},
  vol.~27, no.~1, pp. 23--35, 2005.

\bibitem{kingma2013auto}
D.~P. Kingma and M.~Welling, ``Auto-encoding variational {B}ayes,'' in
  \emph{Proc. Int. Conf. Learn. Represent.}, 2014.

\bibitem{lawrence2005probabilistic}
N.~Lawrence, ``Probabilistic non-linear principal component analysis with
  {G}aussian process latent variable models,'' \emph{J. Mach. Learn. Res.},
  vol.~6, pp. 1783--1816, 2005.

\bibitem{qian2022learning}
D.~Qian and W.~K. Cheung, ``Learning hierarchical variational auto-encoders
  with mutual information maximization for autoregressive sequence modeling,''
  \emph{IEEE Trans. Pattern Anal. Mach. Intell.}, vol.~45, no.~2, pp.
  1949--1962, 2023.

\bibitem{shao2021controlvae}
H.~Shao, Z.~Xiao, S.~Yao, D.~Sun, A.~Zhang, S.~Liu, T.~Wang, J.~Li, and
  T.~Abdelzaher, ``Control{VAE}: Tuning, analytical properties, and performance
  analysis,'' \emph{IEEE Trans. Pattern Anal. Mach. Intell.}, vol.~44, no.~12,
  pp. 9285--9297, 2022.

\bibitem{bowman2015generating}
S.~R. Bowman, L.~Vilnis, O.~Vinyals, A.~M. Dai, R.~Jozefowicz, and S.~Bengio,
  ``Generating sentences from a continuous space,'' in \emph{Proc. SIGNLL Conf.
  Comput. Natural Lang. Learn.}, 2016.

\bibitem{sonderby2016train}
C.~K. S{\o}nderby, T.~Raiko, L.~Maal{\o}e, S.~K. S{\o}nderby, and O.~Winther,
  ``Ladder variational auto-encoders,'' in \emph{Proc. Adv. Neural Inf.
  Process. Syst.}, 2016, pp. 3745--3753.

\bibitem{li2024preventing}
Y.~Li, Z.~Lin, F.~Yin, and M.~M. Zhang, ``Preventing model collapse in
  {G}aussian process latent variable models,'' in \emph{Proc. Int. Conf. Mach.
  Learn.}, 2024.

\bibitem{gundersen2021latent}
G.~Gundersen, M.~Zhang, and B.~Engelhardt, ``{Latent variable modeling with
  random features},'' in \emph{Proc. Int. Conf. Artif. Intell. Statist.}, 2021,
  pp. 1333--1341.

\bibitem{rahimi2008random}
A.~Rahimi and B.~Recht, ``Random features for large-scale kernel machines,'' in
  \emph{Proc. Adv. Neural Inf. Process. Syst.}, 2007, pp. 1177--1184.

\bibitem{oliva2016bayesian}
J.~B. Oliva, A.~Dubey, A.~G. Wilson, B.~P{\'o}czos, J.~Schneider, and E.~P.
  Xing, ``{B}ayesian nonparametric kernel-learning,'' in \emph{Proc. Int. Conf.
  Artif. Intell. Statist.}, 2016, pp. 1078--1086.

\bibitem{cheng2022rethinking}
L.~Cheng, F.~Yin, S.~Theodoridis, S.~Chatzis, and T.-H. Chang, ``Rethinking
  {B}ayesian learning for data analysis: The art of prior and inference in
  sparsity-aware modeling,'' \emph{IEEE Signal Process. Mag.}, vol.~39, no.~6,
  pp. 18--52, 2022.

\bibitem{cai2015challenges}
L.~Cai and Y.~Zhu, ``The challenges of data quality and data quality assessment
  in the big data era,'' \emph{Data Sci. J.}, vol.~14, pp. 2--2, 2015.

\bibitem{blei2017variational}
D.~M. Blei, A.~Kucukelbir, and J.~D. McAuliffe, ``Variational inference: {A}
  review for statisticians,'' \emph{J. Amer. Stat. Assoc.}, vol. 112, no. 518,
  pp. 859--877, 2017.

\bibitem{blei2006variational}
D.~M. Blei and M.~I. Jordan, ``Variational inference for {Dirichlet} process
  mixtures,'' \emph{Bayesian Anal.}, vol.~1, no.~1, pp. 121--143, 2006.

\bibitem{bishop2006pattern}
C.~M. Bishop and N.~M. Nasrabadi, \emph{{Pattern Recognition and Machine
  Learning}}.\hskip 1em plus 0.5em minus 0.4em\relax Springer, 2006.

\bibitem{zhang2018advances}
C.~Zhang, J.~B{\"u}tepage, H.~Kjellstr{\"o}m, and S.~Mandt, ``{Advances in
  variational inference},'' \emph{IEEE Trans. Pattern Anal. Mach. Intell.},
  vol.~41, no.~8, pp. 2008--2026, 2019.

\bibitem{kingma2014adam}
D.~P. Kingma and J.~Ba, ``{Adam: A method for stochastic optimization},'' in
  \emph{Proc. Int. Conf. Learn. Represent.}, 2015.

\bibitem{figurnov2018implicit}
M.~Figurnov, S.~Mohamed, and A.~Mnih, ``Implicit reparameterization
  gradients,'' in \emph{Proc. Adv. Neural Inf. Process. Syst.}, 2018, pp.
  441--452.

\bibitem{blackwell1973ferguson}
D.~Blackwell and J.~B. MacQueen, ``{Ferguson distributions via P{\'o}lya urn
  schemes},'' \emph{Ann. Stat.}, vol.~1, no.~2, pp. 353--355, 1973.

\bibitem{williams2006gaussian}
C.~K. Williams and C.~E. Rasmussen, \emph{Gaussian Processes for Machine
  Learning}.\hskip 1em plus 0.5em minus 0.4em\relax MIT press Cambridge, MA,
  2006.

\bibitem{bochner1959lectures}
S.~Bochner \emph{et~al.}, \emph{{Lectures on {F}ourier integrals}}.\hskip 1em
  plus 0.5em minus 0.4em\relax Princeton University Press, 1959.

\bibitem{antoniak1974mixtures}
C.~E. Antoniak, ``Mixtures of {D}irichlet processes with applications to
  {B}ayesian nonparametric problems,'' \emph{Ann. Stat.}, vol.~2, no.~6, pp.
  1152--1174, 1974.

\bibitem{ida2023fast}
Y.~Ida, S.~Kanai, and A.~Kumagai, ``Fast block coordinate descent for
  non-convex group regularizations,'' in \emph{Proc. Int. Conf. Artif. Intell.
  Statist.}, 2023, pp. 2481--2493.

\bibitem{xu2015block}
Y.~Xu and W.~Yin, ``{Block stochastic gradient iteration for convex and
  nonconvex optimization},'' \emph{SIAM J. Optim.}, vol.~25, no.~3, pp.
  1686--1716, 2015.

\bibitem{polson2013bayesian}
N.~G. Polson, J.~G. Scott, and J.~Windle, ``Bayesian inference for logistic
  models using {P{\'o}lya--Gamma} latent variables,'' \emph{J. Am. Stat.
  Assoc.}, vol. 108, no. 504, pp. 1339--1349, 2013.

\bibitem{zhao2020variational}
H.~Zhao, P.~Rai, L.~Du, W.~Buntine, D.~Phung, and M.~Zhou, ``{Variational
  auto-encoders for sparse and overdispersed discrete data},'' in \emph{Proc.
  Int. Conf. Artif. Intell. Statist.}, 2020, pp. 1684--1694.

\bibitem{zheng2023online}
C.~Zheng and A.~Vedaldi, ``Online clustered codebook,'' in \emph{Proc. IEEE
  Int. Conf. Comput. Vis.}, 2023, pp. 22\,741--22\,750.

\bibitem{titsias2010bayesian}
M.~Titsias and N.~D. Lawrence, ``Bayesian {G}aussian process latent variable
  model,'' in \emph{Proc. Int. Conf. Artif. Intell. Statist.}, 2010, pp.
  844--851.

\bibitem{lalchand2022generalised}
V.~Lalchand, A.~Ravuri, and N.~D. Lawrence, ``Generalised {GPLVM} with
  stochastic variational inference,'' in \emph{Proc. Int. Conf. Artif. Intell.
  Statist.}, 2022, pp. 7841--7864.

\bibitem{ramchandran2021latent}
S.~Ramchandran, M.~Koskinen, and H.~L{\"a}hdesm{\"a}ki, ``Latent {G}ausssian
  process with composite likelihoods and numerical quadrature,'' in \emph{Proc.
  Int. Conf. Artif. Intell. Statist.}, 2021, pp. 3718--3726.

\bibitem{lee1999learning}
D.~D. Lee and H.~S. Seung, ``Learning the parts of objects by non-negative
  matrix factorization,'' \emph{Nature}, vol. 401, no. 6755, pp. 788--791,
  1999.

\bibitem{blei2003latent}
D.~M. Blei, A.~Y. Ng, and M.~I. Jordan, ``{Latent {Dirichlet} allocation},''
  \emph{J. Mach. Learn. Res.}, vol.~3, pp. 993--1022, 2003.

\bibitem{gopalan2015scalable}
P.~Gopalan, J.~M. Hofman, and D.~M. Blei, ``Scalable recommendation with
  hierarchical {P}oisson factorization.'' in \emph{Proc. Conf. Uncertain.
  Artif. Intell.}, 2015, pp. 326--335.

\bibitem{balasubramanian2002isomap}
M.~Balasubramanian and E.~L. Schwartz, ``{The isomap algorithm and topological
  stability},'' \emph{Science}, vol. 295, no. 5552, pp. 7--7, 2002.

\bibitem{wilson2020bayesian}
A.~G. Wilson and P.~Izmailov, ``Bayesian deep learning and a probabilistic
  perspective of generalization,'' in \emph{Proc. Adv. Neural Inf. Process.
  Syst.}, 2020, pp. 4697--4708.

\bibitem{roweis2000nonlinear}
S.~T. Roweis and L.~K. Saul, ``Nonlinear dimensionality reduction by locally
  linear embedding,'' \emph{Science}, vol. 290, no. 5500, pp. 2323--2326, 2000.

\bibitem{de2021learning}
D.~de~Souza, D.~Mesquita, J.~P. Gomes, and C.~L. Mattos, ``Learning {GPLVM}
  with arbitrary kernels using the unscented transformation,'' in \emph{Proc.
  Int. Conf. Artif. Intell. Statist.}, 2021, pp. 451--459.

\bibitem{hensman2013gaussian}
J.~Hensman, N.~Fusi, and N.~D. Lawrence, ``Gaussian processes for big data,''
  in \emph{Proc. Conf. Uncertain. Artif. Intell.}, 2013, pp. 282--290.

\end{thebibliography}

\begin{figure*}[h]
    \centering
    \subfloat[MNIST reconstruction task with 0\% missing pixels.]{
    \includegraphics[width=.32\linewidth]{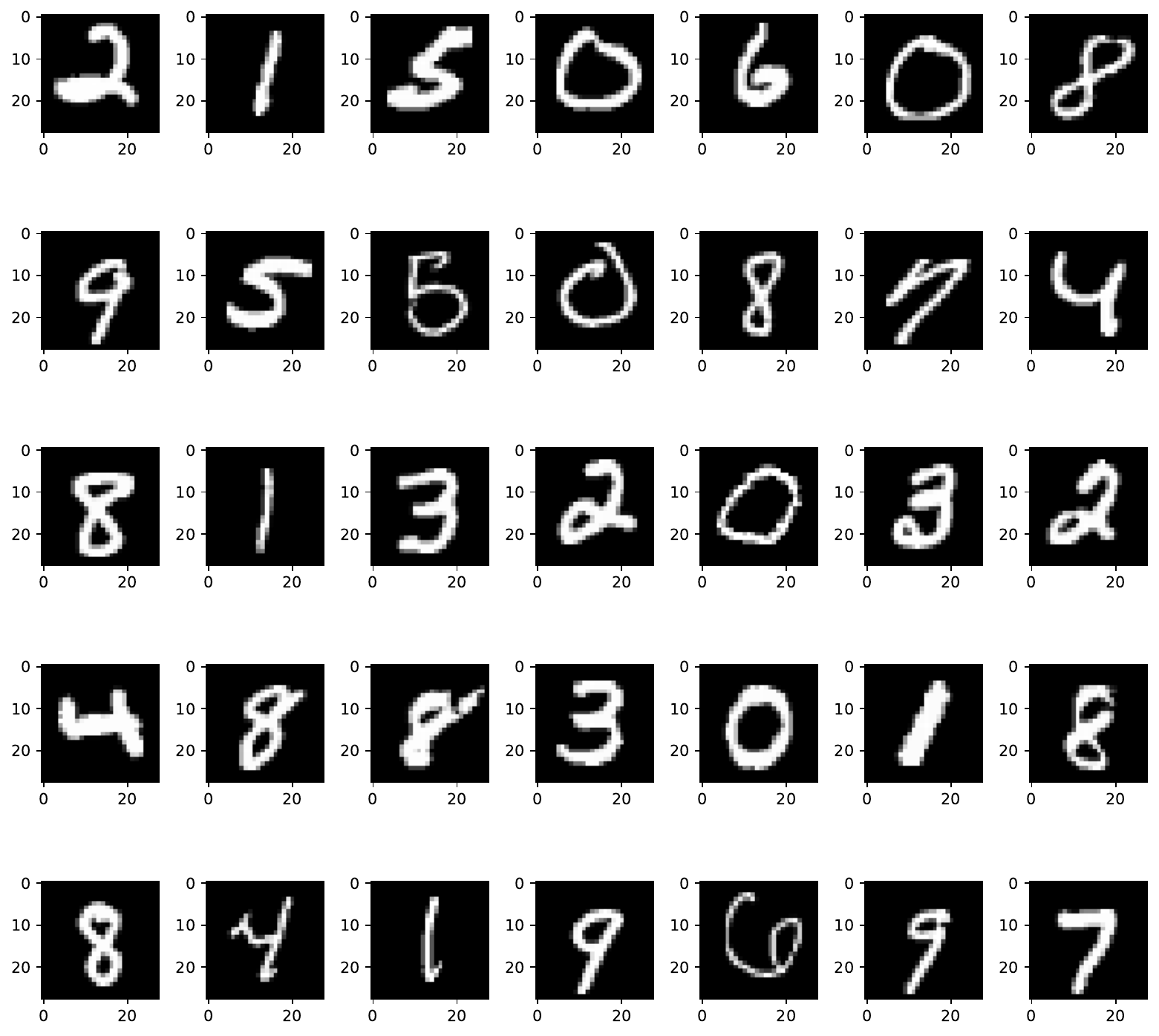} \ \ \
    \includegraphics[width=.32\linewidth]{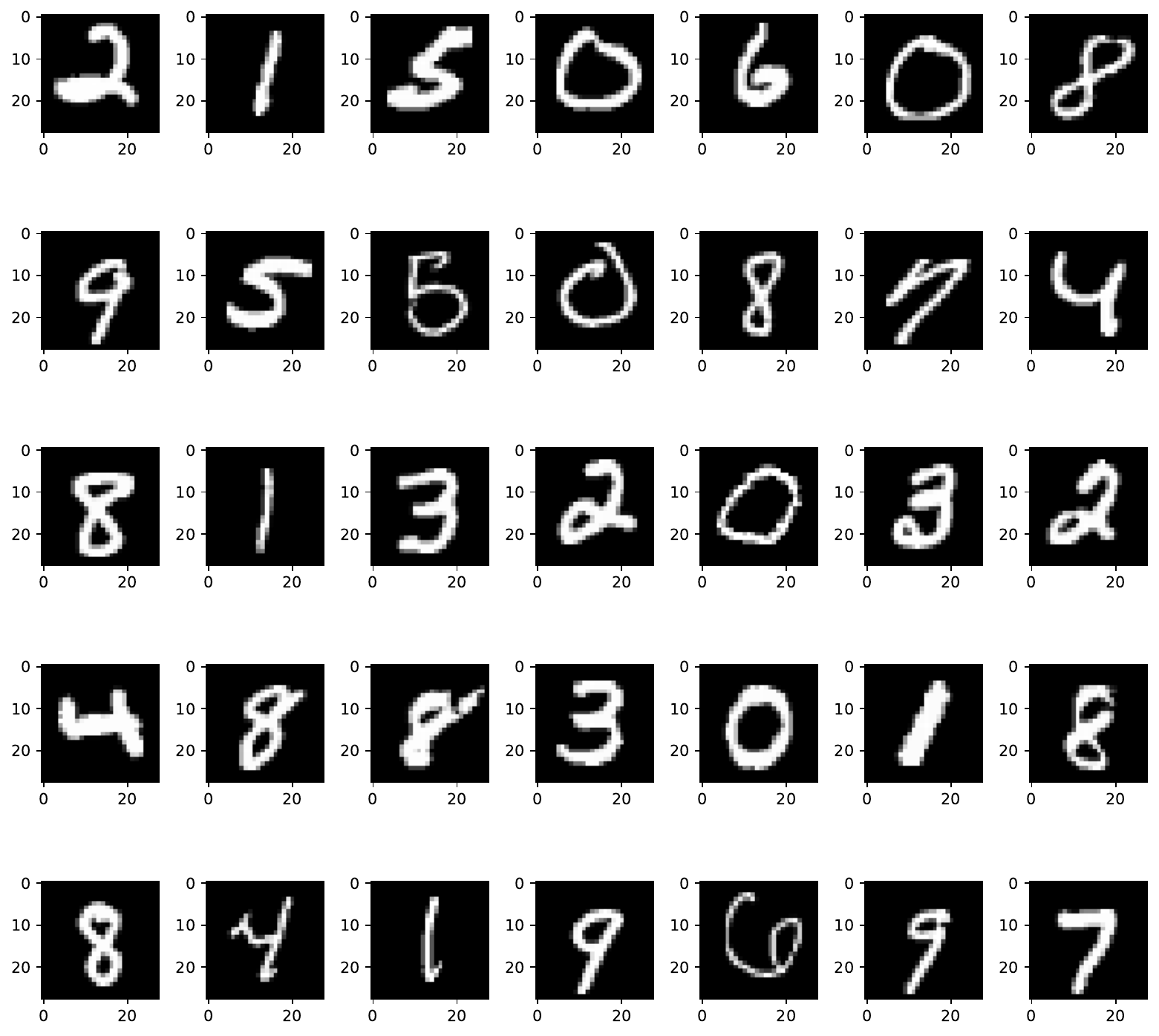} \ \ \
    \includegraphics[width=.32\linewidth]{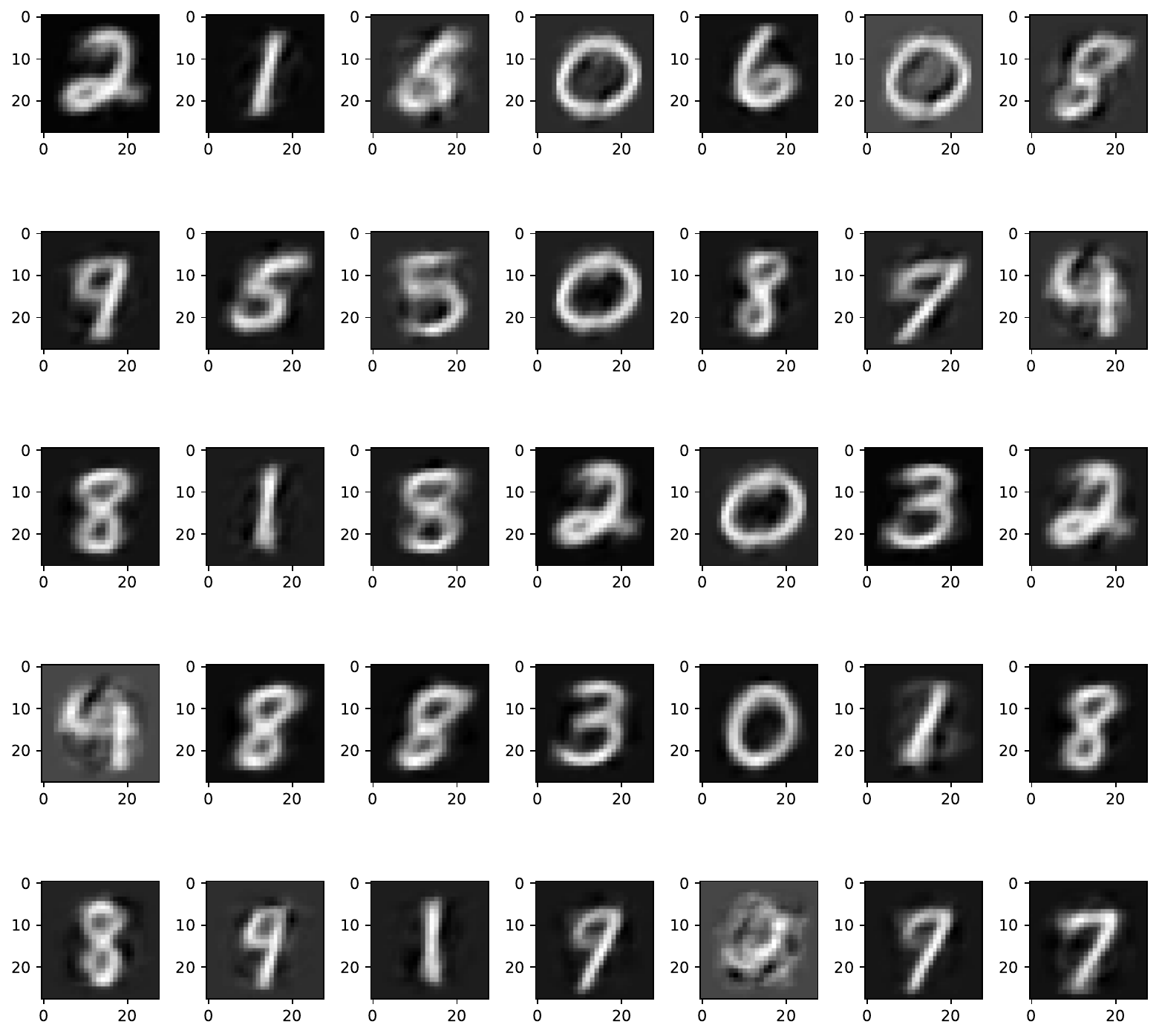}
    } 
    \vspace{-0.05in}
    
    \subfloat[MNIST reconstruction task with 10\% missing pixels.]{
    \includegraphics[width=.32\linewidth]{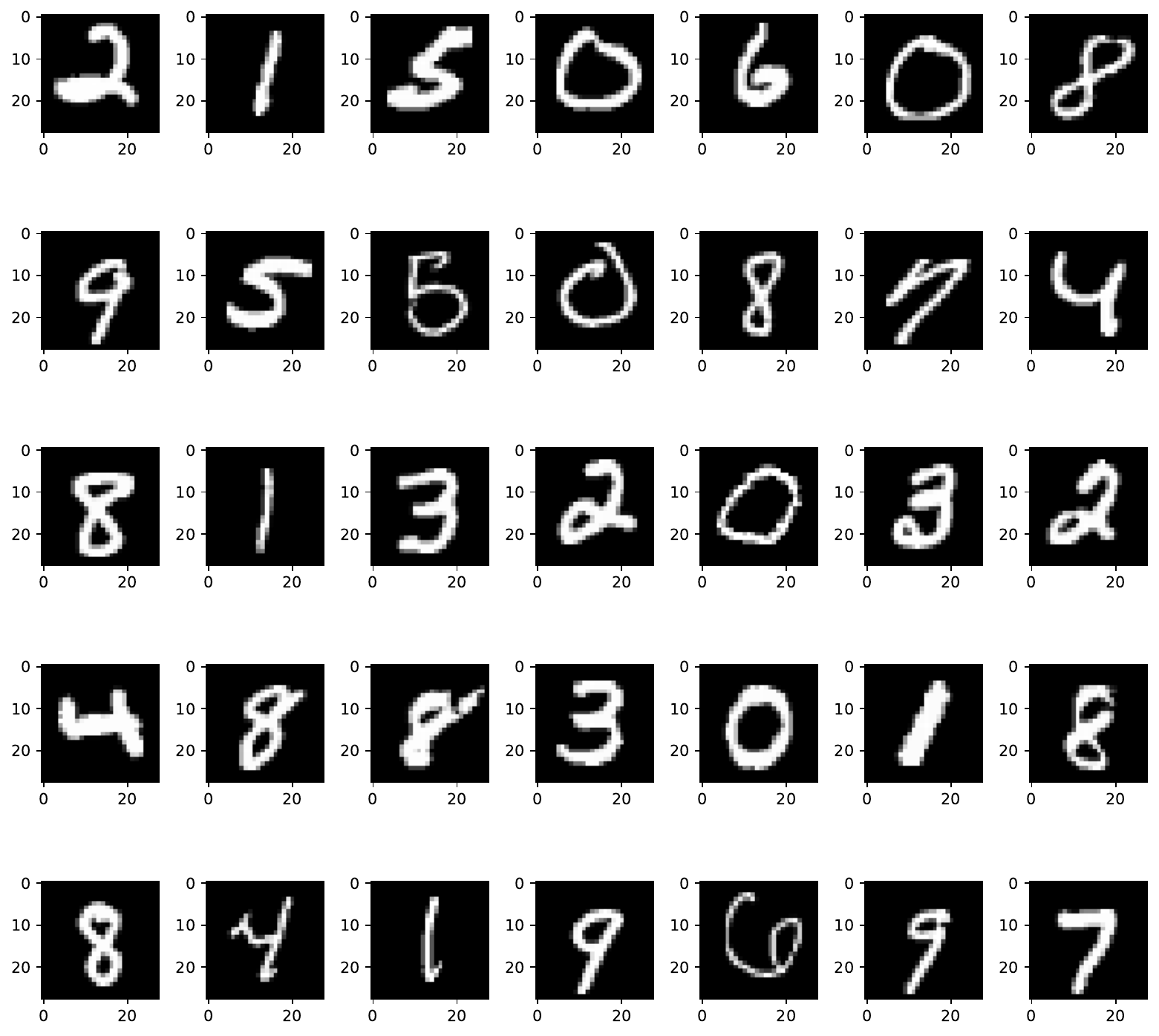} \ \ \
    \includegraphics[width=.32\linewidth]{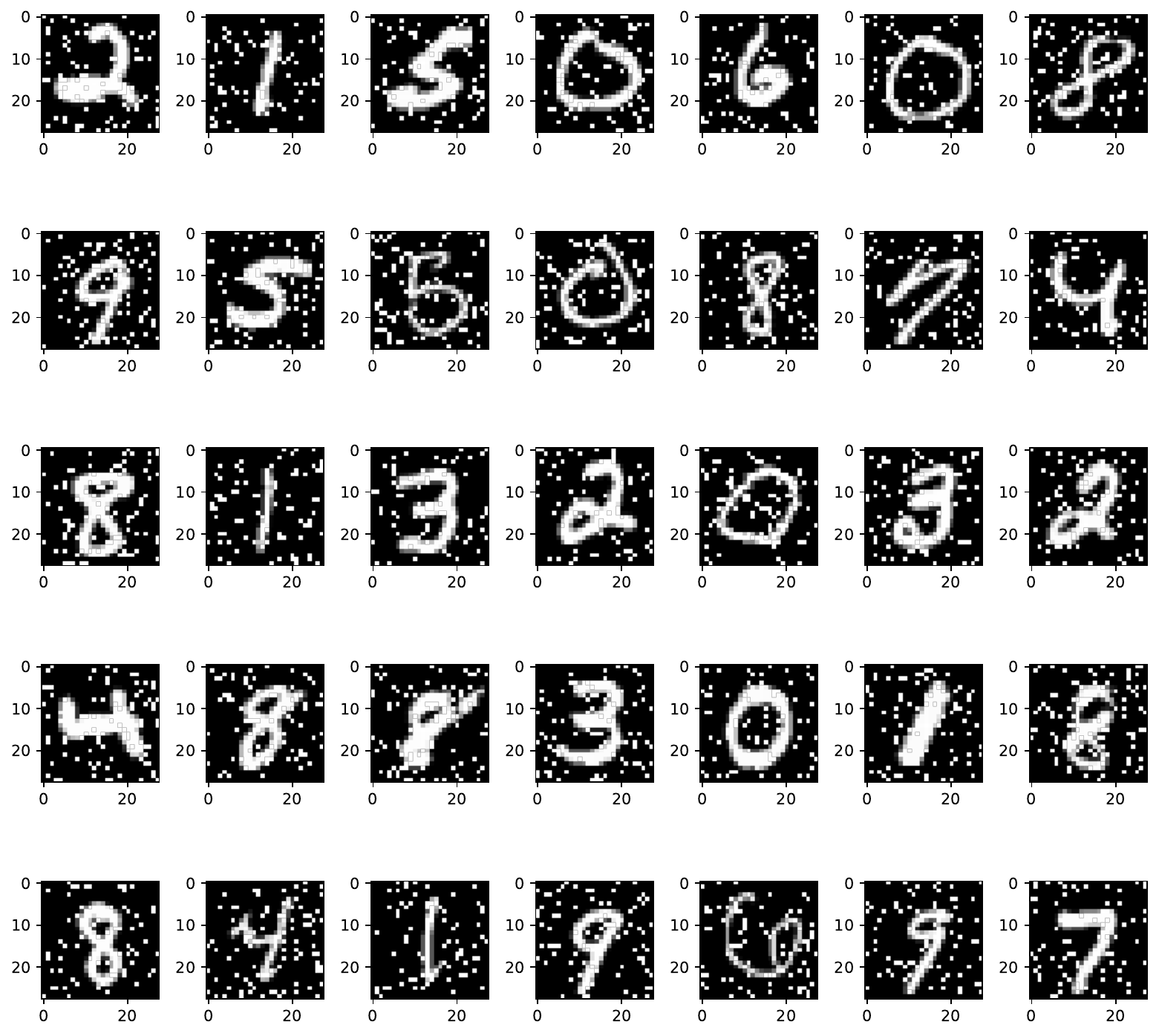} \ \ \
    \includegraphics[width=.32\linewidth]{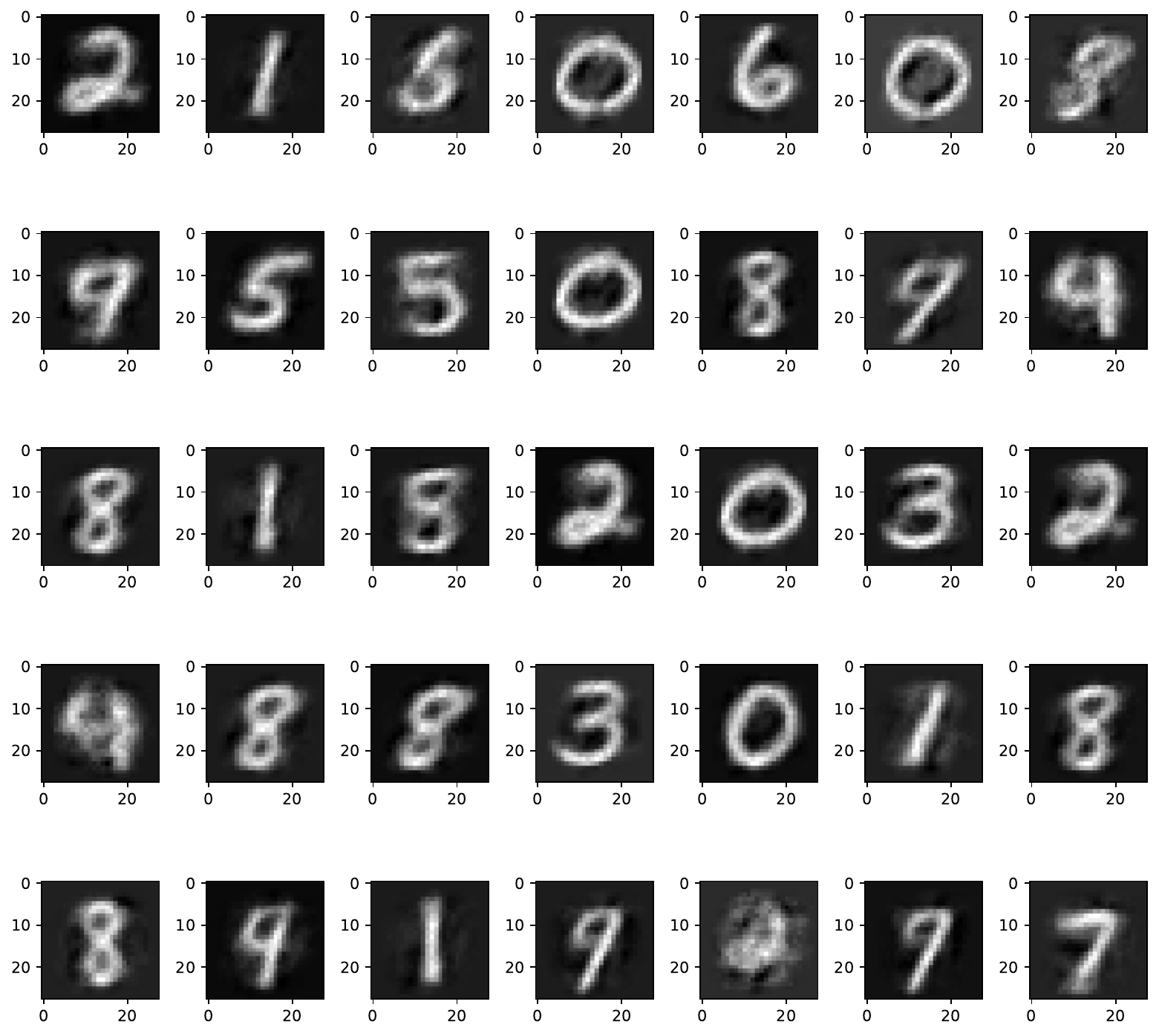}
    } 
    \vspace{-0.05in}

    \subfloat[MNIST reconstruction task with 30\% missing pixels.]{
    \includegraphics[width=.32\linewidth]{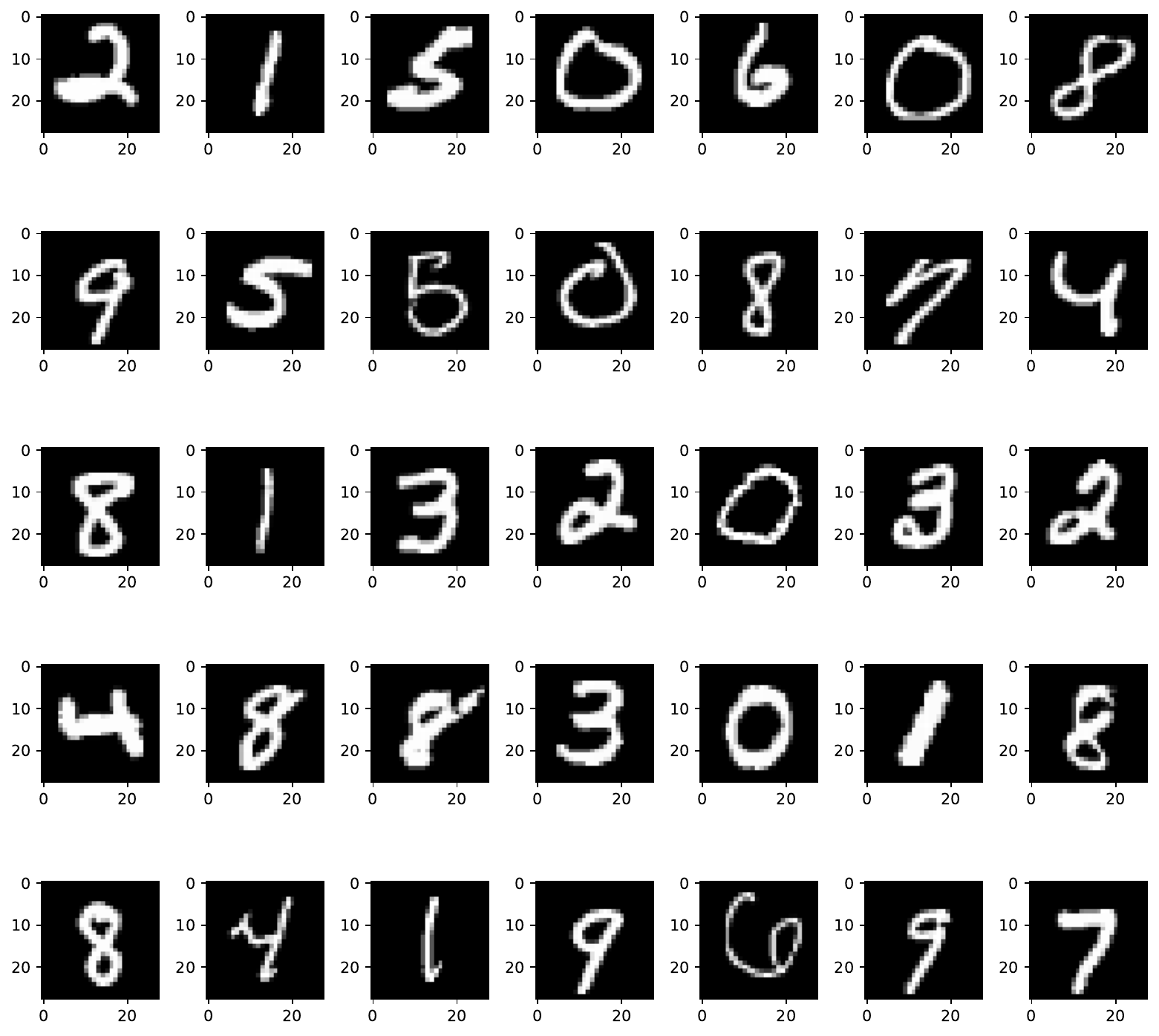} \ \ \
    \includegraphics[width=.32\linewidth]{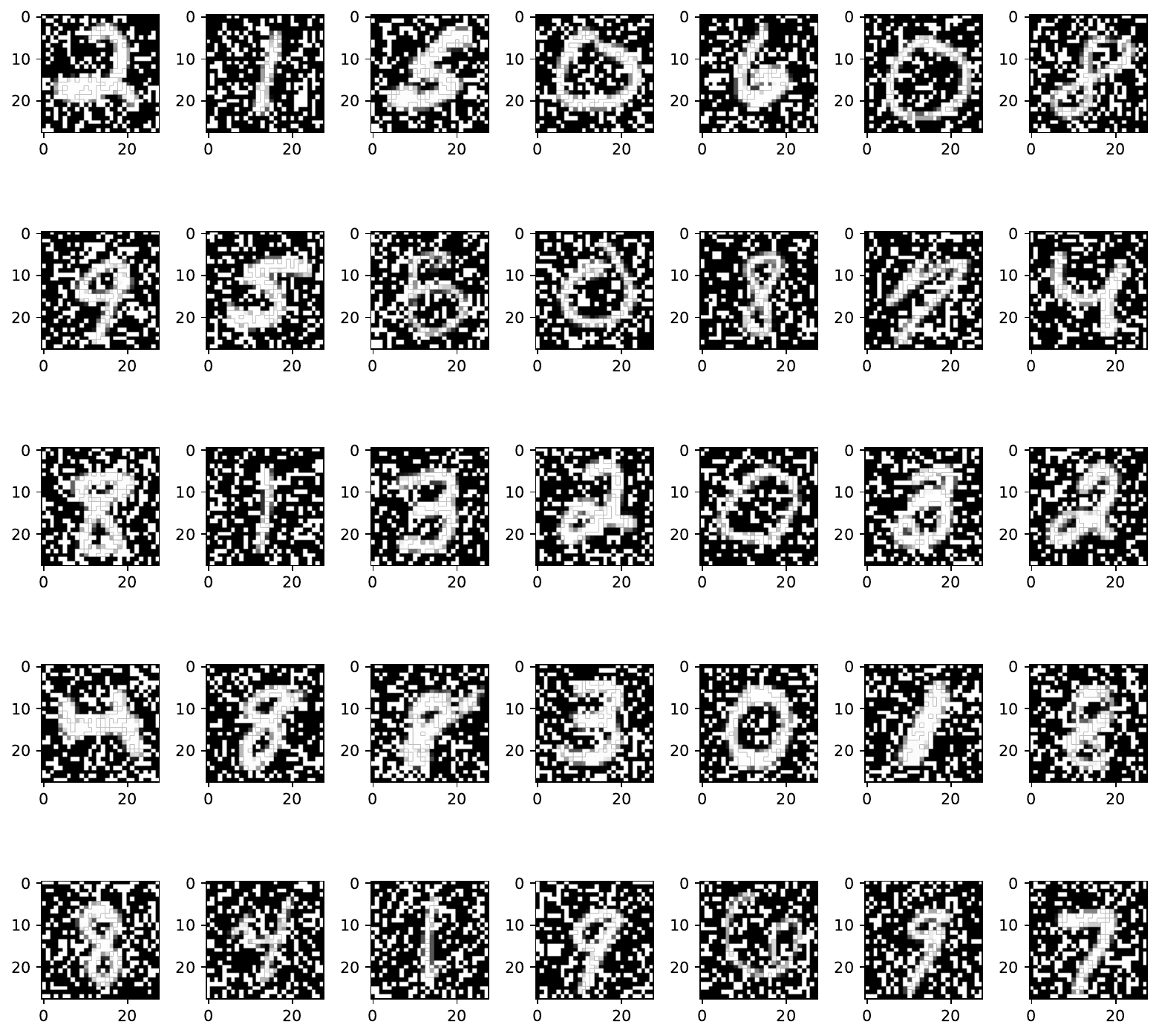} \ \ \
    \includegraphics[width=.32\linewidth]{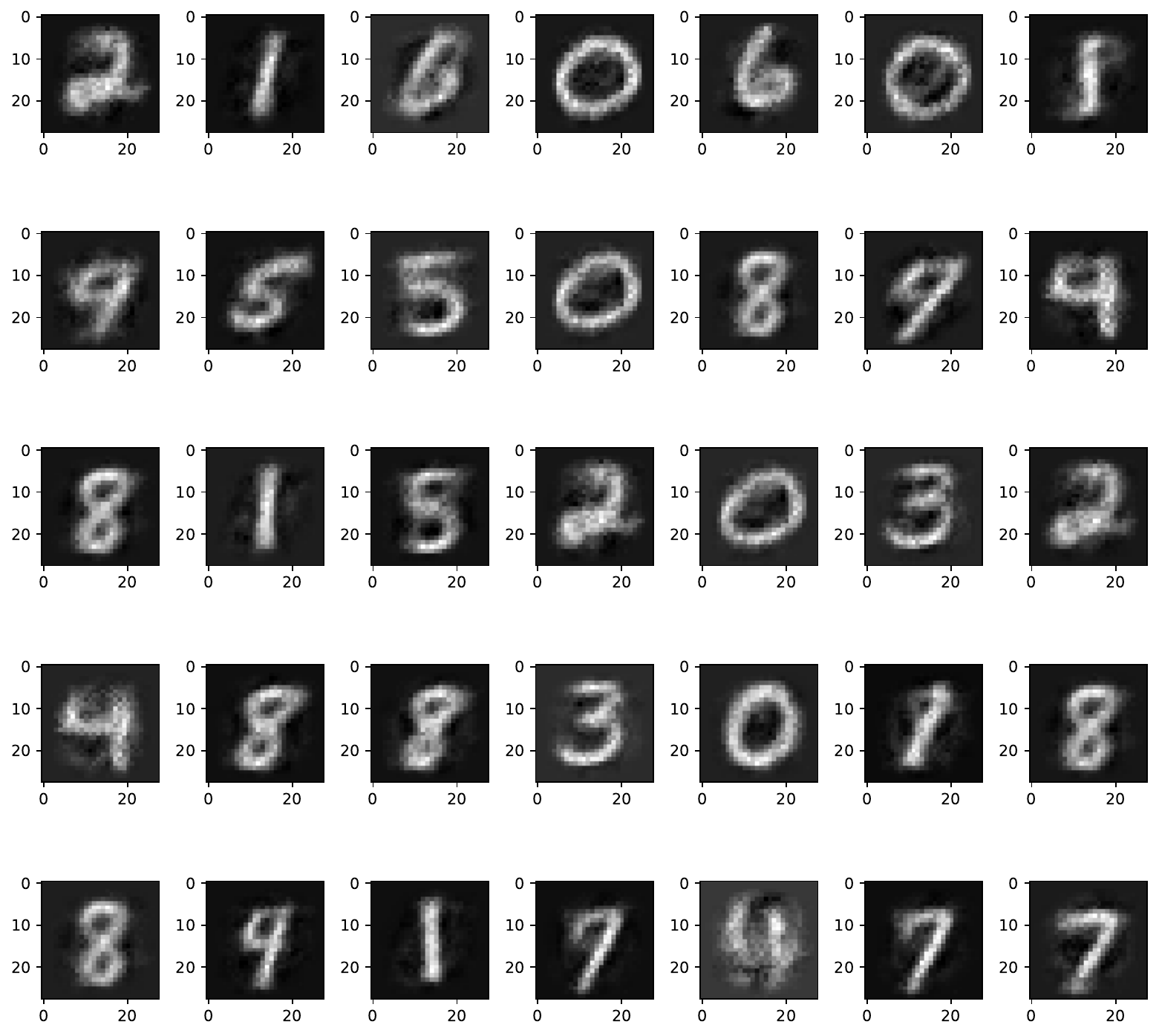}
    } 
    \vspace{-0.05in}

    \subfloat[MNIST reconstruction task with 60\% missing pixels.]{
    \includegraphics[width=.32\linewidth]{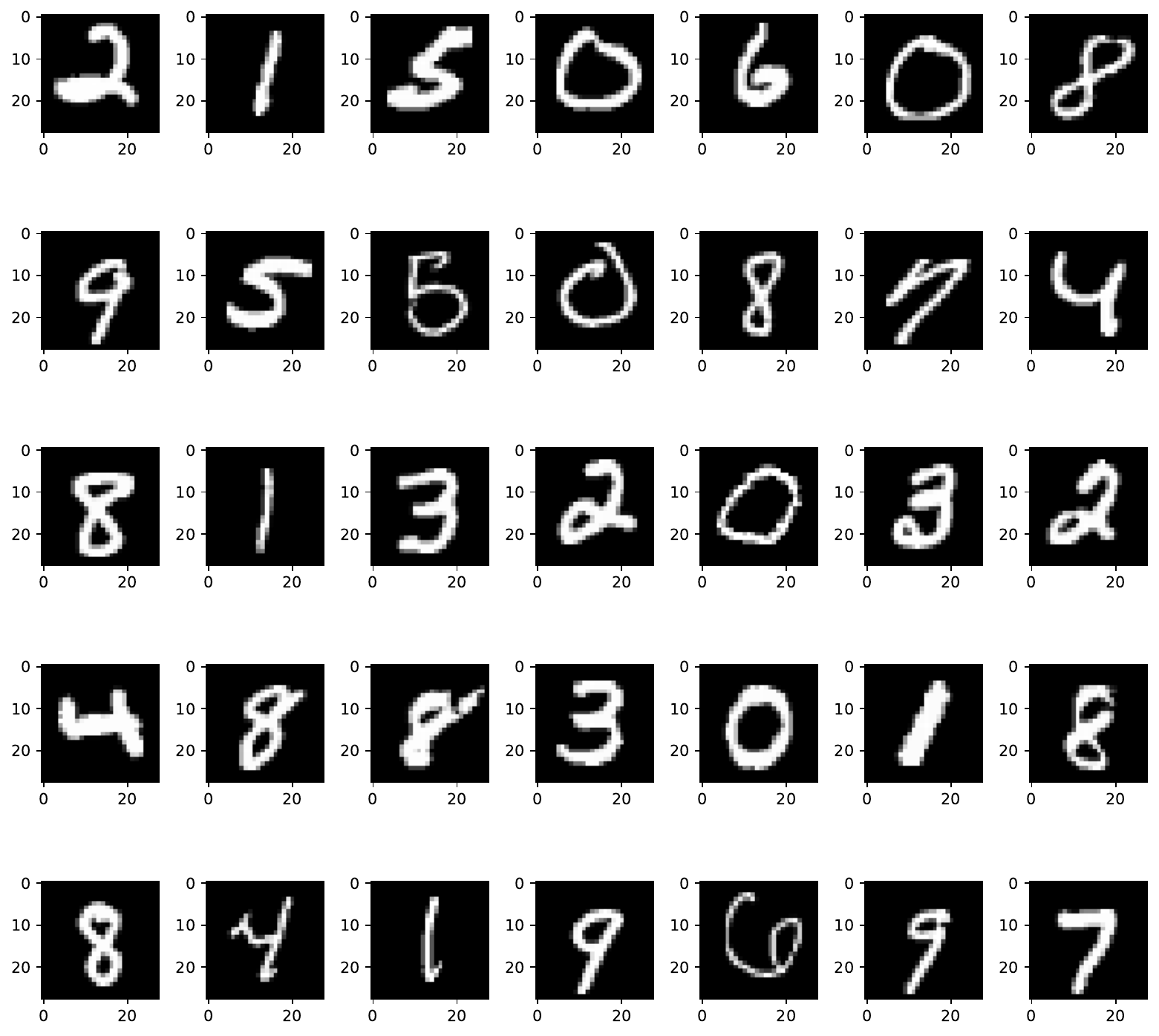} \ \ \
    \includegraphics[width=.32\linewidth]{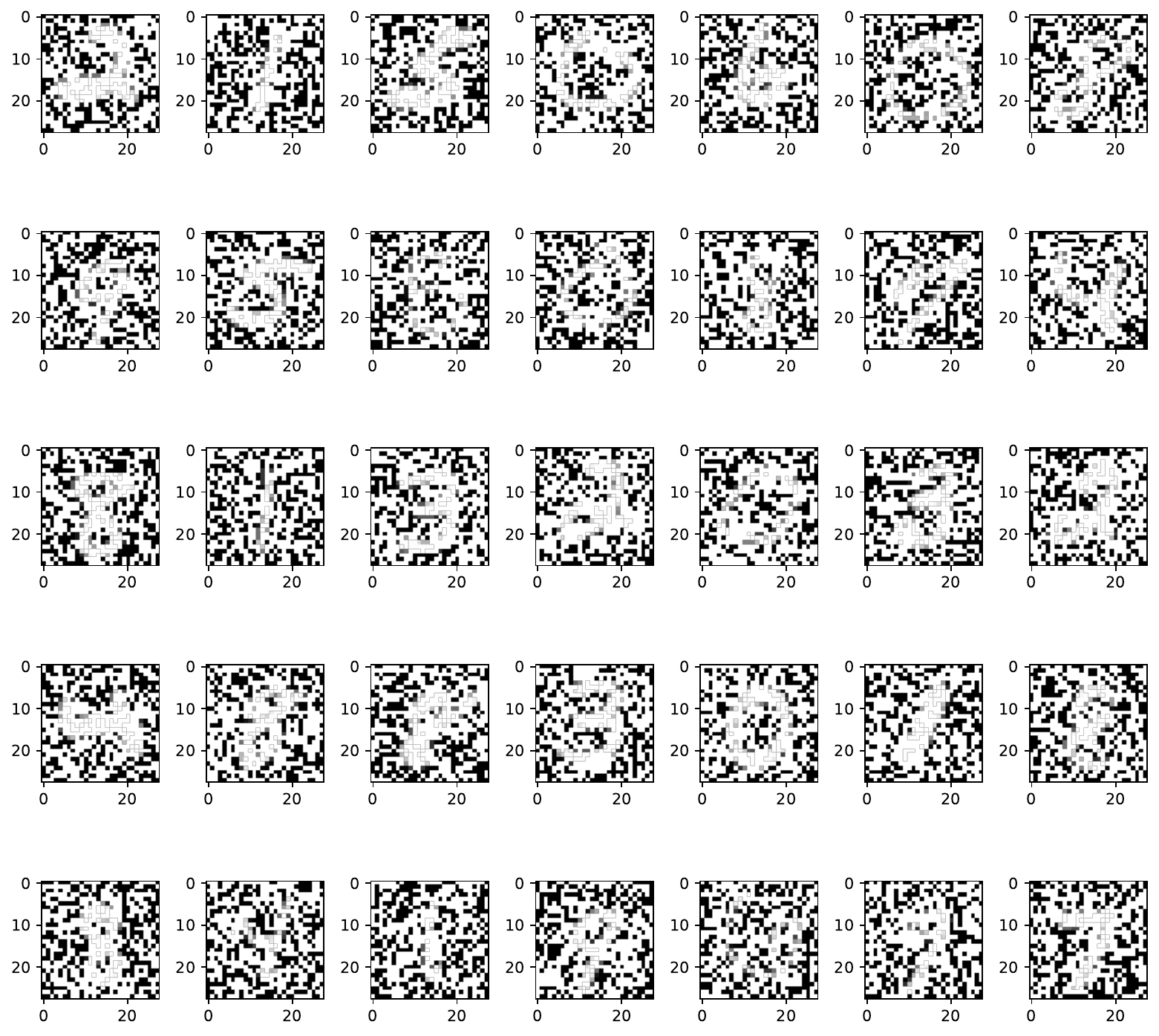} \ \ \
    \includegraphics[width=.32\linewidth]{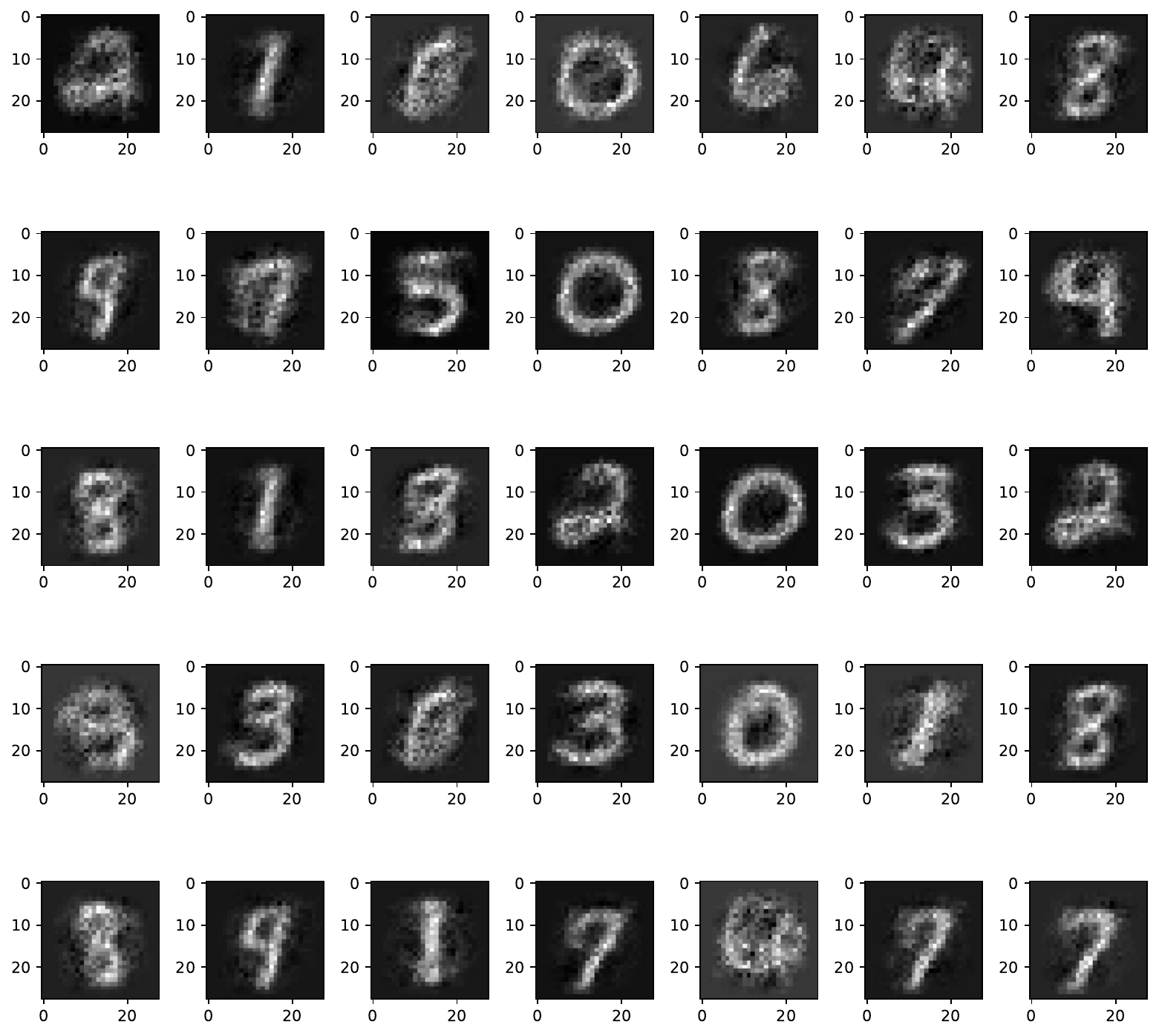}
    }  
    \caption{MNIST reconstruction task with missing pixels. From left to right: Ground truth, training images, reconstructions.}
    \vspace{-0.04in}
    \label{appx_fig:mnist_missing_illustration}
\end{figure*}

\begin{figure*}[ht!]
    \centering
    \subfloat[Brendan faces reconstruction task with 0\% missing pixels.]{
    \includegraphics[width=.32\linewidth]{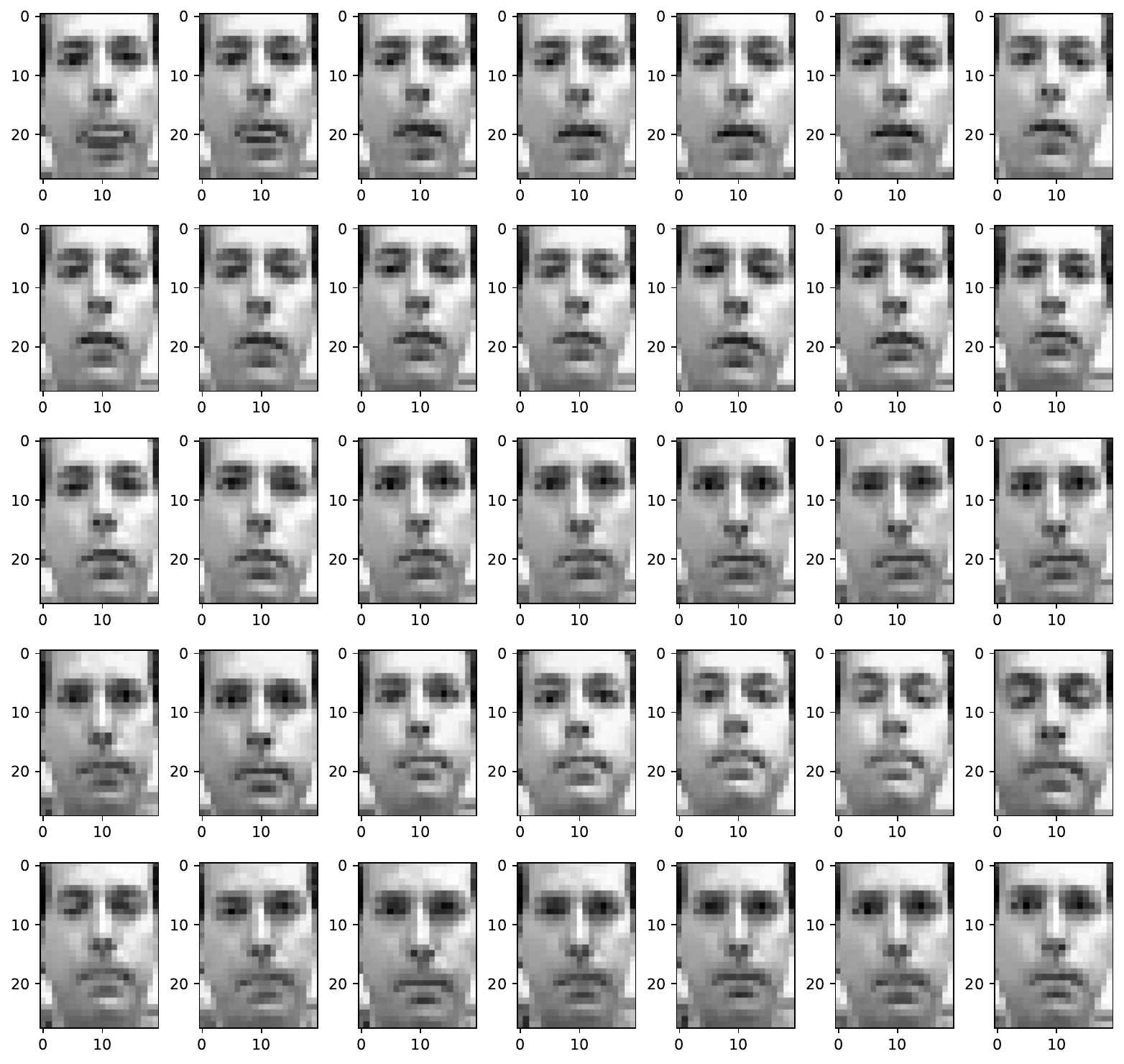} \ \ \
    \includegraphics[width=.32\linewidth]{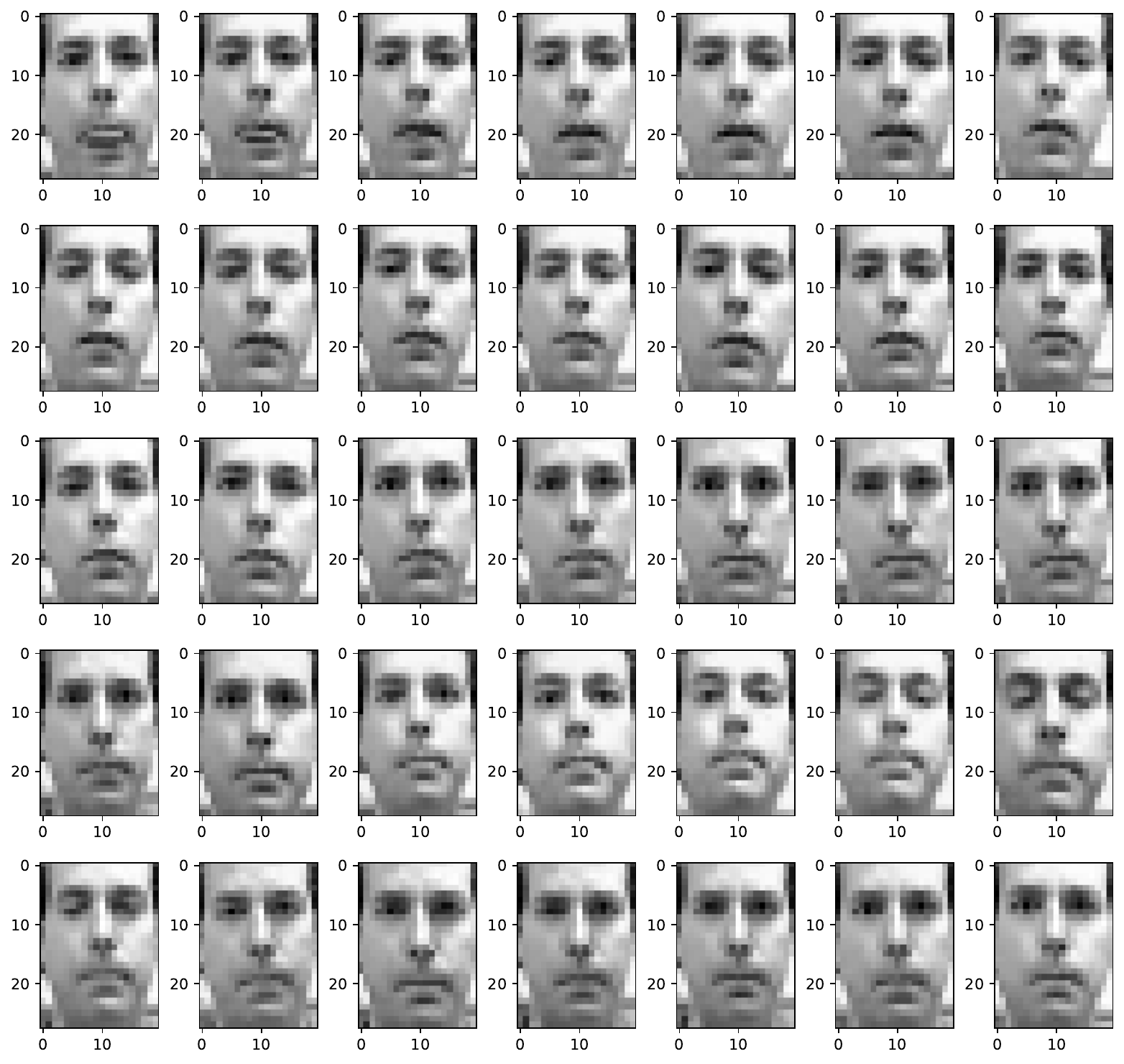} \ \ \
    \includegraphics[width=.32\linewidth]{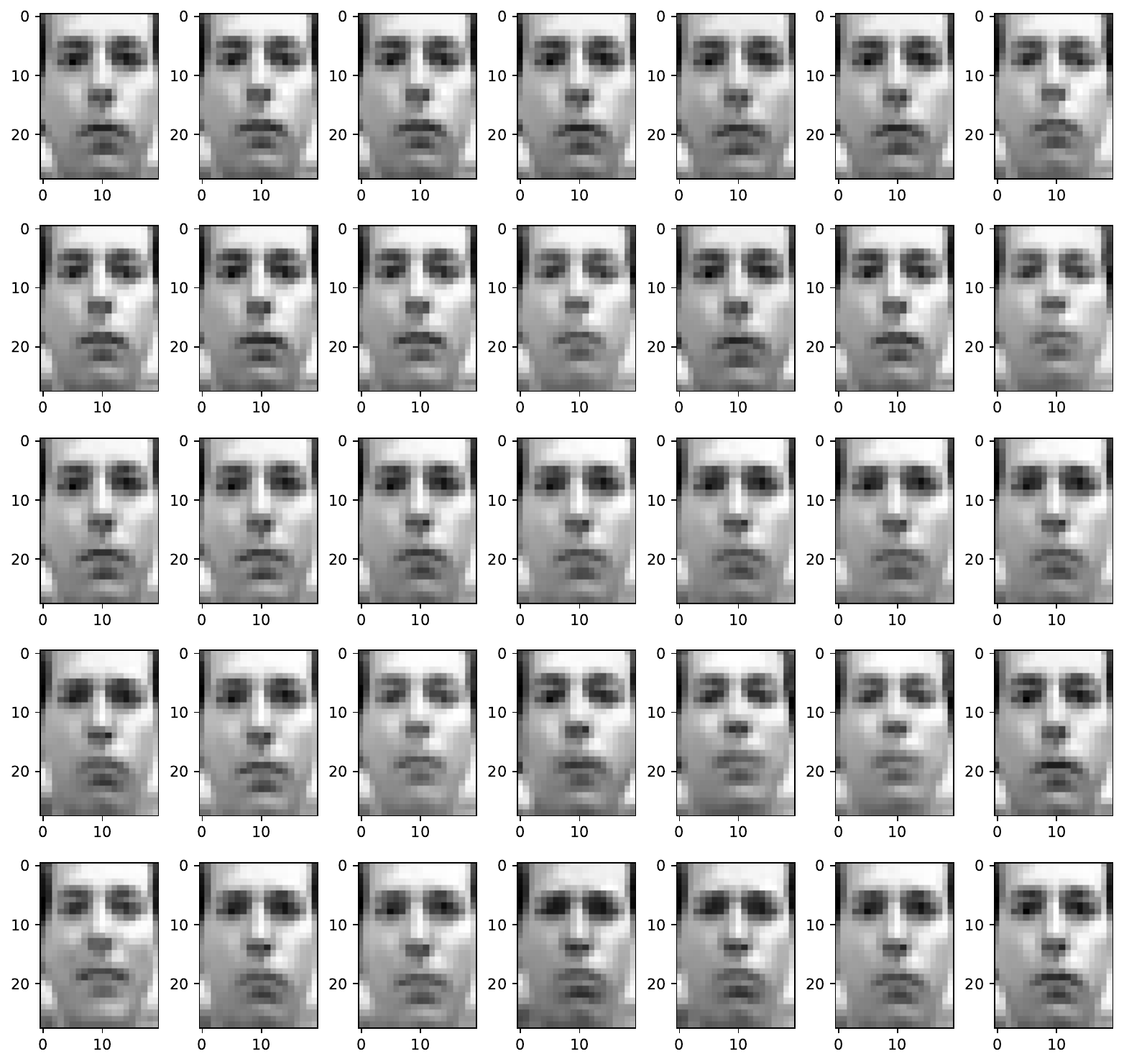}
    } \vspace{-0.05in}
    
    \subfloat[Brendan faces reconstruction task with 10\% missing pixels.]{
    \includegraphics[width=.32\linewidth]{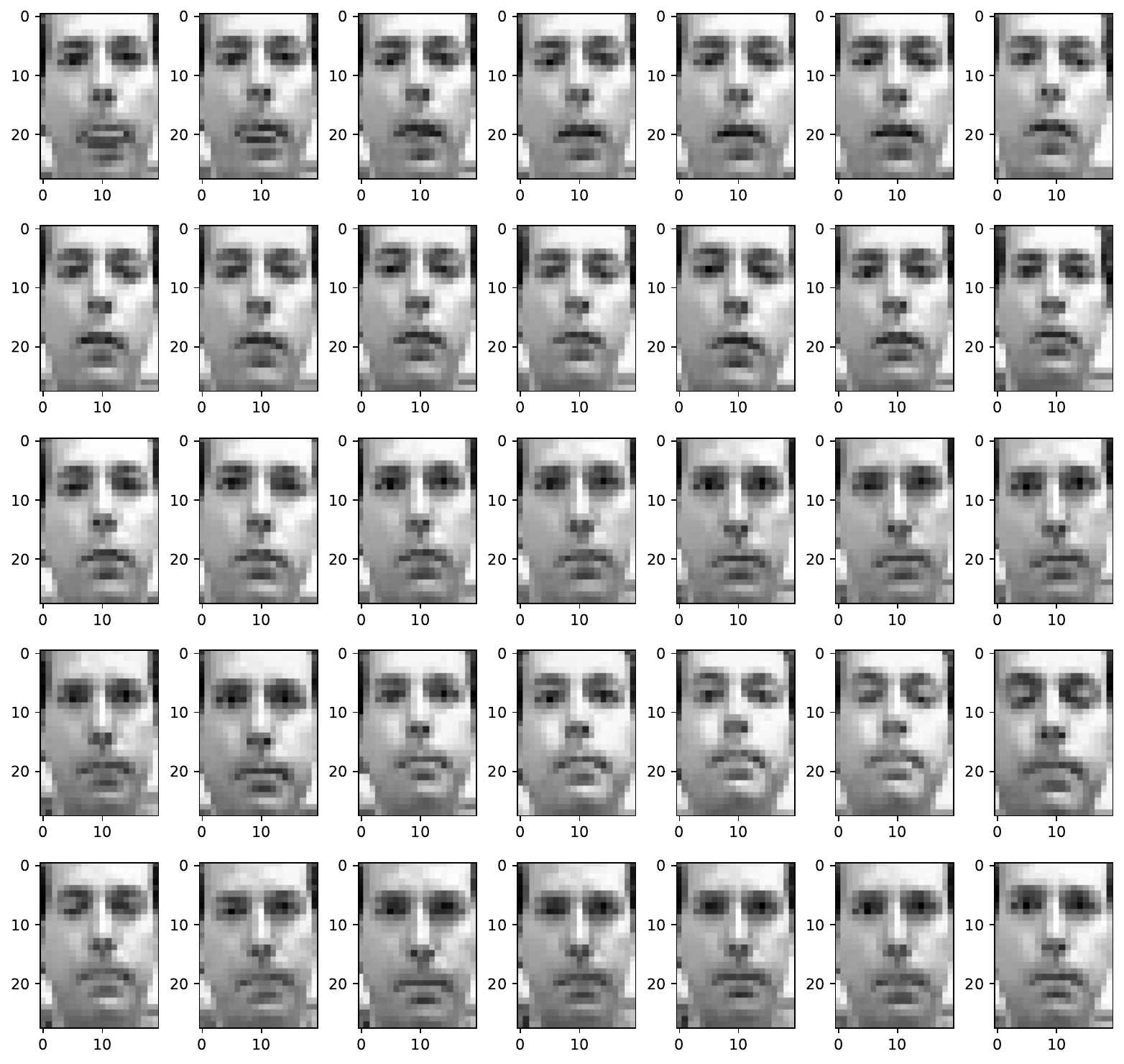} \ \ \
    \includegraphics[width=.32\linewidth]{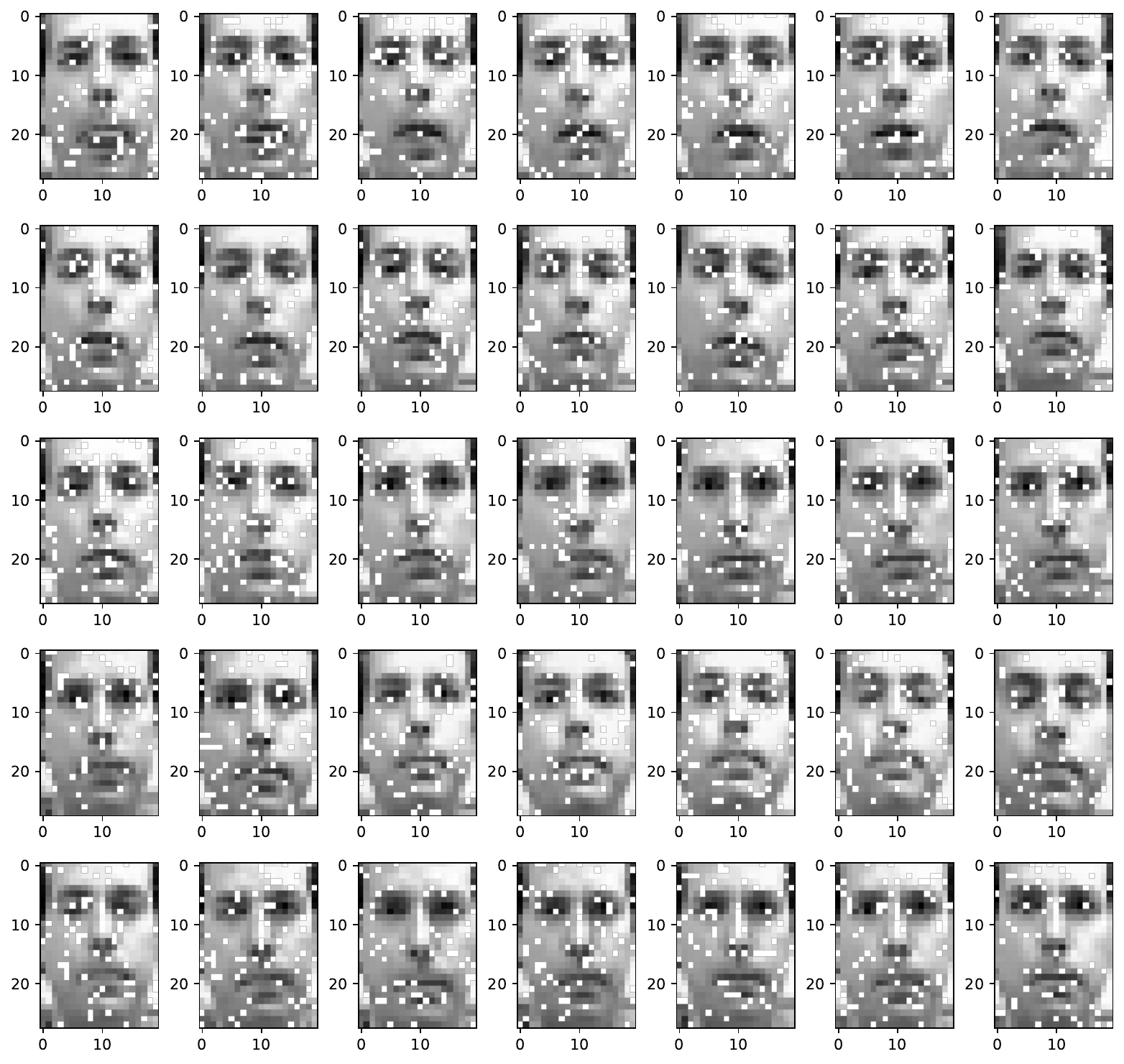} \ \ \
    \includegraphics[width=.32\linewidth]{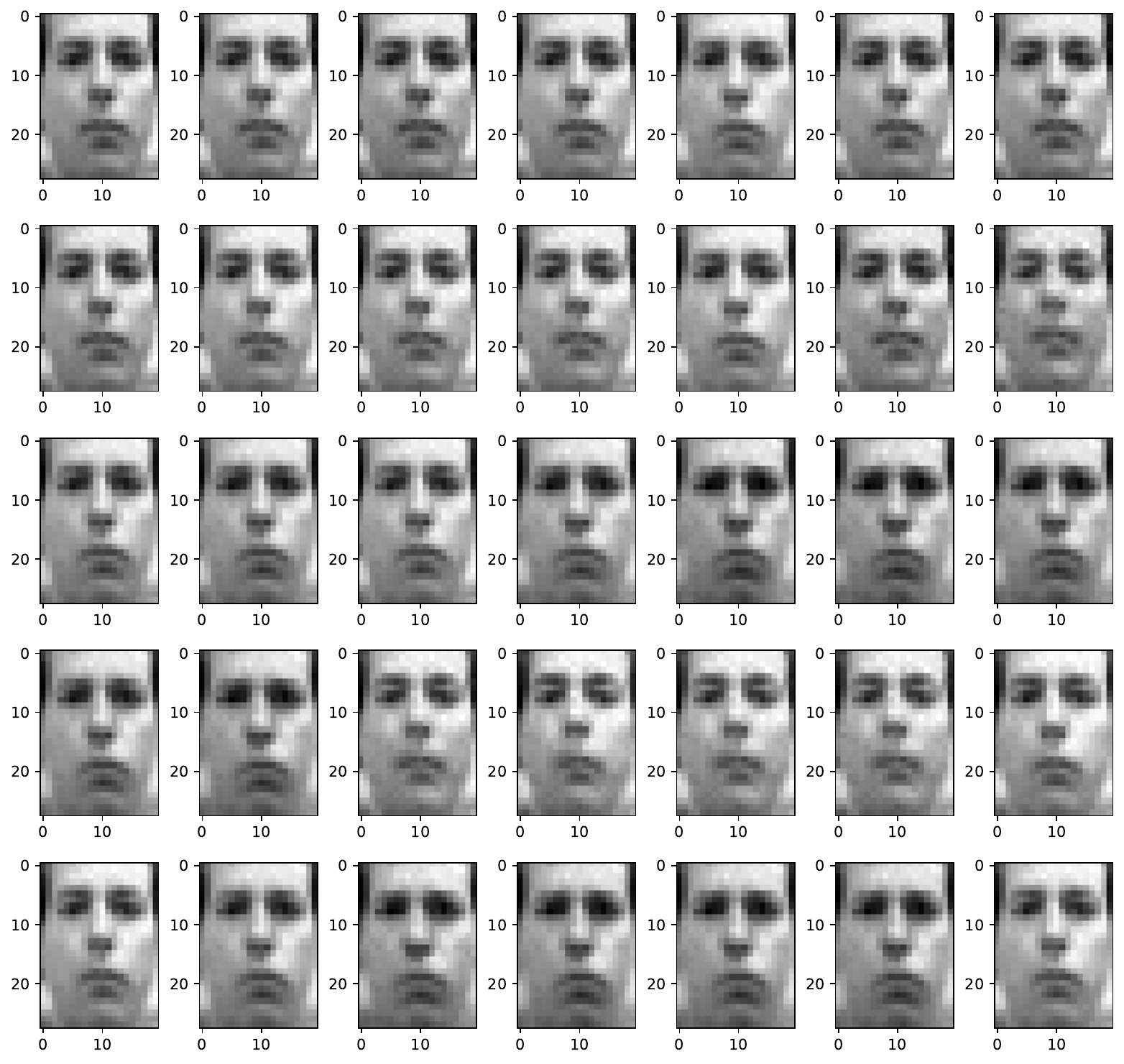}
    } \vspace{-0.05in}
    \subfloat[Brendan faces reconstruction task with 30\% missing pixels.]{
    \includegraphics[width=.32\linewidth]{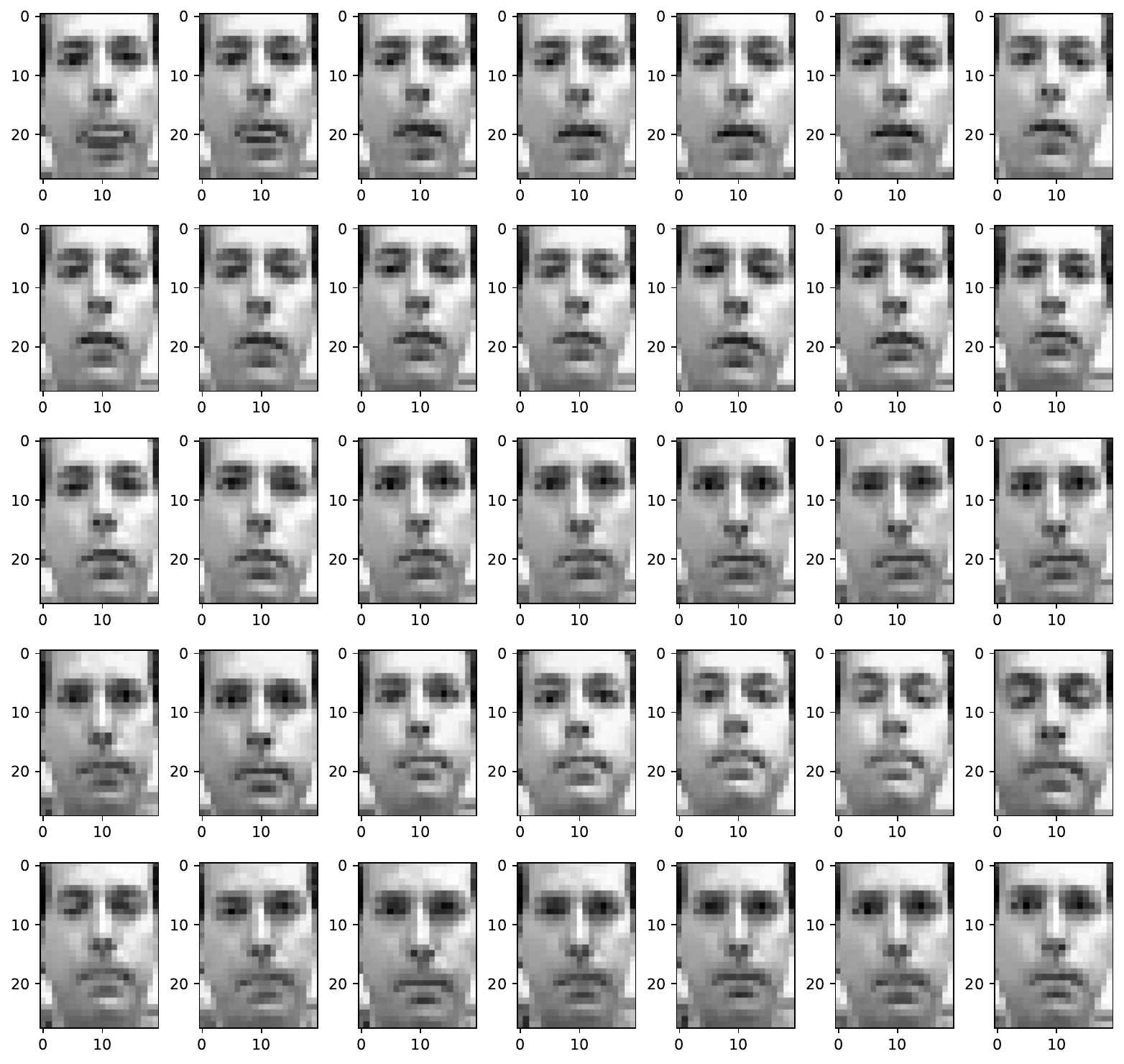} \ \ \
    \includegraphics[width=.32\linewidth]{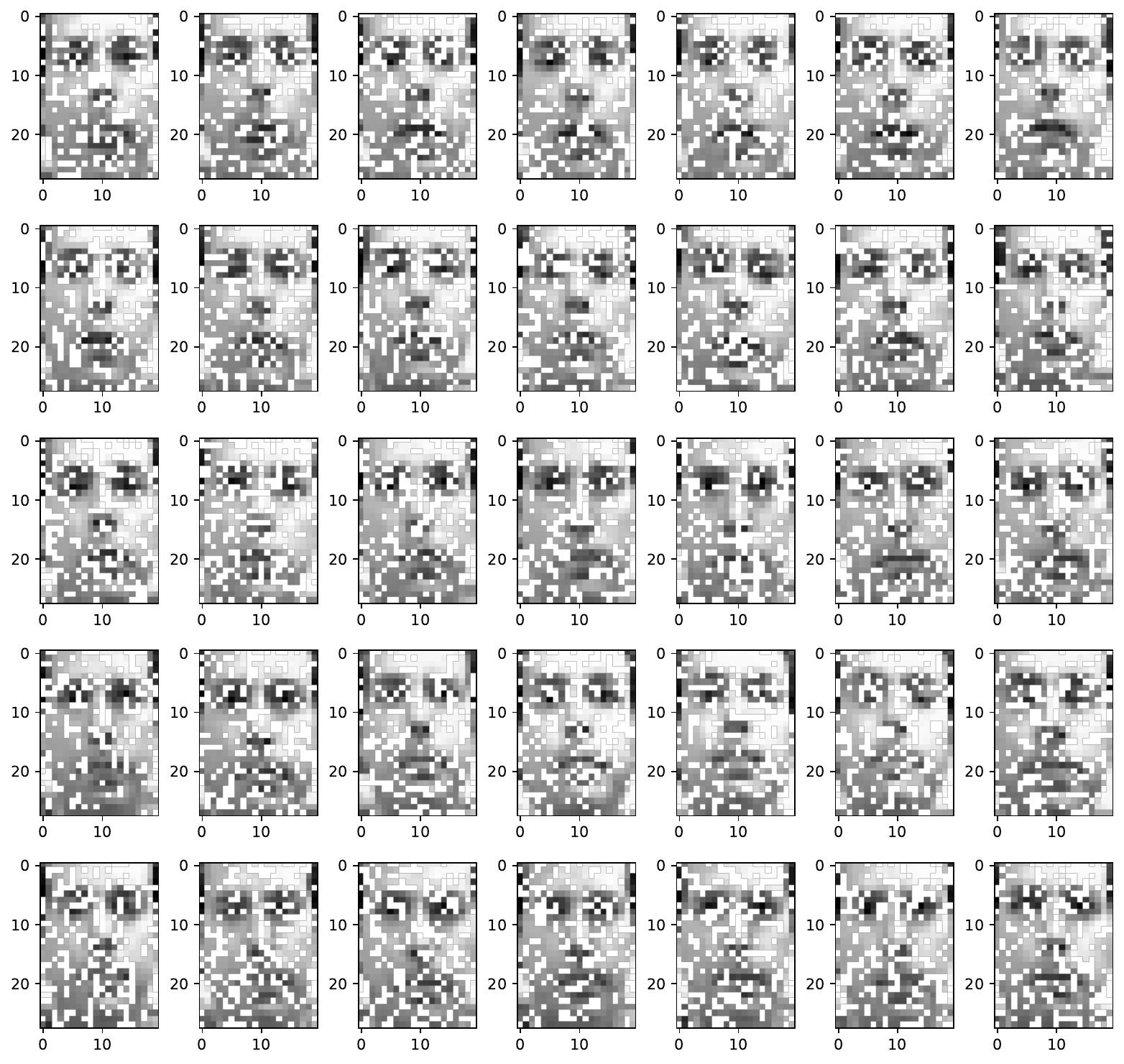} \ \ \
    \includegraphics[width=.32\linewidth]{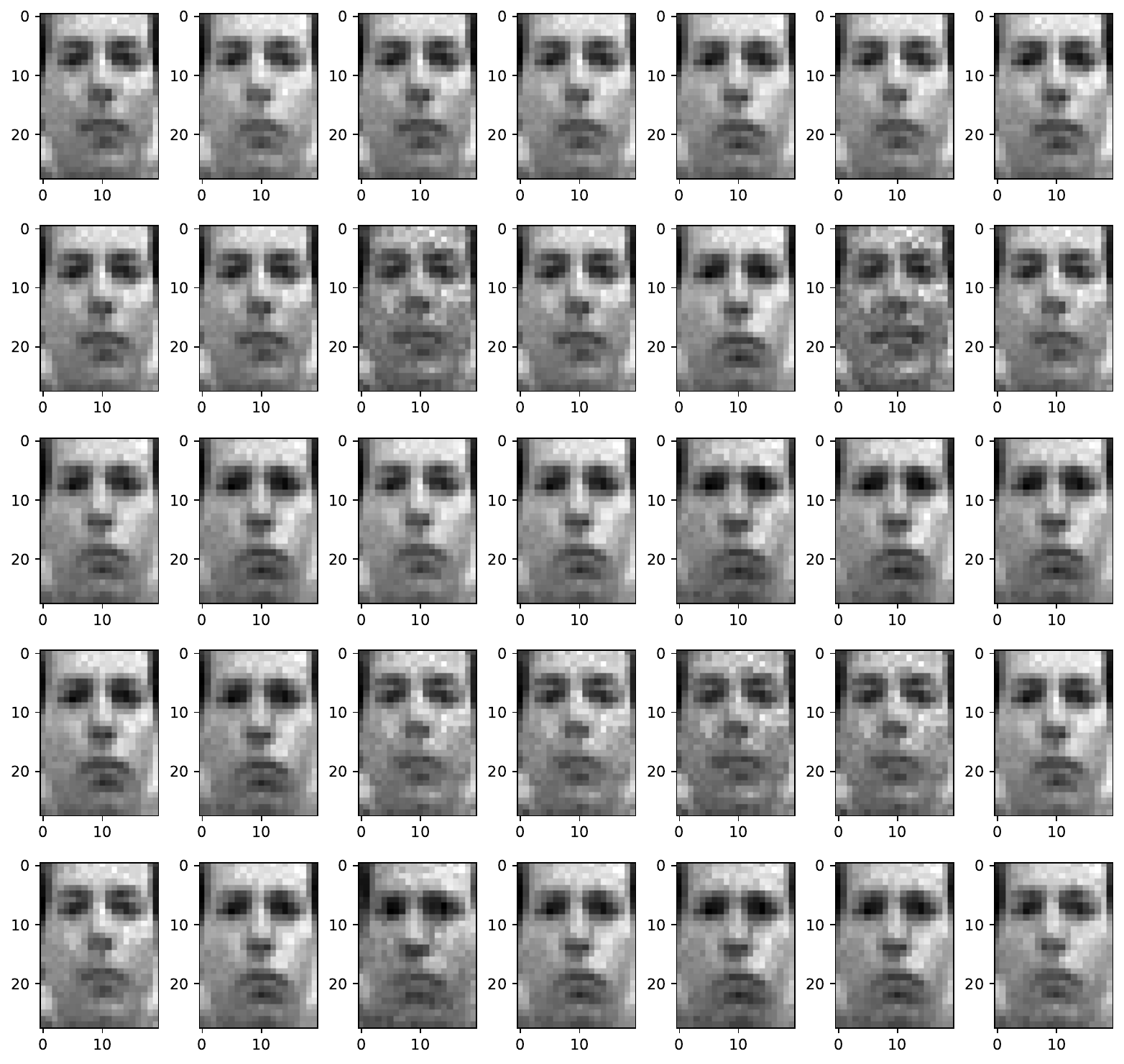}
    } \vspace{-0.1in}
    \subfloat[Brendan faces reconstruction task with 60\% missing pixels.]{
    \includegraphics[width=.32\linewidth]{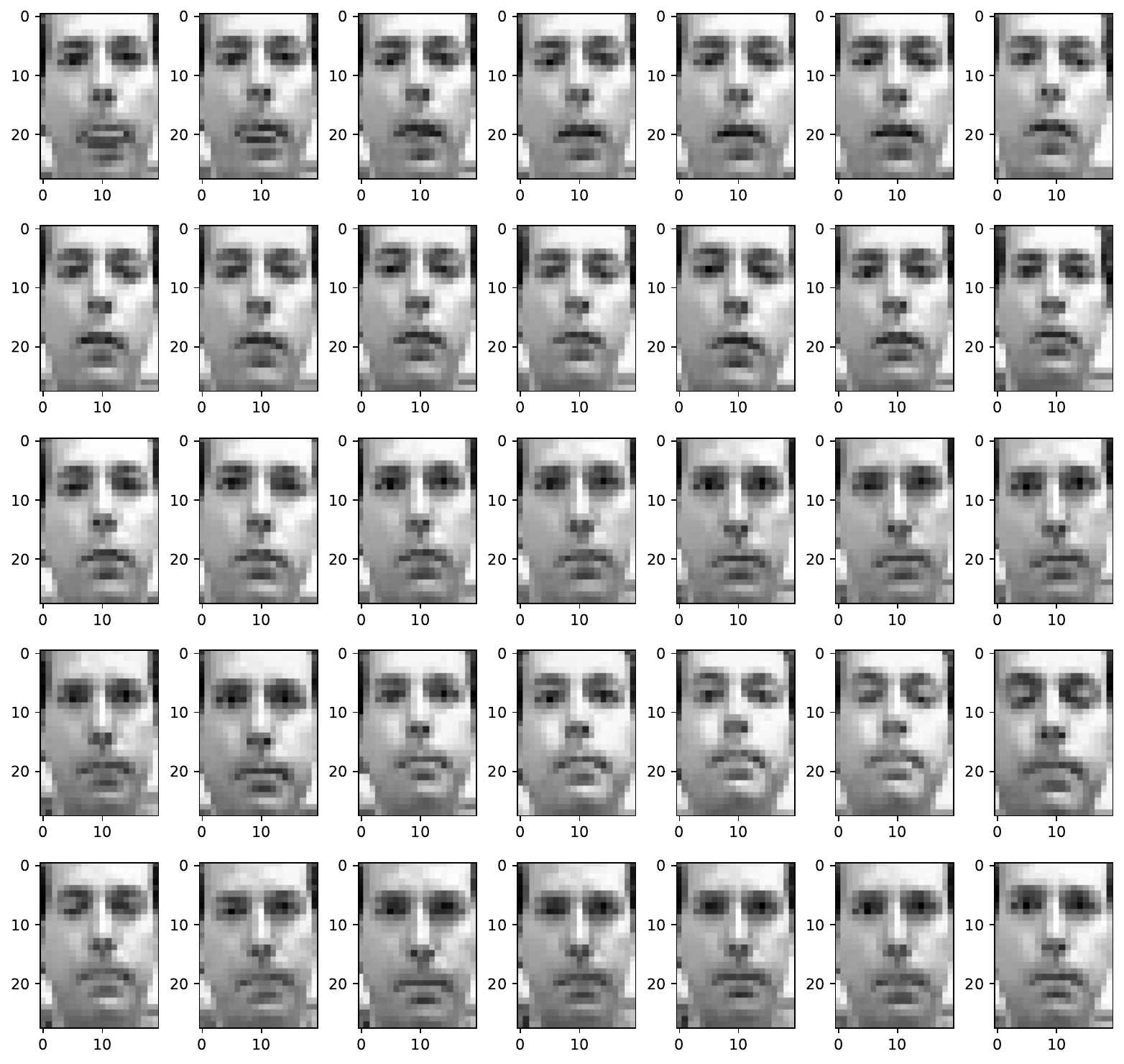} \ \ \
    \includegraphics[width=.32\linewidth]{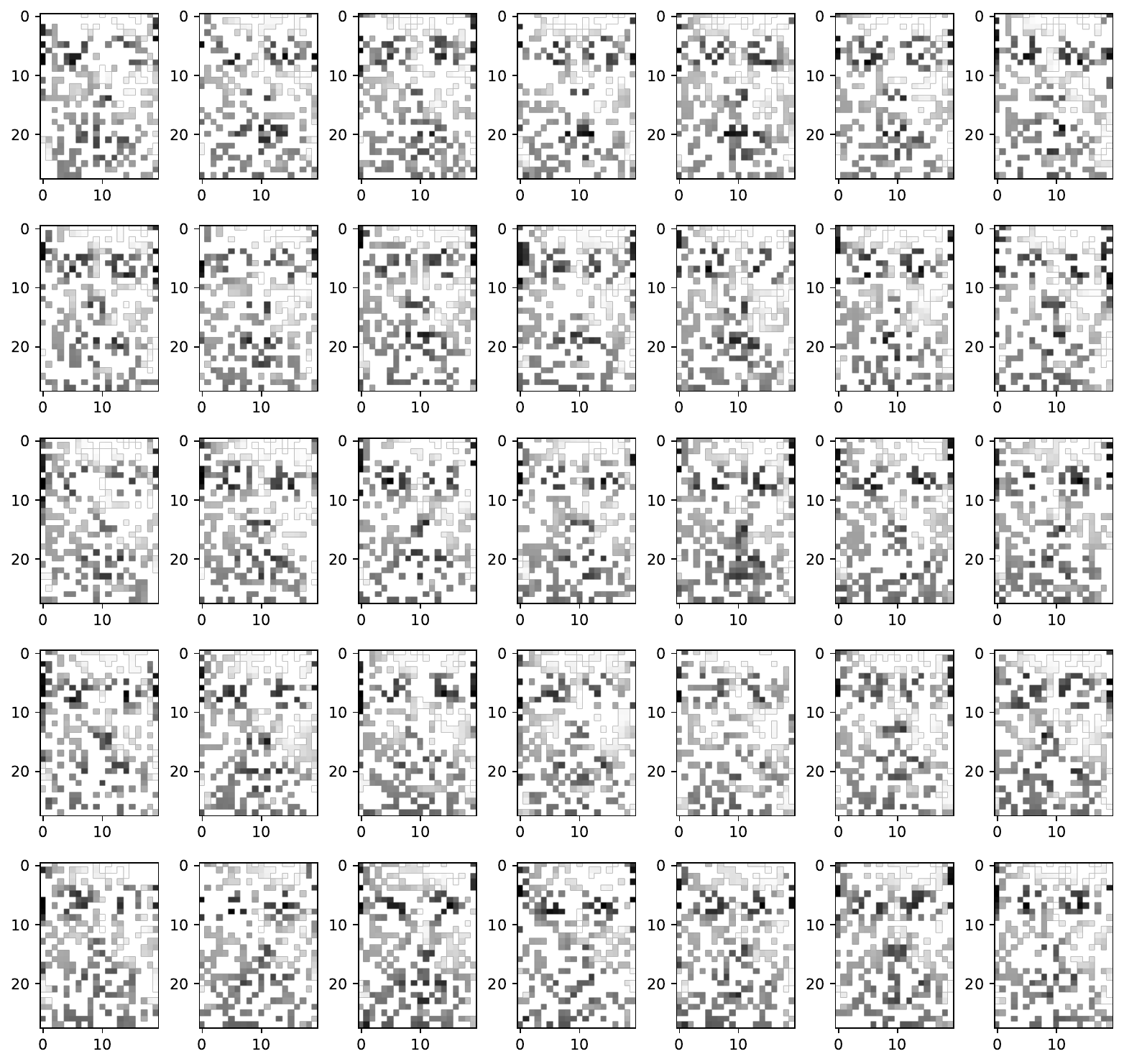} \ \ \
    \includegraphics[width=.32\linewidth]{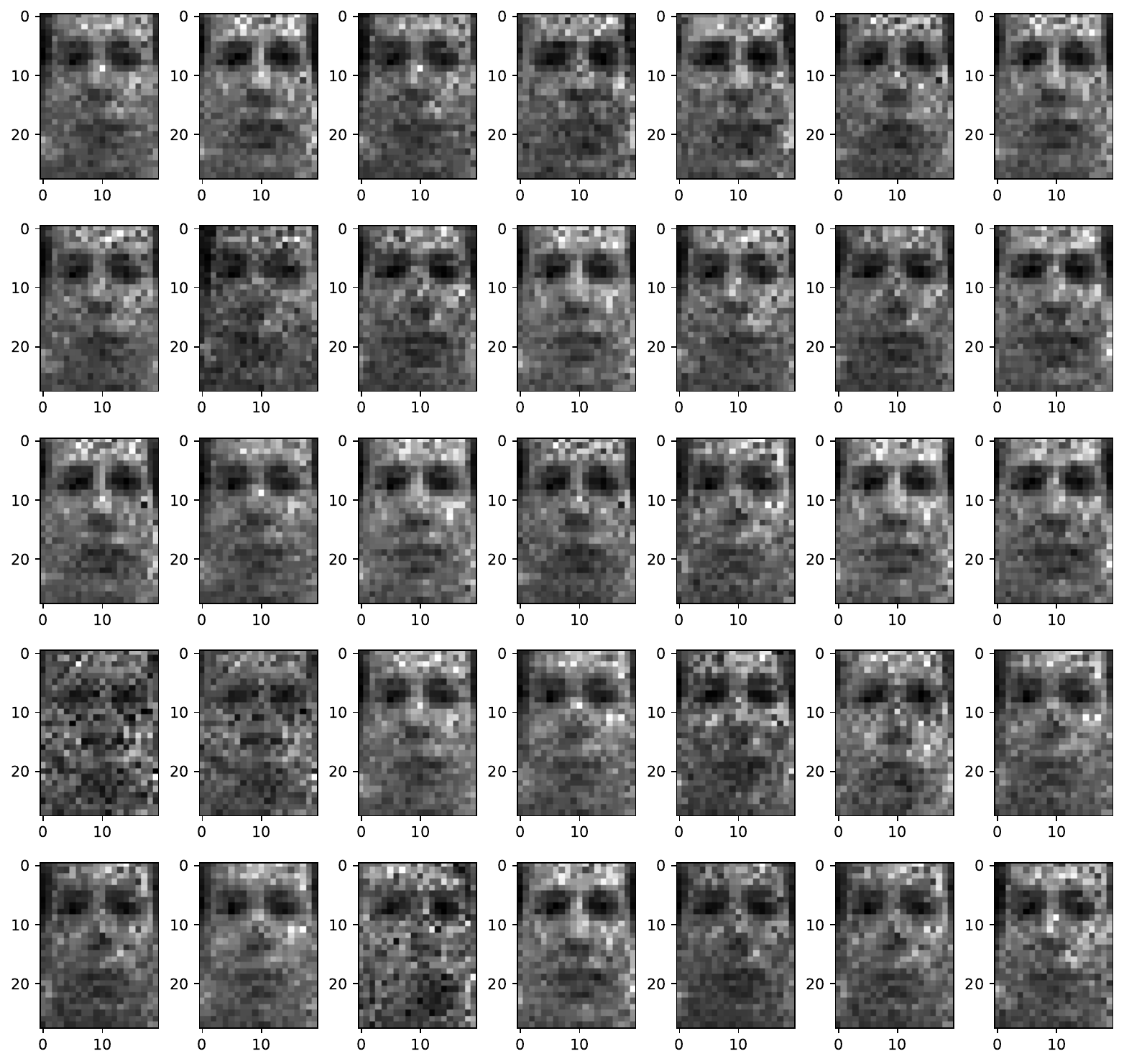}
    }  \vspace{-0.1in}
    \caption{Brendan faces reconstruction task with missing pixels. From left to right: Ground truth, training images, reconstructions}
    \label{appx_fig:bface_missing_illustration}
\end{figure*}

\end{document}



